\titleformat*{\section}{\Large\bfseries}
\definecolor{shadecolor}{gray}{0.9}
\DeclareRobustCommand{\parhead}[1]{\textbf{#1}~}
\newcounter{parcount}
\lstdefinestyle{mystyle}{
    commentstyle=\color{OliveGreen},
    numberstyle=\tiny\color{black!60},
    stringstyle=\color{BrickRed},
    basicstyle=\ttfamily\scriptsize,
    breakatwhitespace=false,
    breaklines=true,
    captionpos=b,
    keepspaces=true,
    numbers=none,
    numbersep=5pt,
    showspaces=false,
    showstringspaces=false,
    showtabs=false,
    tabsize=2
}
\DeclareRobustCommand{\mb}[1]{\ensuremath{\mathbf{\boldsymbol{#1}}}}
\DeclareMathOperator*{\argmax}{arg\,max}
\DeclareMathOperator*{\argmin}{arg\,min}
\crefname{lemma}{lemma}{lemmas}
\Crefname{lemma}{Lemma}{Lemmas}
\crefname{thm}{theorem}{theorems}
\Crefname{thm}{Theorem}{Theorems}
\crefname{prop}{proposition}{propositions}
\Crefname{prop}{Proposition}{Propositions}
\crefname{defn}{definition}{definitions}
\Crefname{defn}{Definition}{Definitions}
\newtheorem{thm}{Theorem} 
\newtheorem{defn}[thm]{Definition} 
\newtheorem{lemma}[thm]{Lemma}
\newcommand{\mbW}{\mb{W}}
\newcommand{\mbx}{\mb{x}}
\newcommand{\mbX}{\mb{X}}
\newcommand{\mby}{\mb{y}}
\newcommand{\mbY}{\mb{Y}}
\newcommand{\mbz}{\mb{z}}
\newcommand{\mbZ}{\mb{Z}}
\newcommand{\mbc}{\mb{c}}
\newcommand{\mbC}{\mb{C}}
\newcommand\dif{\mathop{}\!\mathrm{d}}
\newcommand{\supp}{\textrm{supp}}
\newcommand{\cN}{\mathcal{N}}
\newcommand{\cX}{\mathcal{X}}
\newcommand{\cY}{\mathcal{Y}}
\newcommand{\cS}{\mathcal{S}}
\newcommand{\g}{\, | \,}
\newcommand{\s}{\, ; \,}
\newcommand{\rmdo}{\mathrm{do}}
\newcommand{\E}[2]{\mathbb{E}_{#1}\left[#2\right]}
\newacronym{POMDP}{pomdp}{partially observable {M}arkov decision process}
\newacronym{MDP}{mdp}{{M}arkov decision process}
\newacronym{PNS}{pns}{probability of necessity and sufficiency}
\newacronym{PS}{ps}{probability of sufficiency}
\newacronym{PN}{pn}{probability of necessity}
\newacronym{POC}{poc}{probabilities of causation}
\newacronym{PPCA}{ppca}{probabilistic principal component analysis}
\newacronym{GMM}{gmm}{Gaussian mixture model}
\newacronym{VAE}{vae}{variational autoencoder}
\newacronym{RDR}{rdr}{related disequilibrium regression}
\newacronym{SCM}{scm}{structural causal model}
\newacronym{Causal-Rep}{causal-rep}{Causal Representation}
\newacronym{ELBO}{elbo}{evidence lower bound}
\newacronym{OOD}{ood}{out-of-distribution}
\newacronym{IOSS}{ioss}{independence-of-support score}
\newacronym[
  longplural={factors of variation}
]{FOV}{fov}{factor of variation}
\title{Desiderata for Representation Learning: \\ A Causal
Perspective}
\author{
  Yixin Wang\\
        UC Berkeley, EECS\\
  ywang@eecs.berkeley.edu\\
  \and
  Michael I.~Jordan\\
          UC Berkeley, EECS and Statistics\\
  jordan@cs.berkeley.edu\\
  }
\date{\today}
\begin{document}
\maketitle

\begin{bibunit}[alp]

\begin{abstract}
Representation learning constructs low-dimensional representations to
summarize essential features of high-dimensional data. This learning
problem is often approached by describing various desiderata
associated with learned representations; e.g., that they be
non-spurious, efficient, or disentangled. It can be challenging,
however, to turn these intuitive desiderata into formal criteria that
can be measured and enhanced based on observed data.  In this paper,
we take a causal perspective on representation learning, formalizing
non-spuriousness and efficiency (in supervised representation
learning) and disentanglement (in unsupervised representation
learning) using counterfactual quantities and observable consequences
of causal assertions.  This yields computable metrics that can be used
to assess the degree to which representations satisfy the desiderata
of interest and learn non-spurious and disentangled representations
from single observational datasets.

\end{abstract}

Keywords: Causal inference, representation learning, 
non-spuriousness, disentanglement, probabilities of causation


\section{Introduction}

\glsreset{PNS}
\label{sec:intro}

Representation learning constructs low-dimensional representations
that summarize essential features of high-dimensional data. For
example, one may be interested in learning a low-dimensional
representation of MNIST images, where each image is a
$784$-dimensional vector of pixel values. Alternatively, one may be
interested in a product review corpus; each review is a
$5,000$-dimensional word count vector. Given an $m$-dimensional data
point, $\mbX=(X_1, \ldots, X_{m})\in \mathbb{R}^m$, the goal is to
find a $d$-dimensional representation $\mbZ = (Z_1, \ldots, Z_d)
\triangleq (f_1(\mbX), \ldots, f_d(\mbX))$ that captures $d$ important
features of the data, where $f_j:\mathbb{R}^m\rightarrow
\mathbb{R}, j=1, \ldots, d$ are $d$ deterministic functions and~$d \ll
m$.

A heuristic approach to the problem has been to fit a neural network
that maps from the high-dimensional data to a set of labels, and then
take the top layer of the neural network as the representation of the
image. When labels are not available, a related heuristic is to fit a
latent variable model (e.g., a variational
autoencoder~\citep{kingma2014auto}) and output a low-dimensional
representation based on the inferred latent variables. In both cases,
the hope is that these low-dimensional representations will be useful
in the performance of downstream tasks and also provide an
interpretation of the statistical relationships underlying the data.

These heuristic approaches do not, however, always succeed in
producing representations with desirable properties. For example, as
we will discuss in detail, common failure modes involve capturing
spurious features that do not transfer well or finding dimensions that
are entangled and are hard to interpret. For example, in fitting a
neural network to images of animals, with the goal of producing a
labeling of the species found in the images, a network may capture
spurious background features (e.g., grass) that are highly correlated
with the animal features (e.g., the face of a dog). Such spurious
features can often predict the label well. But they are generally not
useful for prediction in a different dataset or for performing other
downstream tasks. The learned representation may also be entangled---a
single dimension of the representation may encode information about
multiple features (e.g., animal fur and background lighting). Such
entangled representations are hard to interpret. The representation
itself does not provide guidance on how to separate learned dimensions
into more informative dimensions that describe animal fur and
background lighting.

While non-spuriousness or disentanglement are natural desiderata of
representations, they are often intuitively defined, challenging to
evaluate, and hard to optimize over algorithmically. Lacking formal
metrics for these desiderata, and not having access to manually
labeled features (e.g., grass, animal fur, or background lighting)
that provide empirical guidance, prevents us from developing
representation learning algorithms that satisfy natural desiderata.

In this work, we take a causal inference perspective on representation
learning.  This perspective allows us to formalize representation
learning desiderata using causal notions, specifically causal
relationships among the label $Y$ and the $d$ features captured by
$Z_1, \ldots, Z_d$. This yields calculable metrics obtained from the
observable implications of the underlying causal relationships. These
metrics then enable representation learning algorithms that target
these desiderata. (\Cref{fig:representation-workflow} illustrates this
workflow.)

We focus on two sets of desiderata in representation learning: (1)
efficiency and non-spuriousness in supervised representation
learning---i.e., the representations shall efficiently capture
non-spurious features of the data---and (2) disentanglement in
unsupervised representation learning---i.e., the representations shall
encode different features along separate dimensions.

In the supervised setting, the key idea is to view representations as
capturing features that are potential \emph{causes} of the label. From
this perspective, a non-spurious representation should capture
features that are \emph{sufficient causes} of the label. This property
ensures that the same representation is still likely to be informative
of the label when employed on a new dataset. In the running example of
images, a representation that captures the dog-face feature is
non-spurious because the presence of a dog face is sufficient to
determine the dog label (i.e., whether a dog is present in the image).
In contrast, a representation based on the grass feature is spurious
because it cannot causally determine the dog label, even though it is
highly correlated. We formalize this connection between non-spurious
representations and sufficient causes by defining non-spuriousness
using the notion of \emph{probability of sufficiency} (PS), due
to~\citet{Pearl2011}.

The same causal perspective also allows us to formalize the efficiency
of representations, which are those that do not pick up redundant
features. Again viewing the representation as a potential cause of the
label, its efficiency corresponds to the necessity of the cause. In
the image example, a representation of both ``dog-face'' and
``four-leg'' is inefficient because it is not necessary to know both
``dog-face'' and ``four-leg'' to determine the dog label. As dogs
always have four legs (in our simplified world), ``four-leg'' is a
redundant feature given ``dog-face.'' We therefore formalize
efficiency of a representation using the notion of \emph{probability
of necessity} (PN). As we will discuss, PN and PS are aspects of a
general notion of \emph{probabilities of causation}, also due to
\citet{Pearl2011}.

\begin{figure}[t]
  \centering
    \vspace{-150pt}
  \includegraphics[width=\linewidth]{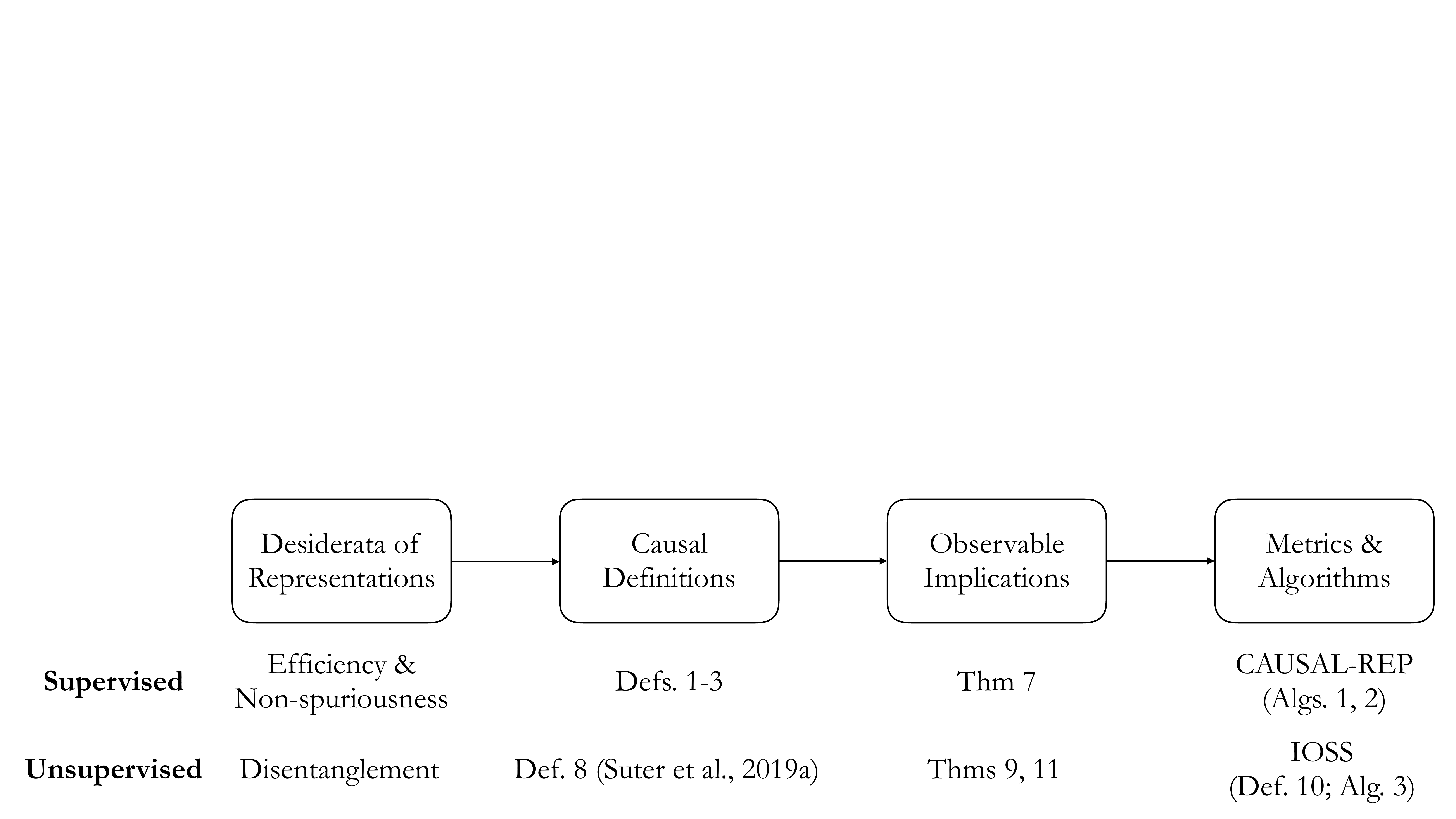}
  \caption{A causal perspective of representation learning: From
  desiderata to algorithms}
\label{fig:representation-workflow}
\end{figure}

While these causal definitions formalize efficiency and
non-spuriousness, they do not immediately imply calculable metrics for
these desiderata, because not all causal quantities are estimable from
observational data. To obtain calculable metrics, we have to study
observable implications of these causal definitions, a problem known
as \emph{causal identification}~\citep{Pearl2011}. We discuss the
unique challenges of causal identification with high-dimensional
data---specifically, high-dimensional data are often
rank-degenerate~\citep{stewart1984rank,golub1976rank}---and we develop
identification strategies to address these challenges. These
strategies lead to calculable metrics of efficiency and
non-spuriousness in the high-dimensional setting, along with the
conditions under which they are valid.

Based on these definitions, we develop an algorithmic framework that
we refer to as CAUSAL-REP that formulates representation learning as a
task of finding necessary and sufficient causes. In a range of
empirical studies, we find that CAUSAL-REP is more successful at
finding non-spurious representations in both images and text compared
to standard benchmarks.

We now turn to the unsupervised setting where our focus will be the
desideratum of \emph{disentangled representations}. Intuitively,
disentanglement requires that different dimensions of the learned
representation correspond to independent degrees of freedom of
objects; that is, we can view disentangled representations as making
it possible to generate new examples of objects by separately
manipulating the values of the features encoded by such
representations.

To develop metrics for disentanglement, we begin with a causal
definition of disentanglement due to~\citet{Suter2019}: different
dimensions of the learned representation should correspond to
different features that do not causally affect each other. (These
features may still be correlated.)  While a useful starting place,
this definition unfortunately leaves us short of the goal of measuring
and optimizing for disentanglement. Naively, it requires one to assess
whether causal connections exist among different features, which is a
generally impossible task without substantial knowledge about the
underlying causal structure~\citep{Pearl2011,imbens2015causal}. In
particular, we may observe correlation among variables, with or
without causal connections. Thanks to this difficulty, existing
disentanglement metrics often rely on ground-truth features or
auxiliary labels~\citep{Suter2019,thomas2018disentangling}, which we
often do not have access to in practice.

\glsreset{IOSS}

To obtain an operational measure of disentanglement, we again turn to
studying the observable implications of its causal definition. The key
observation is that the observable implications can exist in the
\emph{support} of the representation, i.e., the set of values the
representation can take. Specifically, we find that
disentanglement---the absence of causal connections among the features
captured by different dimensions of the representation---implies that
their support must be independent in observational data (under a
standard positivity condition). Visually, features with independent
support must have a scatter plot that occupies a hyperrectangular
region: the set of possible values each feature can take does not
depend on the values of other features. See
\Cref{fig:uncorr_ys,fig:corr_ys,fig:entangle_ys} for examples.

Building on this connection between disentanglement and independent
support, we develop the \emph{independence-of-support score} (IOSS) as
an unsupervised disentanglement metric; it evaluates whether different
dimensions of the representation have independent support. We
establish the sufficiency of this metric by showing that
representations with independent support are identifiable (under
suitable conditions). This fundamental identifiability result enables
us to enforce disentanglement in representation learning via an IOSS
penalty.

Through the study of non-spuriousness/efficiency and disentanglement,
we illustrate how causality can provide a fruitful perspective for
representation learning. Many intuitively defined desiderata in
representation learning can be formalized using causal notions. These
causal notions can further lead to metrics that quantify these
desiderata and representation learning algorithms that enforce these
desiderata.


\parhead{Related work.} There has been a flurry of recent work at
the intersection of representation learning and causal inference. We
provide a brief review, highlighting the contrast between this
literature and our work.

\emph{Learning non-spurious representations via invariance.} Many
works have considered causal formulations of representation learning
given datasets from multiple
environments~\citep{khasanova2017graph,zhao2019learning,moyer2018invariant,lu2021invariant,mitrovic2020representation,moraffah2019deep,arjovsky2019invariant,cheng2017causal,veitch2021counterfactual,creager2021environment,puli2021predictive}.
The basic idea is to enforce the invariance of the mapping between the
learned representation and the outcome label, thereby encouraging
non-spurious representations and enabling out-of-distribution
prediction. Similar to these works, our CAUSAL-REP algorithm targets
non-spurious representations. However, it differs in its focus on the
setting in which only a single observational dataset is available.
That is, we do not assume access to datasets from multiple
environments, nor do we leverage the invariance principle.

\emph{Reverse causal inference.} Our work is also related to a body of
work on reverse causal inference, a task that aims to find ``causes of
effects''~\citep{scholkopf2013semi,janzing2015semi,weichwald2014causal,kilbertus2018generalization,scholkopf2012causal,paul2017feature,wang2020identifying,wang2020robustness,kommiya2021towards,galhotra2021explaining,watson2021local,Chalupka2015,Chalupka2017,gelman2013ask}.
Existing approaches formulate the search for causes as causal
hypotheses generation and testing. In contrast, our CAUSAL-REP
algorithm formulates this search as maximizing
\glsreset{POC}\gls{POC}~\citep{Pearl2011,Pearl2019,tian2000probabilities,mueller2021causes},
specifically in the context of representation learning. We develop
identification conditions for these \gls{POC} for high-dimensional
data such as images and
text~\citep{nabi2020semiparametric,puli2020causal,wang2019blessings,wang2020towards,ranganath2019multiple,wang2021proxy,pryzant2020causal,grimmer2020causal,fong2016discovery,wang2020identifying,wang2020robustness}.
Our results build on existing identification results around multiple
causes with shared unobserved
confounding~\citep{wang2019blessings,wang2020towards,puli2020causal}
and its positivity
issues~\citep{d2020overlap,d2019multi,d2019comment,imai2019comment},
but are tailored to the representation learning setting where no
unobserved confounding is present.

\emph{Disentangled representation learning.} Our
\glsreset{IOSS}\gls{IOSS} relates to the broad
literature on disentangled representation learning. Early approaches
to disentanglement often enforce statistical independence among
different dimensions of the representation
\citep{bengio2013representation,achille2018emergence,higgins2017beta,
burgess2018understanding, kim2018disentangling, chen2018isolating,
kumar2018variational}. However, \citet{locatello2019challenging} show
that this inductive bias of statistical independence is insufficient
for disentanglement due to its non-identifiability. Many recent
approaches to disentanglement thus incorporate auxiliary information
to enable identifiability in
disentanglement~\citep{locatello2019disentangling,khemakhem2020variational,bouchacourt2018multi,
hosoya2019group,shu2019weakly,locatello2020weakly,Trauble2020,yang2020causalvae,shen2020disentangled}.
In contrast to these works, \gls{IOSS} relies on the causal
perspectives of \citet{Suter2019} and establishes
identifiability in disentanglement without the need for supervision.
Focusing on compactly supported representations, we show that
representations with independent support are identifiable under
suitable conditions.

\emph{Causal structure learning.} Our causal approach to
disentanglement also relates to causal structure learning; both
involve assessing the existence of causal connections between
variables. Traditional approaches to causal structure learning relies
on independence tests or score-based methods, often assuming all
variables are observed; see \citet{heinze2018causal} for a review.
\gls{IOSS} differs from these works in that we allow for unobserved
common causes among the observed variables; we also rely on a
different observable implication of the lack of causal connections.

\emph{Representation learning for causal inference.} Representation
learning and dimensionality reduction have significantly improved the
estimation efficiency of causal inference with high-dimensional
covariates and
treatments~\citep{johansson2019support,nabi2020semiparametric,johansson2020generalization,shi2020invariant,wu2021identifying,veitch2019using,veitch2020adapting}.
The focus in these work is on how representation learning can help
causal inference with high-dimensional covariates. In contrast, we
focus on how causal inference can help produce useful representations.


\parhead{This paper.} The rest of the paper is organized as follows.
\Cref{sec:nonspuriousness} develops a causal approach to efficiency
and non-spuriousness in supervised representation learning.
\Cref{subsec:define-pns} formalizes efficiency and non-spuriousness
using counterfactual notions, namely the probability of necessity and
sufficiency in causal inference. \Cref{subsec:pns-id} discusses the
identification of these counterfactuals, which leads to metrics for
efficiency and non-spuriousness. \Cref{subsec:pns-learn} formulates
representation learning as a task of finding necessary and sufficient
causes of the label and develops CAUSAL-REP, an algorithm that targets
efficient and non-spurious features. \Cref{sec:causalrep-empirical}
provides an empirical study of CAUSAL-REP, showing that it captures
non-spurious features and improves downstream out-of-distribution
prediction. \Cref{section:disentanglement} switches gears to develop a
causal approach to disentanglement in unsupervised representation
learning. \Cref{subsec:disentangle-def} reviews a causal definition
of disentanglement.
\Cref{subsec:disentangle-metric} studies its observable implications,
showing that causal disentanglement implies independent support of the
representation. This leads to \gls{IOSS}, an unsupervised
disentanglement metric. \Cref{subsec:ioss-learn} establishes the
identifiability of representations with independent support,
enabling disentangled representation learning with an \gls{IOSS}
penalty. \Cref{sec:ioss-empirical} demonstrates the effectiveness of
\gls{IOSS} in empirical studies. Finally, \Cref{sec:conclusion}
concludes the paper with a discussion of causal perspectives on
representation learning.


\section{Supervised Representation Learning: Efficiency \\and
Non-spuriousness}
\label{sec:nonspuriousness}

We begin by formulating a set of desiderata for supervised
representation learning. As a running example, we consider a dataset
of images and their labels. We focus on settings where the labels are
obtained by annotators viewing each image and then labeling; they
describe the perceived features of the image, as opposed to the
intended features of the image. The goal is to construct a
low-dimensional image representation that efficiently captures its
essential features.

Formally, we consider a dataset that contains $n$  $m$-dimensional
data points and their labels, $\{\mbx_i, y_i\}_{i=1}^n\in
\mathcal{X}^m\times\mathcal{Y}$, assumed to be sampled i.i.d., with
$\mbx_i = (x_{i1}, \ldots, x_{im})$. The variable $x_{il}$ might be
the value of the $l$th pixel of an image, with $\mathcal{X}=[0,255]$;
it might also be the number of times the $l$th word in the vocabulary
occur in the document, with $\mathcal{X}=\mathbb{Z}^+$. The goal of
representation learning is to find a deterministic function
$f:\mathcal{X}^m\rightarrow \mathbb{R}^d$ mapping the $m$-dimensional
data into a $d$-dimensional space $(d\ll m)$. Thus each data point
$\mbx_i$ has a $d$-dimensional representation $\mbz_i\triangleq
f(\mbx_i)$, or equivalently $\mbz_i = (z_{i1}, \ldots, z_{id})
\triangleq (f_1(\mbx_i), \ldots, f_d(\mbx_i))$, with
$f_j:\mathcal{X}^m \rightarrow
\mathbb{R}$.

\glsreset{POC}

Ideally, such a representation should be efficient and non-spurious.
It shall efficiently capture features of the image that are essential
for determining the label. Below we formalize the desiderata of
efficiency and non-spuriousness using counterfactual notions.

\subsection{Defining Efficiency and Non-spuriousness using
Counterfactuals}

\label{subsec:define-pns}

To formalize the desiderata of efficiency and non-spuriousness, we
posit a \gls{SCM} that describes the data generating process. This
\gls{SCM} will enable us to define these desiderata through
counterfactual quantities defined via this model.

\glsreset{SCM}
\subsubsection{A structural causal model of supervised representation
learning}

We describe a \gls{SCM} for supervised representation learning and the
counterfactual quantities therein.

\glsreset{SCM}
\parhead{The \gls{SCM} of the labeled dataset.} We posit
\Cref{fig:causalrep-scm} as the \gls{SCM} of a data-generating
process. (For notation simplicity, we suppress the data point index
$i$ and focus on categorical $Y$.) The figure embodies the assumption
that the high-dimensional object $\mbX=(X_1, \ldots, X_m)$ causally
affects its label~$Y$. Moreover, there is no confounder between the
two; the label~$Y$ can be fully determined by the object $\mbX$. These
assumptions hold because we assume the labels are obtained by
annotators viewing the images; the labeling is solely based on the
images.

\Cref{fig:causalrep-scm} also posits that different dimensions of the
high-dimensional object, $X_1, \ldots, X_m$, are correlated due to
some unobserved common cause $\mbC$, which can potentially be
multi-dimensional. For example, $\mbC$ can represent the design of an
image; nearby pixels tend to be highly correlated if an image has a
smooth design. This assumption holds because pixel values of images
are not independent; they are determined by common latent
variables~\citep{kingma2014auto,goodfellow2014generative}.

\parhead{The representation $f(\mbX)$ and its interventions.} The
representation $\mbZ=f(\mbX)$ captures features of the
high-dimensional data $\mbX$. For example, $\mbZ = (Z_1, Z_2) =
(\mathbb{I}\{X_{125} \times X_{38} < 0.01\}, X_{164} + 0.3 \cdot
X_{76}^2)$ is a two-dimensional representation of $\mbX$, whose first
dimension represents whether pixels 125 and 38 are close to black.
More complex representations may capture more meaningful features;
e.g., whether a dog face is present in the image.

The representation $\mbZ=f(\mbX)$ is not included in the \gls{SCM} as
a separate causal variable in addition to~$\mbX$. The reason is that
the representation $\mbZ=f(\mbX)$ is a \emph{deterministic} function
of the image pixels $\mbX$ and the causal variables of a \gls{SCM}
must be related by functional relationships perturbed by \emph{random}
disturbances~\citep{Pearl2011,pearl1995causal}. In particular, $\mbZ$
is not a descendant of $\mbX$ because a value change in $\mbZ$ would
imply a value change in $\mbX$ (unless $f$ is a constant function).
Nor is $\mbZ$ an ancestor of $\mbX$ because a change in the value of
$\mbX$ can also imply a change in $\mbZ$.

Despite $\mbZ$ being not explicitly included in the \gls{SCM},
\Cref{fig:causalrep-scm} does imply that $\mbZ=f(\mbX)$ is an ancestor
of the label $Y$. This is because $\mbX$ is an ancestor of $Y$.
Moreover, interventions on the representation $\mbZ=f(\mbX)$ are also
well defined; they are \emph{functional
interventions}~\citep{puli2020causal,correa2020calculus,eberhardt2007interventions},
also referred to as \emph{stochastic policies}~\citep[Ch.
4.2]{Pearl2011}. (We will define them rigorously in
\Cref{subsec:pns-id}.) Roughly, suppose a one-dimensional
representation $Z=f(\mbX)$ captures the univariate binary feature of
whether the grass is present in the image. Then the intervention
$\rmdo(Z=1)$---equivalently $\rmdo(f(\mbX)=1)$---means ``turning on''
the grass feature in the image. That is, holding the design of the
image $\mbC$ fixed, we change the pixels of the image $\mbX$ such that
the grass is present: $Z=1$. Accordingly, the intervention
$\rmdo(Z=0)$ means ``turning off'' the grass feature. Holding the
design $\mbC$ fixed, we change the pixels $\mbX$ such that the grass
is absent, $Z=0$.

\parhead{Counterfactuals in the \gls{SCM}.} The \gls{SCM}
(\Cref{fig:causalrep-scm}) results in a family of counterfactuals,
``what would the value of a variable be if we intervene on some
variables of the causal model.'' Here we focus on the counterfactuals
obtained when we intervene on the image features captured by the
representation $\mbZ$. These counterfactuals will allow us to define
the efficiency and non-spuriousness of the representations.

We denote $Y(\mbZ=\mbz)$ as the counterfactual label of an image if we
force its representation $\mbZ$ to take value $\mbz$.\footnote{The
counterfactual label $Y(\mbZ=\mbz)$ is also commonly written as
$Y_\mbz$~\citep{Pearl2011}. Here we employ the parenthesis notation
$Y(\mbZ=\mbz)$ to avoid subscripts.} For example, if a one-dimensional
representation $Z$ captures the feature ``whether grass is present in
the image,'' then $Y(Z=0)$ is the counterfactual label had the image
had no grass. Accordingly, $Y(Z=1)$ is the counterfactual label had
the image had grass being present.

The key idea here is to evaluate the quality of the representation
$\mbZ$ by reasoning about the counterfactual labels: What would the
label $Y$ be had $\mbZ$ taken different values? We will show how the
desiderata of non-spuriousness and efficiency can be formalized
via these counterfactuals. This connection will allow us to
develop metrics and algorithms for these desiderata.

\begin{figure}
\centering
\centering
\begin{adjustbox}{height=6cm}
\includegraphics{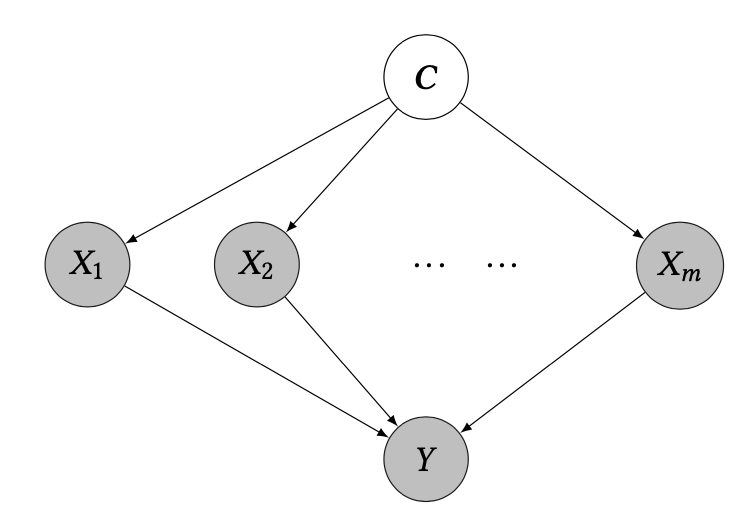}
\end{adjustbox}
\caption{A \gls{SCM} for supervised representation learning.
$\mbX=(X_1,\ldots,X_m)$ represents the high-dimensional object (e.g.,
pixels of an image), $Y$ represents the outcome label, and $\mbC$
denotes the unobserved common cause of $X_1, \ldots,
X_m$.\label{fig:causalrep-scm}}
\end{figure}

\subsubsection{Counterfactual definitions of non-spuriousness and
efficiency}

We next discuss how these counterfactual notions can allow us to
formalize the non-spuriousness and efficiency of a representation
$f(\mbx)$, for a single observed data point $(\mbX=\mbx, Y=y)$.

\textbf{Non-spuriousness and its counterfactual definition.} A
non-spurious representation captures features that can (causally)
determine the label. In the causal model, we say that a representation
captures non-spurious features if, given an image without the feature,
including this feature would change its label.

As an example, suppose the label $Y$ indicates whether the image
contains a dog. Then a representation $Z$ capturing the presence of
dog-face is a non-spurious feature. Given a non-dog image without a
dog face $(Y=0, Z=0)$, adding a dog face to the image turns on the dog
label, i.e., $Y(Z=1) = 1$. In other words, dog-face is a sufficient
case of a dog label. In contrast, a representation $Z$ that captures
the presence of grass is a spurious feature. Though the presence of
grass and the dog label may be highly correlated, adding grass to a
non-dog image does not turn on the dog label.

\glsreset{PS}

Viewing the representation $\mbZ$ as a potential cause of the label, a
non-spurious representation shall be a \emph{sufficient cause} of the
label. Turning on the feature (captured by) $\mbZ$ should be
sufficient to turn on the label. We thus measure the non-spuriousness
of a representation by the \gls{PS} of the feature $\mbZ$ causing the
label $Y$.

\glsreset{PS}
\begin{defn}[Non-spuriousness of representations]
\label{defn:ps}
Suppose we observe a data point with representation $\mbZ=\mbz$ and
label $Y=y$. Then the non-spuriousness of the representation $\mbZ$
for label $Y$ is the \gls{PS} of $\mathbb{I}\{\mbZ=\mbz\}$ for
$\mathbb{I}\{Y=y\}$:
\begin{align}
\label{eq:ps}
\gls{PS}_{\mbZ=\mbz, Y=y}=P(Y(\mbZ=\mbz)=y\g \mbZ\ne\mbz, Y\ne y).
\end{align}
When both the representation $Z$ and the label $Y$ are univariate
binary with $z=1, y=1$, then \Cref{eq:ps} coincides with classical
definition of \gls{PS}~(Definition 9.2.2 of \citet{Pearl2011}).
\end{defn}

$\gls{PS}_{\mbZ=\mbz, Y=y}$ is the probability of the representation
$\mathbb{I}\{\mbZ=\mbz\}$ being a \emph{sufficient cause} of the label
$\mathbb{I}\{Y=y\}$. In Pearl's language, it describes the capacity of
the representation to ``produce'' the label. \gls{PS} measures the
probability of a positive counterfactual label if we \emph{make} the
feature $\mathbb{I}\{\mbZ=\mbz\}$ be present (and all else equal),
conditional on the feature being absent and the label being negative
$\{\mathbb{I}\{\mbZ=\mbz\}=0, \mathbb{I}\{Y=y\}=0\}$. A non-spurious
representation shall have a high \gls{PS} for the label~of~interest.

\textbf{Efficiency and its counterfactual definition.} An efficient
representation captures only essential features of the data; it does
not capture any redundant features. In the causal model, we say that
the representation is efficient if, given an image with the feature it
captures, removing this feature would change its label.

Returning to the image example with the dog label, a representation
that captures ``dog-face + four-leg'' is inefficient. The reason is
that, for an image with both a dog face and four legs, removing one of
the dog legs and hence turning off the ``dog-face + four-leg'' feature
does not necessarily turn off the ``dog label.'' In contrast, the
feature ``dog-face'' is efficient as removing a dog face from an image
will turn off the ``dog label.''

An efficient representation must capture features that are
\emph{necessary causes} of the label. We can therefore define the
efficiency of a representation by the \gls{PN} of the feature causing
the label~\citep{Pearl2011}.

\glsreset{PN}
\begin{defn}[Efficiency of representations]
\label{defn:pn}
Suppose we observe a data point with representation $\mbZ=\mbz$ and
label $Y=y$. Then the efficiency of the representation $\mbZ$ for the
label $Y$ is the \gls{PN} of $\mathbb{I}\{\mbZ=\mbz\}$ for
$\mathbb{I}\{Y=y\}$:\footnote{The distribution of the counterfactual
$Y(\mbZ\ne\mbz)$ is defined as that of a soft
intervention~\citep{correa2020calculus,eberhardt2007interventions};
a.k.a.\ a stochastic policy~\citep[Ch. 4.2]{Pearl2011}:
\begin{align*}
P(Y(\mbZ\ne\mbz))
= \int P(Y(\mbZ))P(\mbZ\g \mbZ\ne\mbz)\dif \mbZ.
\end{align*}}
\begin{align}
\label{eq:pn}
\gls{PN}_{\mbZ=\mbz, Y=y}=P(Y(\mbZ\ne\mbz)\ne y\g \mbZ=\mbz, Y= y).
\end{align}
When both the representation $Z$ and the label $Y$ are univariate
binary with $z=1, y=1$, then \Cref{eq:pn} coincides with classical
definition of \gls{PN}~(Definition 9.2.1 of \citet{Pearl2011}).\end{defn}

$\gls{PN}_{\mbZ=\mbz, Y=y}$ is the probability of the representation
$\mathbb{I}\{\mbZ=\mbz\}$ being a \emph{necessary cause} of the label
$\mathbb{I}\{Y=y\}$. It measures the probability of a negative
counterfactual label if we \emph{remove} the feature (i.e., setting
$\mathbb{I}\{\mbZ=\mbz\}=0$ and all else equal), conditional on the
feature being present and the label being positive
$\{\mathbb{I}\{\mbZ=\mbz\}=1, \mathbb{I}\{Y=y\}=1\}$. An efficient
representation shall have a high
\gls{PN} for the label of interest.

\subsubsection{Simultaneous assessment of non-spuriousness and
efficiency}

Representation learning often targets representations that are both
non-spurious and efficient. How can one assess non-spuriousness and
efficiency simultaneously?

\parhead{Assessing non-spuriousness and efficiency simultaneously.} We
invoke the notion of \gls{PNS}. In more detail, non-spuriousness is
intuitively the effectiveness of turning on the label by turning on
the feature captured by the representation; efficiency is the
effectiveness of turning off the label by turning off this feature.
Thus a representation is both non-spurious and efficient if the label
responds to the feature in both ways. If the feature is turned on,
then the label will be turned on; if the feature is turned off, then
the label will be turned off too. \gls{PNS} calculates exactly this
probability.

\glsreset{PNS}
\begin{defn}[Efficiency \& non-spuriousness of representations]
\label{defn:pns} 
Suppose we observe a data point with representation $\mbZ=\mbz$ and
label $Y=y$. Then the efficiency and non-spuriousness of the
representation $\mbZ$ for label $Y$ is the \gls{PNS} of
$\mathbb{I}\{\mbZ=\mbz\}$ for $\mathbb{I}\{Y=y\}$:
\begin{align}
\label{eq:pns}
\gls{PNS}_{\mbZ=\mbz, Y=y}=P(Y(\mbZ\ne\mbz)\ne y, Y(\mbZ=\mbz)=y)).
\end{align}
When both the representation $Z$ and the label $Y$ are univariate
binary with $z=1, y=1$, then \Cref{eq:pns} coincides with classical
definition of \gls{PNS}~(Definition 9.2.3 of \citet{Pearl2011}).
\end{defn}

Requiring both necessity and sufficiency of the cause is a stronger
requirement than requiring only necessity (or only sufficiency).
Accordingly,
\gls{PNS} is a weighted combination of \gls{PN} and
\gls{PS},
\begin{align*}
\gls{PNS}_{\mbZ=\mbz, Y=y} = P(\mbZ=\mbz, Y=y)\cdot \gls{PN}_{\mbZ=\mbz, Y=y} + P(\mbZ\ne\mbz, Y\ne
y)\cdot \gls{PS}_{\mbZ=\mbz, Y=y},
\end{align*}
as per Lemma 9.2.6 of \citet{Pearl2011}. We note that our definitions
of \gls{PN}, \gls{PS}, \gls{PNS} (\Cref{eq:ps,eq:pn,eq:pns})
generalize those in \citet[Ch. 9]{Pearl2011} from univariate binary
causes to general (continuous, discrete, or multi-dimensional) causes.
We next consider two further extensions of the~\gls{PNS}~notion.

\parhead{Extension: Efficiency and non-spuriousness over a dataset.}
The discussion of efficiency and non-spuriousness
(\Cref{defn:ps,defn:pn,defn:pns}) has focused on individual data
points up to this point, reflecting the fact that
\glsreset{POC}\gls{POC} notions are most commonly discussed with
respect to a single occurred event---given that the event $\mbZ=\mbz$
and $Y=y$ has occurred, what is the probability that $\mbZ=\mbz$ is a
sufficient and/or necessary cause of $Y=y$?

In practice, however, we are often interested in whether a
representation, $f:\mathcal{X}^m\rightarrow \mathcal{Y}$, can produce
an efficient and non-spurious summary for datasets of $n$ i.i.d. data
points, $\{(\mbx_i, y_i)\}_{i=1}^n$. We thus extend \Cref{defn:pns} to
this setting:
\begin{align}
\label{eq:pns-dataset}
\gls{PNS}_n(\mbZ, Y)&\triangleq\prod_{i=1}^n \gls{PNS}_{\mbZ=\mbz_i, Y=y_i}=\prod_{i=1}^n P(Y(\mbZ\ne\mbz_i)\ne y_i, Y(\mbZ=\mbz_i)=y_i)),
\end{align}
where $\mbz_i=f(\mbx_i)$ is the representation for data point $i$.
Loosely, this means that a representation is efficient and non-spurious
for a dataset if its efficiency and non-spuriousness holds jointly
across all the data points. (One can similarly extend
\Cref{defn:ps,defn:pn} for \gls{PS} and \gls{PN}.)

\parhead{Extension: Conditional efficiency and non-spuriousness.}
For multi-dimensional representations, one is often interested in the
efficiency and non-spuriousness of each of its dimensions. We expect
each dimension of the representation to be efficient and non-spurious
conditional on all other dimensions.

We thus extend \Cref{defn:pns} to formalize a notion of
\emph{conditional efficiency and non-spuriousness}. Consider a
$d$-dimensional representation $\mbZ = (Z_1, \ldots, Z_d)=(f_1(\mbX),
\ldots, f_d(\mbX))$. The conditional efficiency and non-spuriousness
of the $j$th dimension $Z_j$ for data point $(\mbx_i, y_i)$ is
\begin{align}
\label{eq:pns-conditional}
\gls{PNS}_{Z_j=z_{ij}, Y=y_i \g \mbZ_{-j} = \mbz_{i,-j}}=P(Y(Z_j \ne
z_{ij}, \mbZ_{-j}=\mbz_{i,-j})\ne y_i, Y(Z_j = z_{ij},
\mbZ_{-j}=\mbz_{i,-j})=y_i),
\end{align}
where $z_{ij} = f_j(\mbx_i)$ is the $j$th dimension of the
representation, and $z_{i,-j} = (z_{ij'})_{j'\in\{1, \ldots,
d\}\backslash j}$. Accordingly, the conditional efficiency and
non-spuriousness of $Z_j$ across all $n$ data points is
\begin{align}
\label{eq:pns-conditional-dataset}
\gls{PNS}_n(Z_j, Y \g \mbZ_{-j})&\triangleq\prod_{i=1}^n
\gls{PNS}_{\mbZ=\mbz_i, Y=y_i \g \mbZ_{-j} = \mbz_{i,-j}}.
\end{align}
Conditional efficiency and non-spuriousness describes how the label
responds to the $j$th feature captured by the representation, holding
all other features fixed. Its definition resembles the definition of
(unconditional) efficiency and non-spuriousness. The only difference
is that the conditional notion contrasts the counterfactual label of
$\{Z_j\ne z_{ij}, \mbZ_{-j} = \mbz_{i,-j}\} $ and $\{Z_j= z_{ij},
\mbZ_{-j} = \mbz_{i,-j}\}$, while the unconditional notion contrasts
that of $\{Z_j\ne z_{ij}, \mbZ_{-j}
\ne \mbz_{i,-j}\} $ and $\{Z_j= z_{ij}, \mbZ_{-j} = \mbz_{i,-j}\}$.

The conditional efficiency and non-spuriousness across all dimensions
of a representation is generally a stronger requirement than the
(unconditional) efficiency and non-spuriousness. For example, suppose
a dataset of $(\mbX, Y)$ with $\mbX=(X_1,X_2,X_3)$ is generated by $Y
= X_1 + \epsilon_y$. Now consider a two-dimensional representation,
$\mbZ=(Z_1, Z_2)$, where $Z_1 = X_1$ and $Z_2=Z_1 + \epsilon,
\epsilon\sim\cN(0,\sigma^2)$. When $\sigma^2$ is small, $Z_1$ and
$Z_2$ are highly correlated. In this case, the (unconditional)
efficiency and non-spuriousness of $\mbZ$ is high because $(Z_1, Z_2)$
is indeed non-spurious and (nearly) efficient; $Z_2$ only introduces a
negligible amount of useless information. However, the conditional
efficiency and non-spuriousness of $Z_2$ given $Z_1$ is low because
$Z_2$ is completely useless given $Z_1$.

We will leverage this conditional efficiency and non-spuriousness
notion for representation learning in \Cref{subsec:pns-learn}. First,
however, we study how to measure (conditional) efficiency and
non-spuriousness from observational data.

\subsection{Measuring Efficiency and Non-spuriousness in
Observational Datasets}

\label{subsec:pns-id}

\Cref{defn:ps,defn:pn,defn:pns} formalize the efficiency and
non-spuriousness of representations using \gls{POC}. These
\gls{POC} are counterfactual quantities; calculating them requires
access to a causal structural equation (i.e., the true data
generating process), which is rarely available in practice.

In this section, we study the observable implications of
\Cref{defn:ps,defn:pn,defn:pns}. They lead to strategies to evaluate
the efficiency and non-spuriousness of representations with
observational data; these strategies are known as \emph{causal
identification} strategies~\citep{Pearl2011}. For simplicity of
exposition, we focus on identifying the \gls{PNS} in \Cref{defn:pns}.
(Identification formulas for
\gls{PN} and \gls{PS} in \Cref{defn:pn,defn:ps} can be similarly
derived using Theorem 9.2.15 of \citet{Pearl2011}.)

To identify the counterfactual quantity \gls{PNS} from observational
data, we perform two steps of reduction, climbing down the ladder of
causation~\citep{pearl2019seven}: (1) connect \gls{PNS} to
interventional distributions, and (2) identify these interventional
distributions from observational data. We describe these steps in
detail in the next sections. The end product is
\Cref{corollary:pns-final}, which provides an algorithm for
calculating a lower bound on the \gls{PNS}.

\subsubsection{From counterfactuals to interventional distributions} 

To connect the counterfactual quantity \gls{PNS} with intervention
distributions, we generalize the classical identification results for
\gls{PNS} that have been developed for univariate binary causes and outcomes. It
turns out that the \gls{PNS} cannot be point-identified by
interventional distributions; for each set of interventional
distributions, there may exist many values of \gls{PNS} that are
consistent with the interventional distributions~\citep{Pearl2011}.
However, \gls{PNS} can be bounded by the difference between these
two interventional distributions.

\begin{lemma}[A lower bound on \gls{PNS}]
\label{thm:pns-id}
Assuming the causal graph in \Cref{fig:causalrep-scm}, the \gls{PNS}
is lower bounded by the difference between two intervention
distributions:
\begin{align}
\begin{split}
\gls{PNS}_{\mbZ=\mbz, Y=y}&=P(Y(\mbZ=\mbz) = y, Y(\mbZ\ne \mbz) \ne y)\\
&\qquad\geq P(Y=y\g \rmdo(\mbZ=\mbz)) - P(Y=y\g \rmdo(\mbZ\ne \mbz)).
\end{split}
\label{eq:pns-id}
\end{align} 
The inequality becomes an equality when the outcome $Y$ is monotone in
the representation $\mbZ$ (in the binary sense); i.e.,
$P(Y(\mbZ=\mbz)\ne y, Y(\mbZ\ne\mbz)=y)=0$.
\end{lemma}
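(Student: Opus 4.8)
The plan is to decompose the statement into two nearly independent pieces: an elementary inequality between two jointly distributed counterfactual events, and a causal-identification step that rewrites the two marginal counterfactual probabilities as interventional distributions under the graph of \Cref{fig:causalrep-scm}. Throughout, all counterfactuals are understood as random variables on the common probability space of the \gls{SCM}, whose randomness is the collection of exogenous disturbances (including those generating $\mbC$), so that the joint law of $\bigl(Y(\mbZ=\mbz),\,Y(\mbZ\ne\mbz)\bigr)$ is well defined.

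First I would write $A \triangleq \{Y(\mbZ=\mbz)=y\}$ and $B \triangleq \{Y(\mbZ\ne\mbz)=y\}$, so that by \Cref{defn:pns} we have $\gls{PNS}_{\mbZ=\mbz,Y=y} = P(A\cap B^c)$. The only inequality needed is then the chain
\[
\gls{PNS}_{\mbZ=\mbz,Y=y} \;=\; P(A\cap B^c) \;=\; P(A) - P(A\cap B) \;\ge\; P(A) - P(B),
\]
using $P(A) = P(A\cap B) + P(A\cap B^c)$ and $P(A\cap B)\le P(B)$. The single slack term is $P(A\cap B) - P(B) = -P(A^c\cap B)$, so the bound is tight exactly when $P(A^c\cap B) = P\bigl(Y(\mbZ=\mbz)\ne y,\, Y(\mbZ\ne\mbz)=y\bigr)=0$, which is precisely the monotonicity condition in the statement.

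Next I would identify $P(A)$ and $P(B)$ with interventional quantities. The key fact is the consistency/composition property of structural causal models: the marginal law of the counterfactual $Y(\mbZ=\mbz)$ equals the post-intervention law $P(Y\mid\rmdo(\mbZ=\mbz))$, where the intervention is the functional intervention on $\mbZ=f(\mbX)$ described informally in \Cref{subsec:define-pns} (change $\mbX$ so that $f(\mbX)=\mbz$ while holding $\mbC$ fixed) and made rigorous later in this subsection; hence $P(A) = P(Y=y\mid\rmdo(\mbZ=\mbz))$. Applying the same reasoning to the soft intervention $\rmdo(\mbZ\ne\mbz)$ --- whose counterfactual is defined in the footnote to \Cref{defn:pn} as the mixture $\int P(Y(\mbZ))\,P(\mbZ\mid\mbZ\ne\mbz)\dif\mbZ$ --- gives $P(B) = P(Y=y\mid\rmdo(\mbZ\ne\mbz))$. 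Substituting both identities into the display above yields \Cref{eq:pns-id}, and the equality clause carries over verbatim.

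The inequality and the algebra around the equality case are routine; the one place that requires care is the identification step --- in particular, justifying that the functional/stochastic interventions on the \emph{deterministic} feature $\mbZ=f(\mbX)$ behave like ordinary $\rmdo$-interventions for the purposes of the composition axiom, and noting that this is where the no-confounding assumption between $\mbX$ and $Y$ encoded in \Cref{fig:causalrep-scm} is actually used. Everything else is bookkeeping.
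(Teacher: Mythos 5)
Your proposal is correct and follows essentially the same route as the paper: the same elementary set-algebra decomposition (your slack term $P(A^c\cap B)$ is exactly the paper's $\epsilon$, yielding the same monotonicity condition for equality), followed by identifying the marginal counterfactual laws with the interventional distributions $P(Y=y\g \rmdo(\mbZ=\mbz))$ and $P(Y=y\g \rmdo(\mbZ\ne\mbz))$. If anything, you are more explicit than the paper about the latter identification step (consistency/composition for the functional and soft interventions), which the paper's proof leaves implicit.
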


\Cref{thm:pns-id} generalizes Theorem 9.2.10 of \citet{Pearl2011} to
non-binary $\mbZ$. (The proof is in \Cref{sec:thm-pns-id-proof}.) It
connects the counterfactual quantity \gls{PNS} to the intervention
distribution $P(Y\g \mathrm{do}(\mbZ=\mbz))$. An upper bound on
\gls{PNS} can be similarly obtained by generalizing the upper bound in
Theorem 9.2.10 of \citet{Pearl2011}. We focus on  \gls{PNS} here
because a larger lower bound on \gls{PNS} implies a large \gls{PNS}.
In contrast, a larger upper bound on \gls{PNS} does not necessarily
imply a large~\gls{PNS}.

Reducing the counterfactual quantity \gls{PNS} to the lower bound in
\Cref{thm:pns-id} has climbed down one level of the ladder of
causation~\citep{pearl2019seven}; we have converted a counterfactual
quantity (level three) to one with interventional distributions (level
two). Next we will descend one level further, discussing how to
identify the interventional distributions $P(Y\g
\rmdo(\mbZ=\mbz))$ from the observational data distribution $P(\mbX,
Y)$ (level one); $\mbZ=f(\mbX)$ is the representation of $\mbX$ via
some known function $f$. These identification results would also
enable the identification of $P(Y\g\rmdo(\mbZ\ne\mbz)) = \int P(Y\g
\rmdo(\mbZ))P(\mbZ\g \mbZ\ne \mbz)\dif \mbZ$.

\subsubsection{From interventional distributions to observational data
distributions}

To calculate the lower bound of \gls{PNS} in \Cref{eq:pns-id}, we need
to identify the intervention distribution $P(Y\g
\rmdo(\mbZ=\mbz))$---equivalently $P(Y\g \rmdo(f(\mbX)=\mbz))$---from
observational datasets $(\mbX, Y)$. Below we first state the formal
definition of functional interventions $\rmdo(f(\mbX)=\mbz)$. We then
discuss the challenges in identifying $P(Y\g \rmdo(f(\mbX)=\mbz))$ for
high-dimensional $\mbX$. To tackle this challenge, we further provide
an identification strategy and discuss its theoretical and practical
requirements.

\parhead{The functional intervention $\rmdo(f(\mbX))$.} We begin with
stating the formal definition of a functional intervention.

\begin{defn}[Functional interventions~\citep{puli2020causal}] 
\label{defn:functional-intervention}
The intervention distribution under a functional intervention
$P(Y\g\rmdo(f(\mbX)=\mbz))$ is defined as
\begin{align}
\label{eq:functional-intervention-def}
P(Y\g\rmdo(f(\mbX)=\mbz)) \triangleq \int P(Y\g \rmdo(\mbX), \mbC)
P(\mbX\g \mbC, f(\mbX)=\mbz) P(\mbC)\dif
\mbX\dif \mbC,
\end{align}
where $\mbC$ denotes all parents of $\mbX$.
\end{defn}

The functional intervention on $f(\mbX)$ is a soft intervention on
$\mbX$~\citep{correa2020calculus,eberhardt2007interventions}
conditional on all its parents $\mbC$, with a stochastic policy
$p(\mbX\g \mbC, f(\mbX)=\mbz)$~\citep[Ch. 4.2]{Pearl2011}. In other
words, a functional intervention $\rmdo(f(\mbX)=\mbz)$ considers all
interventions on $\mbX$ that are consistent with the functional
constraint $f(\mbX)=\mbz$ and its parental structure $\mbC$.
Functional interventions recover the standard backdoor adjustment as
special cases if we take the function $f$ to be an identity function
$f(\mbX)=\mbX$ or one that returns a subset $f(\mbX)=\mbX_S,
S\subset\{1,
\ldots, m\}$; see \Cref{sec:functional-interventions-supp} for
detailed derivations.

Following this definition, one can write the intervention distribution
of interest, $P(Y\g\rmdo(f(\mbX)=\mbz))$, as follows:
\begin{align}
\label{eq:functional-intervention-naive-id}
P(Y\g\rmdo(f(\mbX)=\mbz)) = \int P(Y\g \mbX) \cdot \left[\int
P(\mbX\g \mbC, f(\mbX)=\mbz) P(\mbC)\dif \mbC\right]\dif
\mbX.
\end{align}
This equality is due to the \gls{SCM} in \Cref{fig:causalrep-scm}:
there is no unobserved confounding between $\mbX$ and $Y$, which
implies $P(Y\g \rmdo(\mbX), \mbC) = P(Y\g \mbX)$.

\Cref{eq:functional-intervention-naive-id} provides a way to identify
$P(Y\g\rmdo(f(\mbX)=\mbz))$, provided that one can calculate $P(Y\g
\mbX)$ and $\int P(\mbX\g \mbC, f(\mbX)=\mbz) P(\mbC)\dif \mbC$. At
first sight, both quantities seem easy to calculate: one can estimate
$P(Y\g \mbX)$ from observational sampling of $P(Y, \mbX)$. To calculate
$\int P(\mbX\g
\mbC, f(\mbX)=\mbz) P(\mbC)\dif \mbC$, one may leverage probabilistic
factor models (e.g., \gls{PPCA}, \gls{GMM}, or \gls{VAE}) because the
latent $\mbC$ renders $X_1, \ldots, X_m$ conditionally independent in
\Cref{fig:causalrep-scm}. One can often read off $P(\mbX\g \mbC,
f(\mbX)=\mbz)$ and $P(\mbC)$ from the probabilistic factor model fit,
if the factor model is identifiable.

However, $P(Y\g \mbX)$ turns out to be challenging to calculate in
practice, especially when $\mbX$ represents high-dimensional objects
such as images or text, as we now discuss.

\parhead{The challenge of identifying $P(Y\g \mbX)$ with
high-dimensional $\mbX$.} The key challenge in identifying~$P(Y\g
\mbX)$ lies in the high-dimensionality of $\mbX$. High-dimensional
objects (e.g. images or text) are often only supported on a
low-dimensional manifold, i.e., $X_j = g_0(\{X_1, \ldots,
X_m\}\backslash X_j)$ for some $j \in \{1, \ldots, m\}$ and some
function $g$; see \Cref{fig:high-dim-data} for examples. We refer to
this problem as the \emph{rank-degeneracy
problem}~\citep{stewart1984rank,golub1976rank}. It is akin to the
classical challenge of having highly correlated variables in a linear
regression; it is also an example of the underspecification problem in
deep learning~\citep{damour2020underspecification}.

The rank degeneracy of high-dimensional $\mbX=(X_1, \ldots, X_m)$
challenges the identification of $P(Y\g \mbX)$, even if both $\mbX$
and $Y$ are observed. Intuitively, this non-identifiability problem
occurs because the data can only inform $P(Y\g \mbX)$ on the
low-dimensional manifold in $\mathcal{X}^m$ where $P(\mbX)$ is
supported. The behavior of $P(Y\g \mbX)$ outside this manifold is
unconstrained, hence non-identifiable.\footnote{This
non-identifiability of $P(Y\g \mbX)$ is also why, for high-dimensional
$\mbX$, directly fitting a neural network for $P(Y\g
\mbX)$ can pick up spurious correlations and fail in
out-of-distribution prediction. Similar failure can also occur with
linear regression when $\mbX$ is approximately low rank; e.g., when
different dimensions of $\mbX$ are highly correlated.}

\begin{figure}[t]
  \captionsetup[subfigure]{justification=centering}
\begin{subfigure}{0.32\textwidth}
  \centering
    \includegraphics[width=\linewidth]{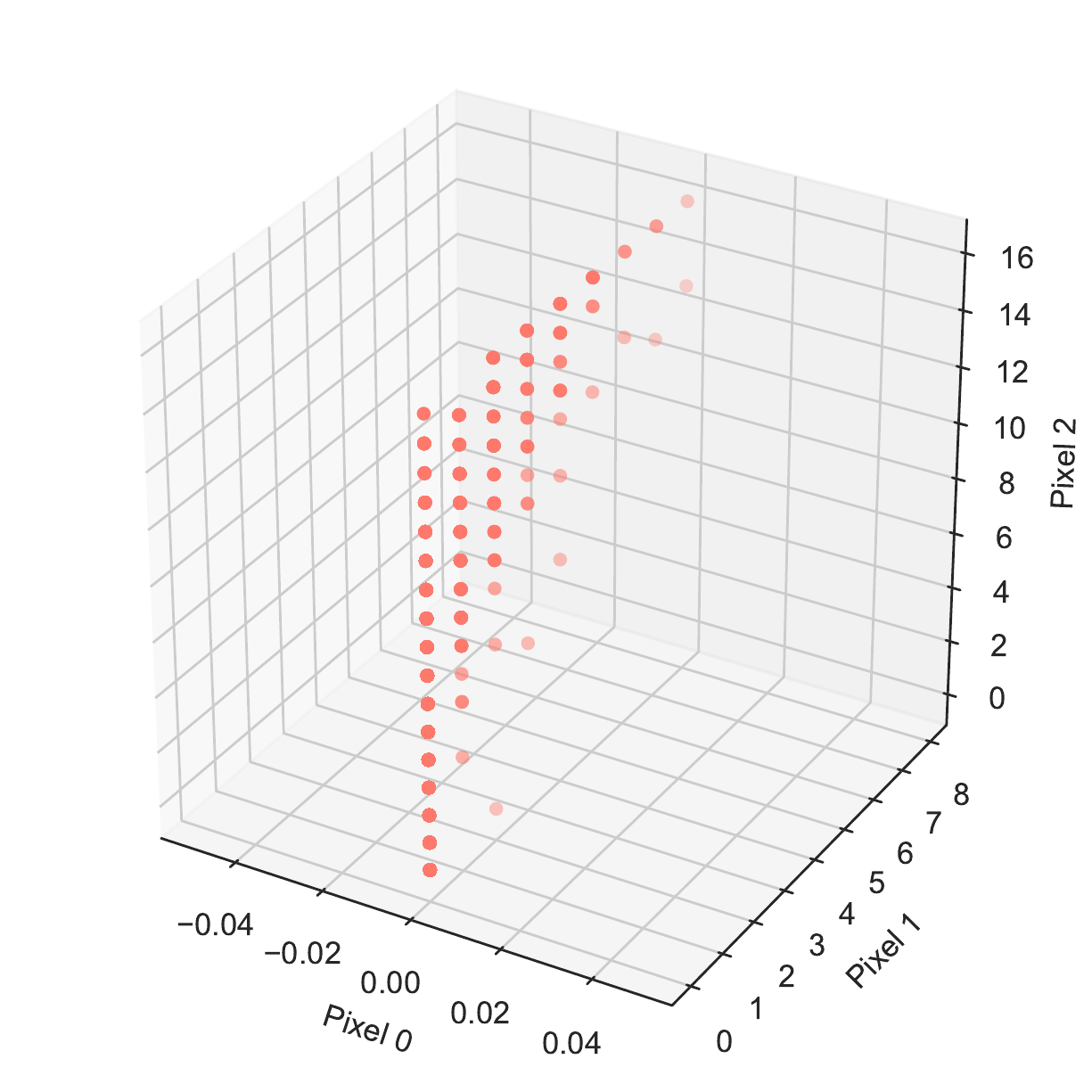}
  \centering 
  \caption{High-dim. image data: \\
  MNIST~\citep{deng2012mnist}\label{fig:high-dim-mnist}}
\end{subfigure}
\begin{subfigure}{0.32\textwidth}
  \centering
    \includegraphics[width=\linewidth]{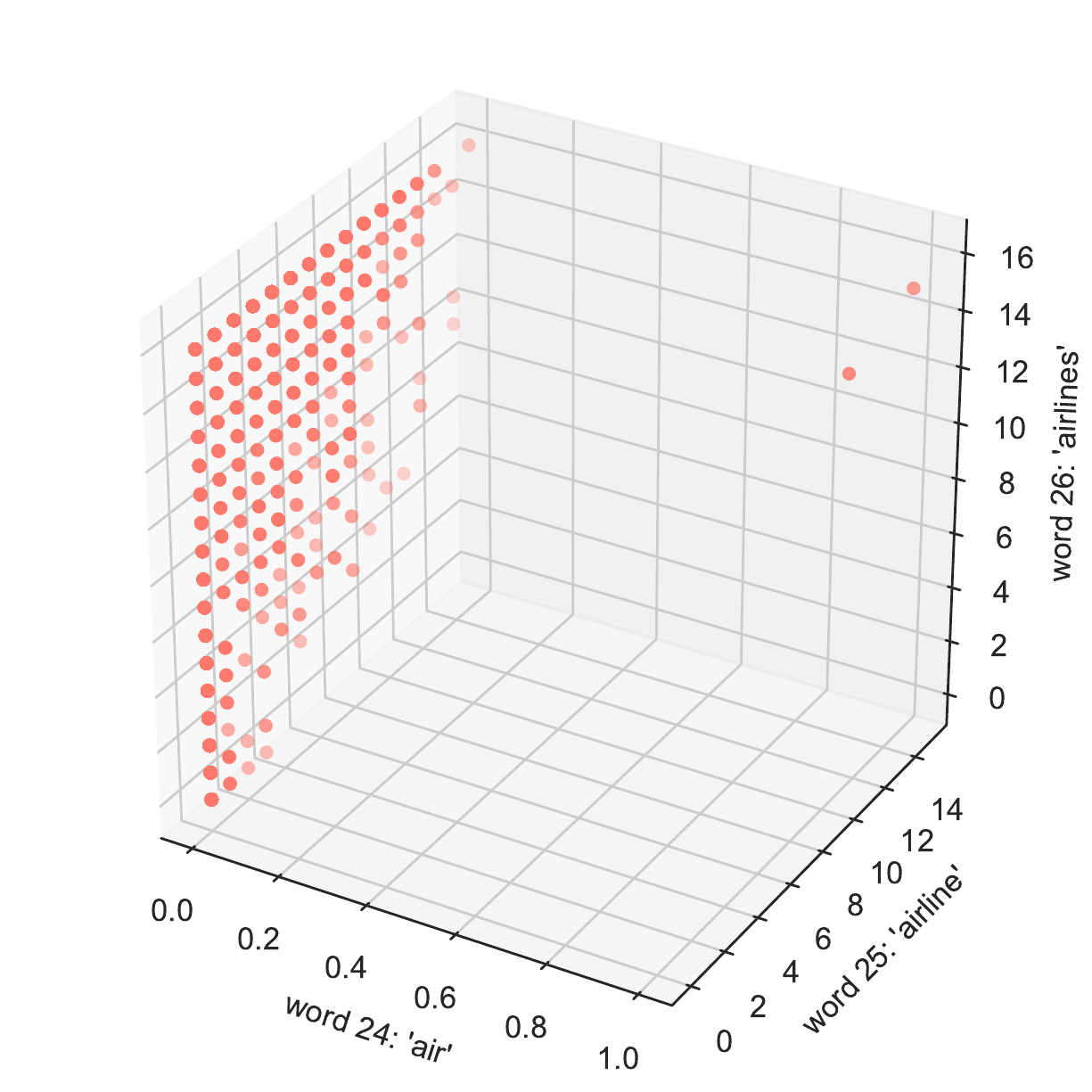}
  \centering 
  \caption{High-dim. text data: \\
  Airline tweets\label{fig:high-dim-tweet}}
  \end{subfigure}
\centering
\begin{subfigure}{0.32\textwidth}
  \centering
    \includegraphics[width=\linewidth]{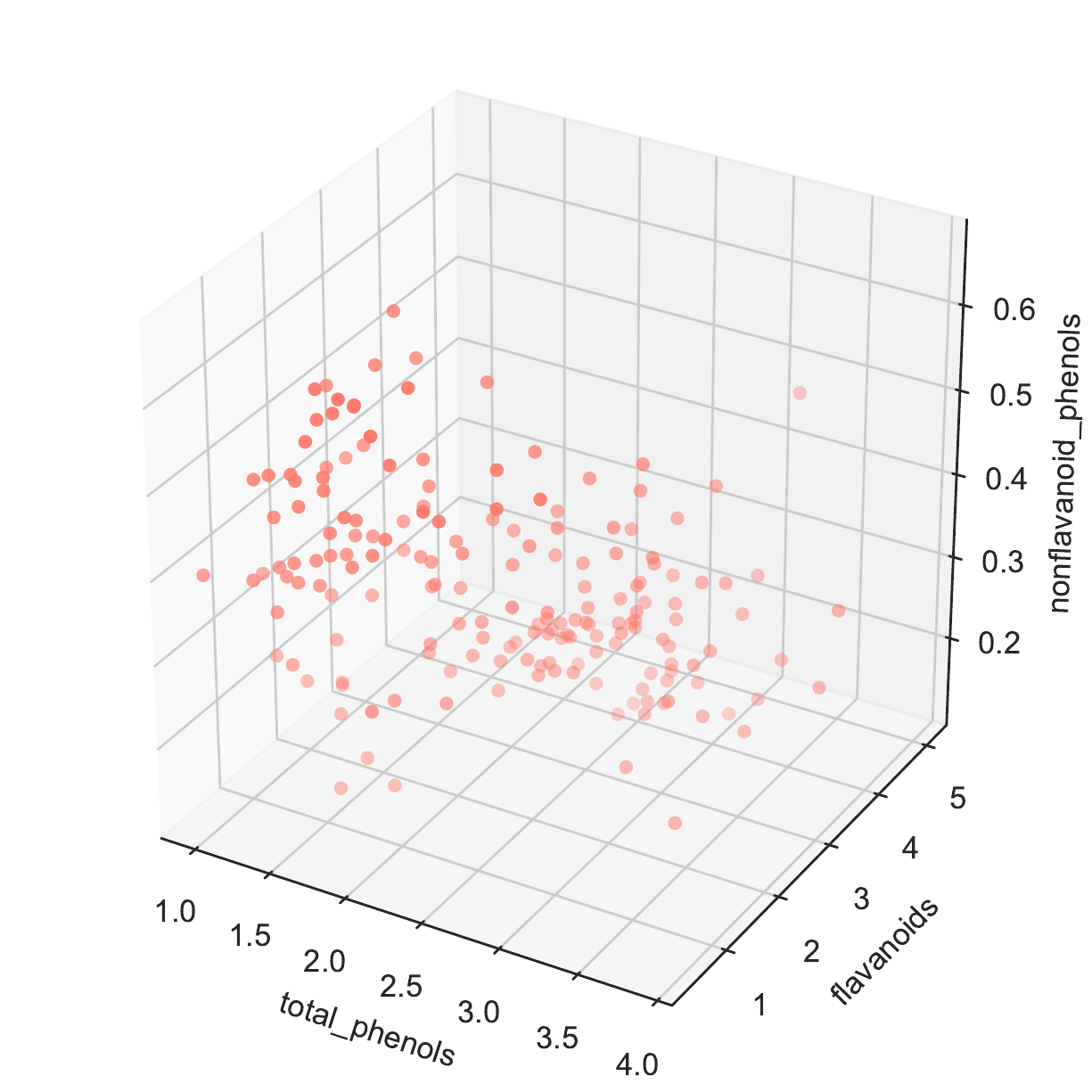}
  \centering
  \caption{Low-dim. data: \\ Wine
  features\label{fig:low-dim-wine}}
\end{subfigure}
\caption{High-dimensional data\protect\footnotemark  such as images
(\Cref{fig:high-dim-mnist}) or text~(\Cref{fig:high-dim-tweet}) are
often only supported on a low-dimensional manifold. Low-dimensional
data, e.g. the wine dataset~(\Cref{fig:low-dim-wine}), is often
supported in the whole space.}
\label{fig:high-dim-data}
\end{figure}

\footnotetext{https://www.kaggle.com/crowdflower/twitter-airline-sentiment, https://archive.ics.uci.edu/ml/datasets/wine}

In more detail, suppose $p(\mbx, y)$, $p(\mbx)$, $p(y\g \mbx)$ denote
the relevant densities (and assume they exist). Any $h_0(\cdot,
\cdot)$ function that satisfies
\begin{align}
\label{eq:density-eq}
h_0(\mbx, y)\cdot p(\mbx) = p(\mbx, y) \qquad \forall \mbx \in
\mathcal{X}^m
\end{align}
is a valid density for $p(y\g \mbx)$. Under rank degeneracy, it turns
out that there exist many different functions $h_0(\mbx, y)$ that
satisfy \Cref{eq:density-eq}. Hence $p(y\g \mbx)$ is non-identifiable.
The reason is that rank degeneracy implies $p(\mbx) = 0$ for $\mbx \in
\widetilde{\mathcal{X}}
\subseteq \mathcal{X}^m$, where $\widetilde{\mathcal{X}}$ is a set with
positive measure. If some function $h'_0$ satisfies
\Cref{eq:density-eq}, then the function $h''_0$ would also satisfy
\Cref{eq:density-eq} if they differ only on the set $(\mbx, y)
\in \widetilde{\mathcal{X}}\times \mathcal{Y}$. Note that $h'_0$ and
$h''_0$ are not almost surely equal; they differ on a set
$\widetilde{\mathcal{X}}$ with positive measure. Thus, $h'_0$ and
$h''_0$ are two different densities both valid for $p(y\g \mbx)$,
which implies the non-identifiability of $p(y\g \mbx)$.

As a more concrete example, consider a high-dimensional vector of
image pixels  $\mbX$ that lives on a low-dimensional manifold; i.e.,
such that $X_j - g_0(\{X_1, \ldots, X_m\}\backslash X_j)$ is
identically zero in the observational
data~\citep{kingma2014auto,goodfellow2014generative}. This rank
degeneracy implies that for any $p(y\g \mbx)=h_0(\mbx, y)$ compatible
with the observational data distribution, the conditional $p(y\g \mbx)
= h_0(\mbx, y) + \alpha\cdot (x_j - g_0(\{x_1, \ldots, x_m\}\backslash
x_j)), \forall \alpha\in \mathbb{R}$, is also compatible with the
observational data.

This fundamental non-identifiability of $P(Y\g \mbX)$ with
high-dimensional $\mbX$ suggests that we cannot hope to identify
functional interventions $P(Y\g\rmdo(f(\mbX)=\mbz))$ where the
function $f$ non-trivially depends on all $X_1, \ldots, X_m$, or any
of its subset that are also rank degenerate.

\glsreset{SCM}

\parhead{Causal identification of $P(Y\g \rmdo(f(\mbX))$ for a
restricted set of $f$.} Given the fundamental non-identifiability of
$P(Y\g \mbX)$ with high-dimensional $\mbX=(X_1, \ldots, X_m)$, we
restrict our attention to representations that only nontrivially
depends on a ``full-rank'' subset; i.e., $\mbZ=f(\mbX) =
\tilde{f}((X_j)_{j\in S})$, for some function
$\tilde{f}:\mathcal{X}^{|S|}\rightarrow \mathbb{R}^d$, and a set
$S\subseteq \{1, \ldots, m\}$, where $p((x_j)_{j\in S}) > 0$ for all
values $(x_j)_{j\in S}\in \mathcal{X}^{|S|}$. We term this requirement
``observability.''

Focusing on such representations $f(\mbX) =
\tilde{f}((X_j)_{j\in S})$, we calculate its intervention
distributions by returning to the definition of functional
interventions (\Cref{defn:functional-intervention}),
\begin{align}
\label{eq:subset-intervention-def}
P(Y\g \rmdo(f(\mbX)=\mbz)) = \int P(Y\g (X_j)_{j\in S}, \mbC) P((X_j)_{j\in S}\g
\mbC, f(\mbX)=\mbz) P(\mbC)\dif (X_j)_{j\in S} \dif \mbC.
\end{align}
\Cref{eq:subset-intervention-def} implies that $P(Y\g
\rmdo(f(\mbX)=\mbz))$ is identifiable as long as one can identify the
unobserved common cause $\mbC$ from data. This condition is often
called the ``pinpointability'' (or ``effective observability'') of
$\mbC$~\citep{wang2020towards,wang2019blessings}. The following lemma
summarizes the identification of $P(Y\g
\rmdo(f(\mbX)=\mbz))$ under these conditions.

\begin{lemma}[Identification of $P(Y\g \rmdo(f(\mbX)=\mbz)$]
\label{thm:do-prob-highdim}
Assume the causal graph in \Cref{fig:causalrep-scm}. Suppose the
representation only effectively depends on a subset $(X_j)_{j\in S}$
of $(X_1, \ldots, X_m)$; i.e., $f(\mbX) =
\tilde{f}((X_j)_{j\in S})$ for some function
$\tilde{f}:\mathcal{X}^{|S|}\rightarrow \mathbb{R}^d$ and some set
$S\subseteq\{1, \ldots, m\}$. Then the intervention distribution
$P(Y\g \rmdo(f(\mbX)=\mbz))$  is identifiable by
\begin{align} P(Y\g \rmdo(f(\mbX)=\mbz))=\int P(Y\g
f(\mbX)=\mbz, h(\mbX))\cdot P(h(\mbX))  \dif h(\mbX),
\label{eq:id-do-fx}
\end{align}
if the following conditions are satisfied:
\begin{enumerate}
\item (pinpointability) the unobserved common cause $\mbC$ is
pinpointable; i.e., $P(\mbC\g \mbX) = \delta_{h(\mbX)}$ for a
deterministic function $h$ known up to bijective transformations,
\item (positivity) $(X_j)_{j\in S}$ satisfies the positivity condition
given $\mbC$; i.e., $P((X_j)_{j\in S}\in \widetilde{\mathcal{X}}\g
\mbC) > 0$ for any set
$\widetilde{\mathcal{X}}\subset\mathcal{X}^{|S|}$ such that
$P((X_j)_{j\in S}\in
\widetilde{\mathcal{X}})>0$,
\item (observability) $P((X_j)_{j\in S} \in
\widetilde{\mathcal{X}})>0$ for all subsets
$\widetilde{\mathcal{X}}\subset\mathcal{X}^{|S|}$ with a positive
measure.
\end{enumerate}
\end{lemma}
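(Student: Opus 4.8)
The plan is to begin from the restricted functional-intervention formula~\eqref{eq:subset-intervention-def}, which is already available to us: it follows from the absence of unobserved confounding between $\mbX$ and $Y$ in \Cref{fig:causalrep-scm} (so that $P(Y\g\rmdo(\mbX),\mbC)=P(Y\g\mbX)$) after marginalizing out the coordinates of $\mbX$ outside $S$. From there I would rewrite every factor of the integrand in terms of quantities that are functions of the observables $(\mbX,Y)$, in three steps: (i) use pinpointability to replace the unobserved confounder $\mbC$ by its surrogate $h(\mbX)$; (ii) use the tower property to collapse the integral over $(X_j)_{j\in S}$ back into the single conditional $P(Y\g f(\mbX)=\mbz, h(\mbX))$; and (iii) verify that positivity and observability are precisely what make all of the conditionals appearing in the argument well-defined and identifiable from $P(\mbX,Y)$.

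For step (i), Assumption~1 gives $P(\mbC\g\mbX)=\delta_{h(\mbX)}$, so $\mbC$ equals the deterministic function $h(\mbX)$ of the observables $P$-almost surely. Hence in~\eqref{eq:subset-intervention-def} I may substitute $P(\mbC)$ by the push-forward law $P(h(\mbX))$ and replace $P(Y\g(X_j)_{j\in S},\mbC)$ and $P((X_j)_{j\in S}\g\mbC, f(\mbX)=\mbz)$ by $P(Y\g(X_j)_{j\in S}, h(\mbX))$ and $P((X_j)_{j\in S}\g h(\mbX), f(\mbX)=\mbz)$, respectively. Since $h$ is known only up to a bijection, I would also observe, by a change-of-variables argument, that the resulting expression is invariant under replacing $h$ by any $\psi\circ h$ with $\psi$ bijective, so this substitution introduces no ambiguity.

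For step (ii), fix a value $c$ of $h(\mbX)$ and examine the inner integral $\int P(Y\g (X_j)_{j\in S}=x_S, h(\mbX)=c)\,P((X_j)_{j\in S}=x_S\g h(\mbX)=c, f(\mbX)=\mbz)\,\dif x_S$. Because $f(\mbX)=\tilde f((X_j)_{j\in S})$, the second factor is supported on the fiber $\{x_S:\tilde f(x_S)=\mbz\}$, and on that fiber the event $\{(X_j)_{j\in S}=x_S\}$ is contained in $\{f(\mbX)=\mbz\}$, so conditioning additionally on $f(\mbX)=\mbz$ is redundant and $P(Y\g x_S, h(\mbX)=c)=P(Y\g x_S, f(\mbX)=\mbz, h(\mbX)=c)$ there. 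The inner integral is then just the law of total probability for $Y$ given $\{f(\mbX)=\mbz, h(\mbX)=c\}$, marginalized over $(X_j)_{j\in S}$, hence equals $P(Y\g f(\mbX)=\mbz, h(\mbX)=c)$; integrating against $P(h(\mbX))$ gives exactly~\eqref{eq:id-do-fx}.

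Finally, for step (iii), positivity of $(X_j)_{j\in S}$ given $\mbC$ (Assumption~2) ensures $P(f(\mbX)=\mbz\g\mbC)>0$ --- i.e.\ the fiber $\tilde f^{-1}(\mbz)$ has positive conditional mass --- so that $P((X_j)_{j\in S}\g\mbC, f(\mbX)=\mbz)$, and hence the functional intervention itself, is well-defined; combined with observability (Assumption~3), which says the marginal support of $(X_j)_{j\in S}$ is all of $\mathcal{X}^{|S|}$, it also guarantees that $P(Y\g f(\mbX)=\mbz, h(\mbX))$ is pinned down by $P(\mbX,Y)$ rather than left undetermined by the rank-degeneracy non-identifiability discussed above. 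I expect step (ii) to be the main obstacle: the delicate point is to justify with measure-theoretic rigor (working with densities or regular conditional distributions) that conditioning on the redundant event $f(\mbX)=\mbz$ is innocuous and that recombining the $x_S$-integral with the pinpointed conditional exactly reconstitutes $P(Y\g f(\mbX)=\mbz, h(\mbX))$ --- especially since $h(\mbX)$ depends on all of $\mbX$, not only on $(X_j)_{j\in S}$, so the interplay between conditioning on $h(\mbX)$ and on $(X_j)_{j\in S}$ must be tracked carefully.
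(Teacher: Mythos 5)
Your proposal follows essentially the same route as the paper's proof: both start from the restricted functional-intervention formula (\Cref{eq:subset-intervention-def}), note that conditioning on $f(\mbX)=\mbz$ is redundant given $(X_j)_{j\in S}$ since $f$ factors through $\tilde f$, collapse the integral over $(X_j)_{j\in S}$ by the law of total probability to obtain $P(Y\g f(\mbX)=\mbz, \mbC)$, invoke positivity and observability to ensure these conditionals are well-defined and estimable, and use pinpointability to replace $\mbC$ by $h(\mbX)$. The only differences are cosmetic (you apply pinpointability before rather than after the collapse, and you add the bijection-invariance remark), so the argument is correct and matches the paper.
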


\Cref{thm:do-prob-highdim} describes a set of representations
$\mbZ=f(\mbX)$ for which can we evaluate their intervention
distribution, and hence the lower bound of their \gls{PNS} for $Y$.
(The proof is in \Cref{sec:do-prob-highdim-proof}.)

\Cref{thm:do-prob-highdim} is based on three conditions. The
pinpointability ensures that the terms in
\Cref{eq:subset-intervention-def} involving $\mbC$ (e.g. $P(\mbC),
P((X_j)_{j\in S}\g \mbC, f(\mbX)=\mbz)$) are estimable from
observational data. We note that pinpointability serves a different
purpose here than in~\citet{wang2019blessings,wang2020towards} and
\citet{puli2020causal}. We invoke pinpointability for
\Cref{thm:do-prob-highdim} because functional interventions require
the knowledge of $\mbC$; there is no unobserved confounding issue. In
contrast, \citet{wang2019blessings,wang2020towards} and
\citet{puli2020causal} invoke pinpointability to handle unobserved
confounders.

Pinpointability is often approximately satisfied when the
high-dimensional $\mbX$ is driven by a low-dimensional factor $\mbC$.
In this case, $P(\mbC\g \mbX)$ is often close to a point
mass~\citep{chen2020structured}. For example, when the $m$-dimensional
data $\mbX$ is generated by a known $d$-dimensional probabilistic
factor model, then the latent factor $\mbC$ is approximately
pinpointable when $m \gg d$. In these probabilistic factor models,
$P(\mbC\g \mbX)$ becomes increasingly closer to a point mass as
$m\rightarrow \infty$ but~$d$
fixed~\citep{chen2020structured,bai2016maximum,wang2019frequentist}.
In practice, one assess pinpointability by fitting probabilistic
factor models to $\mbX$. We will discuss these practical details in
the next section.

The second condition of \Cref{thm:do-prob-highdim} is the positivity
of $(X_j)_{j\in S}$ given $\mbC=h(\mbX)$, also known as the overlap
condition~\citep{imbens2015causal}. This condition ensures that $P(Y\g
(X_j)_{j\in S}, \mbC)$ in \Cref{eq:subset-intervention-def} is
estimable from observational data. Positivity loosely requires that
all values of $(X_j)_{j\in S}$ are possible conditional on
$\mbC=h(\mbX)$. For example, it is violated when $S=\{1, \ldots, n\}$
because it is impossible to observe $\mbX=\mbx$ and $\mbC=\mbc$
simultaneously when $h(\mbx)\ne \mbc$. In practice, positivity is more
likely to be satisfied when $(X_j)_{j\in S}$ is a low-dimensional
vector; e.g., the image representation only depends on a few image
pixels that are far away from each other.

Finally, the observability condition requires that it must be possible
to observe all possible values of~$(X_j)_{j\in S}$. Together with the
positivity condition, this implies that $P((X_j)_{j\in S}\g \mbC)>0$
for all~$(X_j)_{j\in S}$ and, therefore, $P(Y\g (X_j)_{j\in S}, \mbC)$
in \Cref{eq:id-do-fx} is estimable from observational data. This
condition is violated if the observed $(X_j)_{j\in S}$ is
rank-degenerate; e.g., $(X_j)_{j\in S}$ with $S=\{1, \ldots, m\}$
represents the whole high-dimensional image vector supported only on a
low-dimensional manifold. This rank degeneracy would render
$P(Y\g(X_j)_{j\in S}, C)$ non-identifiable, for the same reason as why
$P(Y\g \mbX)$ is non-identifiable with rank-degenerate $\mbX$. In
practice, this observability condition is more likely to hold when
$(X_j)_{j\in S}$ is low-dimensional.

\Cref{thm:do-prob-highdim} describes a class of representations
$\mbZ=f(\mbX)$ whose intervention distributions are identifiable from observational data. Combining
\Cref{thm:do-prob-highdim,thm:pns-id} allows us to lower bound the
\gls{PNS} of these representations and evaluate their efficiency and
non-spuriousness.

\begin{thm}[Evaluating efficiency and non-spuriousness with
observational data]
\label{corollary:pns-final}
Under the assumptions of \Cref{thm:do-prob-highdim}, the efficiency
and non-spuriousness of the representation
$\mbZ=f(\mbX)=\tilde{f}((X_j)_{j\in S})$ in the dataset $\{\mbx_i,
y_i\}_{i=1}^n$ is lower bounded by
\begin{align}
\gls{PNS}_n(f(\mbX), Y)&\geq \underline{\gls{PNS}_n(f(\mbX), Y)} \nonumber\\
&\triangleq\prod_{i=1}^n \int
\left[P(Y=y_i \g f(\mbX) = f(\mbx_i), \mbC)\right.\label{eq:pns-final}\\
&\qquad\left.- P(Y=y_i \g f(\mbX) \ne
f(\mbx_i), \mbC)
\right]\cdot P(\mbC) \dif \mbC,\nonumber
\end{align}
where $\mbC=h(\mbX)$ is the unobserved common cause pinpointed by the
observational data $\mbX$ in \Cref{thm:do-prob-highdim}. Similarly,
the conditional efficiency and non-spuriousness of the $j$th dimension
of $(f_1(\mbX), \ldots, f_d(\mbX))$ is lower bounded by
\begin{align}
\gls{PNS}_n(f_j(\mbX), Y \g f_{-j}(\mbX))&\geq \underline{\gls{PNS}_n(f_j(\mbX), Y \g f_{-j}(\mbX))} \nonumber\\
&\triangleq \prod_{i=1}^n \int
\left[P(Y=y_i \g f_j(\mbX) = f_j(\mbx_i), f_{-j}(\mbX) =
f_{-j}(\mbx_i), \mbC)\right.\label{eq:pns-final-conditional}\\
&\qquad\left.- P(Y=y_i \g f_j(\mbX) \ne f_j(\mbx_i),
f_{-j}(\mbX) = f_{-j}(\mbx_i), \mbC)\right]\cdot P(\mbC) \dif \mbC.\nonumber
\end{align}
\end{thm}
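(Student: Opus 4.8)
The plan is to read off \Cref{corollary:pns-final} as an essentially immediate consequence of the two preceding lemmas, applied pointwise and then aggregated through the product that defines $\gls{PNS}_n$ in \Cref{eq:pns-dataset}. Fix a data point $(\mbx_i,y_i)$ and write $\mbz_i=f(\mbx_i)$. Instantiating \Cref{thm:pns-id} at $\mbZ=\mbz_i$, $Y=y_i$ gives the purely interventional lower bound
\[
\gls{PNS}_{\mbZ=\mbz_i,Y=y_i}\;\ge\;P(Y=y_i\g\rmdo(\mbZ=\mbz_i))-P(Y=y_i\g\rmdo(\mbZ\ne\mbz_i)),
\]
so the whole task reduces to rewriting the two interventional terms as functionals of the observational distribution.

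For the first term, invoke \Cref{thm:do-prob-highdim}: since $f(\mbX)=\tilde f((X_j)_{j\in S})$ and pinpointability, positivity, and observability all hold by hypothesis, \Cref{eq:id-do-fx} gives $P(Y=y_i\g\rmdo(f(\mbX)=\mbz_i))=\int P(Y=y_i\g f(\mbX)=\mbz_i,\mbC)\,P(\mbC)\dif\mbC$ with $\mbC=h(\mbX)$. For the second term, use the soft-intervention expansion already recorded in the text, $P(Y\g\rmdo(\mbZ\ne\mbz_i))=\int P(Y\g\rmdo(\mbZ=\mbz'))\,P(\mbz'\g\mbZ\ne\mbz_i)\dif\mbz'$, substitute the identification formula for $P(Y\g\rmdo(\mbZ=\mbz'))$ under the integral, and exchange the order of integration (everything nonnegative, so Fubini--Tonelli applies) to collapse the inner integral against $P(\mbz'\g\mbZ\ne\mbz_i)$ into the single conditional expectation $\int P(Y=y_i\g f(\mbX)\ne\mbz_i,\mbC)\,P(\mbC)\dif\mbC$. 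Taking the difference of the two identified expressions produces exactly the $i$th factor appearing on the right-hand side of \Cref{eq:pns-final}.

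To aggregate over the dataset, recall $\gls{PNS}_n(f(\mbX),Y)=\prod_{i=1}^n\gls{PNS}_{\mbZ=\mbz_i,Y=y_i}$ by \Cref{eq:pns-dataset}; each factor is a probability in $[0,1]$ and each per-point lower bound may be clipped to $[0,1]$ (negative differences being vacuous since $\gls{PNS}\ge 0$), whence the product of the per-point bounds lower-bounds the product of the probabilities, which is $\underline{\gls{PNS}_n(f(\mbX),Y)}$. The conditional statement \Cref{eq:pns-final-conditional} is proved identically: one fixes the event $\{f_{-j}(\mbX)=f_{-j}(\mbx_i)\}$, observes that $f_{-j}(\mbX)$ is a deterministic function of $(X_j)_{j\in S}$ so the hypotheses of \Cref{thm:do-prob-highdim} remain in force for the residual functional intervention on $Z_j$, and re-runs the same chain of reductions with this event appended to every conditioning set, then multiplies over $i$.

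The pointwise substitution is routine; the step most in need of care is the $\rmdo(\mbZ\ne\mbz_i)$ term. Specifically, one must verify that the marginal reweighting $P(\mbz'\g\mbZ\ne\mbz_i)$ interacts correctly with the $\mbC$-integral in \Cref{eq:id-do-fx}, so that swapping the integrals genuinely yields $P(Y=y_i\g f(\mbX)\ne\mbz_i,\mbC)$ rather than an expression still entangled with the pre-intervention law of $\mbZ$ given $\mbC$, and one must check that all the conditional densities in play---$P(Y\g f(\mbX)=\mbz,\mbC)$ on the measure-zero slice $\{f(\mbX)=\mbz\}$, and its conditional-on-$f_{-j}$ analogue---are well-defined; this is exactly what the positivity and observability hypotheses of \Cref{thm:do-prob-highdim} are for. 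A secondary point worth making explicit is the $[0,1]$/nonnegativity remark used to pass from the pointwise bounds to the product, since without it a product of possibly-negative lower bounds need not bound the product of the probabilities.
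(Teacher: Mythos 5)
Your proposal is correct and follows essentially the same route as the paper, which simply declares \Cref{corollary:pns-final} an immediate consequence of \Cref{thm:pns-id} and \Cref{thm:do-prob-highdim}: apply the interventional lower bound pointwise, identify $P(Y\g\rmdo(f(\mbX)=\mbz_i))$ and $P(Y\g\rmdo(f(\mbX)\ne\mbz_i))$ via the pinpointed $\mbC$, and multiply over the $n$ data points. The two caveats you flag---the interaction of the $P(\mbz'\g\mbZ\ne\mbz_i)$ reweighting with the $\mbC$-integral, and the need to clip negative per-point differences so the product inequality survives---are real details the paper glosses over, and your treatment of them is a sensible completion rather than a departure.
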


\Cref{corollary:pns-final} is an immediate consequence of
\Cref{thm:do-prob-highdim,thm:pns-id}. It suggests that we evaluate
efficiency and non-spuriousness using
\Cref{eq:pns-final,eq:pns-final-conditional} in practice. We
operationalize this result in the next section.

\subsubsection{Measuring efficiency and non-spuriousness in practice}

\label{subsubsec:pns_practical}

We operationalize \Cref{corollary:pns-final} to measure efficiency and
non-spuriousness for a representation~$f(\mbX)$. The full algorithm is
in \Cref{alg:calculate-pns}. The algorithm involves three steps: (1)
Pinpoint the unobserved common cause $\mbC$; (2) Estimate the
conditional $P(Y\g f(\mbX), \mbC)$; (3) Calculate the lower bounds in
\Cref{corollary:pns-final}
(\Cref{eq:pns-final,eq:pns-final-conditional}) to measure efficiency
and non-spuriousness. We discuss each step below.

\parhead{Pinpointing the unobserved common cause $\mbC$.}
\Cref{corollary:pns-final} requires that the unobserved common cause
$\mbC$ must be pinpointable by the observational data: $P(\mbC\g
\mbX)=\delta_{h(\mbX)}$ for some deterministic function~$h$. Moreover,
this pinpointing function $h$ needs to be known up to bijective
transformations. How can we assess pinpointability and extract $h$ in
practice?

One can assess pinpointability by fitting a probabilistic factor
model, viewed as a causal generative model; e.g.,
\glsreset{PPCA}\gls{PPCA}~\citep{tipping1999probabilistic},
\glsreset{VAE}\gls{VAE}~\citep{kingma2014auto,rezende2014stochastic},
mixture models~\citep{mclachlan1988mixture}, or mixed membership
models~\citep{pritchard2000inference,blei2003latent,airoldi2008mixed,erosheva2005}.

Specifically, suppose the dataset of $n$ i.i.d. data points
$(\mbx_i)_{i=1}^n$ is assumed to be generated by a PPCA model, then
one can assess pinpointability by first fitting a \gls{PPCA} model,
\begin{align}
\label{eq:ppca}
\mbc_{i}  &\stackrel{i.i.d.}{\sim} \cN(0, I_K), &i=1, \ldots, n, \\
x_{il}\g \mbc_i & \stackrel{i.i.d.}{\sim} \cN(\mbc_i^\top \theta_l, \sigma^2), &l=1, \ldots, m,
\end{align}
where $\mbc_i$ is the latent variable for each data point
$\mbx_i=(x_{i1}, \ldots, x_{im})$, and $\boldsymbol{\theta}=(\theta_1,
\ldots, \theta_m)$ is the set of parameters. One then infers the
parameters $\boldsymbol{\theta}$ and the posterior $p(\mbc_i\g
\mbx_i)$ for each $i$, using standard posterior inference algorithms
like variational inference~\citep{blei2017variational} or Markov chain
Monte Carlo methods~\citep{gilks1995markov}. Pinpointability then
approximately holds if $p(\mbc_i\g \mbx_i)$ is close to a point mass
for all $i$'s.

The choice of the latent dimensionality $K$ is a challenging task. We
choose $K$ in practice by searching for a factor model that can both
fit data well (e.g., via a posterior predictive
check~\citep{gelman1996posterior}) and approximately satisfy
pinpointability. A larger $d$ tends to have a better fit to the data,
while a smaller $d$ is more likely to satisfy pinpointability. We
proceed if pinpointability holds, along with the other
conditions---observability and positivity--from
\Cref{corollary:pns-final}.

Given a fitted factor model, one can extract the pinpointing function
$h$ by computing the posterior mean, $h(\mbx) \approx \E{}{\mbc_i \g
\mbx_i=\mbx}$. The same procedure applies if the data is generated by
other probabilistic factor models.

A reader might ask: Why probabilistic factor models? The reason is
that the causal graph \Cref{fig:causalrep-scm} encodes a conditional
independence structure that coincides with the defining feature of
probabilistic factor models. In more detail, \Cref{fig:causalrep-scm}
assumes that each dimension of the data $\mbX=(X_1, \ldots, X_m)$ is
rendered conditionally independent given an unobserved common cause
$\mbC$. This conditional independence is precisely the defining
structure of probabilistic factor models,
\begin{align}
\label{eq:factor-1}
\mbc_i&\sim p(\cdot\s \lambda_\mbC),&i=1, \ldots, n,\\
\label{eq:factor-2}
x_{il}\g \mbc_i &\sim p(\cdot \g \mbc_i\s \theta_{l}), &l= 1, \ldots, m,
\end{align}
with i.i.d. data $\mbx_i = (x_{i1}, \ldots, x_{im})$ and latent
$\mbc_i$. We therefore use probabilistic factor models to assess the
pinpointability of $\mbC$ in \Cref{fig:causalrep-scm}.

\parhead{Calculate the conditional $P(Y\g f(\mbX), \mbC)$.} After
pinpointing $\mbC$, we still need the conditional $P(Y\g f(\mbX),
\mbC)$ to calculate the lower bound in \Cref{corollary:pns-final}. 

To estimate $P(Y\g f(\mbX), \mbC)$, we can fit a model to the
observational data $(\mbx_i, y_i)_{i=1}^n$, with $f(\cdot)$ being the
representation function of interest and $\mbc_i=h(\mbx_i)$ estimated
from the first step. As an example, we may posit a linear model,
\begin{align}
\label{eq:y-model-C}
P(Y\g f(\mbX), \mbC) = \cN((\beta_0 + \boldsymbol{\beta}^\top f(\mbX)  +
\boldsymbol{\gamma}^\top \mbC), \sigma^2),
\end{align} 
and estimate the parameters $\boldsymbol{\beta}=(\beta_1, \ldots,
\beta_d)$ and $\boldsymbol{\gamma}$ via maximum likelihood. One may
also posit other more flexible models like a partially linear model
(with a nonlinear $f_C$) and target a categorical outcome:
\begin{align}
\label{eq:softmax-y-model-C}
P(Y\g f(\mbX), \mbC) = \mathrm{Categorial}(\mathrm{softmax}(\beta_0 +
\boldsymbol{\beta}^\top f(\mbX)  + \gamma^\top f_C(\mbC)).
\end{align}

\parhead{Calculate the lower bounds in \Cref{corollary:pns-final}.} We
finally calculate the lower bounds in
\Cref{eq:pns-final,eq:pns-final-conditional} using the pinpointed
$\mbC$ and the estimated conditional $P(Y\g f(\mbX),
\mbC)$ from the previous two steps.

As an example, we calculate the lower bound of the conditional
efficiency and non-spuriousness $\underline{\gls{PNS}_n(f_j(\mbX), Y
\g f_{-j}(\mbX))}$ with the linear model (see \Cref{sec:calculate-pns-linear} for the detailed derivation):
\begin{align}
&\log \underline{\gls{PNS}_n(f_j(\mbX), Y \g f_{-j}(\mbX))}\nonumber\\
&\approx \left(\frac{1}{2\sigma^2}\left[
\sum_{i=1}^n(\beta_j\cdot (f_j(\mbx_i) -\E{}{f_j(\mbx_i)}))^2\right.\right.\nonumber\\
&\left.\left.\qquad + 2\cdot\sum_{i=1}^n
\beta_j\cdot (f_j(\mbx_i) -\E{}{f_j(\mbx_i)}) \cdot  \gamma^\top
(\mbc_i-\E{}{\mbc_i})\right]\right)+\mathrm{constant}.\label{eq:pns-conditional-lower-last-step}
\end{align}
Similarly, we can obtain the lower bound of the (unconditional)
efficiency and non-spuriousness,
\begin{align}
&\log \underline{\gls{PNS}_n(f(\mbX), Y)}\nonumber\\
&\approx \left(\frac{1}{2\sigma^2}\sum_{i=1}^n\left[ (\sum_{j=1}^d\beta_j\cdot (f_j(\mbx_i) -\E{}{f_j(\mbx_i)}))^2 \right.\right.\nonumber\\
&\left.\left.\qquad+ 2\cdot \sum_{j=1}^d\beta_j\cdot (f_j(\mbx_i) -\E{}{f_j(\mbx_i)}) \cdot  \gamma^\top (\mbc_i-\E{}{\mbc_i})\right]\right)+\mathrm{constant}.\label{eq:pns-lower-last-step}
\end{align}

\Cref{eq:pns-conditional-lower-last-step,eq:pns-lower-last-step}
illustrate how conditional efficiency and non-spuriousness differ from
the unconditional version. The conditional notion considers each
$\beta_j \cdot f_j(\mbX)$ at a time. The unconditional notion lumps
together all $d$ dimensions of the representation and considers
$\sum_{j=1}^d
\beta_j\cdot f_j(\mbX)$ as a whole. In this sense, the conditional
\gls{PNS} is finer-grained notion than the (unconditional)~\gls{PNS}.


\begin{algorithm}[t]
\small{
\SetKwData{Left}{left}\SetKwData{This}{this}\SetKwData{Up}{up}
\SetKwFunction{Union}{Union}\SetKwFunction{FindCompress}{FindCompress}
\SetKwInOut{Input}{input}\SetKwInOut{Output}{output}
\Input{The observational data and its label $\{\mbx_i,y_i\}_{i=1}^n$;
representation function $f(\cdot)$}; the probabilistic factor model
that generates the data $P(\mbX, \mbC)$.

\Output{The (lower bound of) efficiency and non-spuriousness of representation
$f(\mbX)$ for label $Y$}
\BlankLine
Fit a probabilistic factor model (\Cref{eq:factor-1,eq:factor-2}) and
infer $p(\mbc_i\g \mbx_i)$;

\If{Pinpointability holds, i.e., $p(\mbc_i\g \mbx_i)$ is close to a
point mass for all $i$}{
\If{Observability and positivity (\Cref{corollary:pns-final}) hold}{
	\ForEach{datapoint $i$}{
	Pinpoint the unobserved common cause: $\hat{\mbc}_i=h(\mbx_i)
	\approx
	\E{}{\mbc_i\g \mbx_i}$;}

	Calculate the conditional $P(Y\g f(\mbX), \mbC)$ by fitting a
	model to $\{\mbx_i,\hat{\mbc}_i, y_i\}_{i=1}^n$ (e.g.,
	\Cref{eq:y-model-C,eq:softmax-y-model-C});}

	Calculate the (lower bound of) efficiency and non-spuriousness
	$\underline{\gls{PNS}_n(f(\mbX), Y)}$ and
	$\underline{\gls{PNS}_n(f_j(\mbX), Y \g f_{-j}(\mbX))}, j=1,
	\ldots, d$ (\Cref{eq:pns-final,eq:pns-final-conditional}); }

\caption{Calculating the  (lower bound of) efficiency and non-spuriousness of a representation}
\label{alg:calculate-pns}}
\end{algorithm}

\parhead{A Linear example.} We illustrate \Cref{alg:calculate-pns} on
a toy rank-degenerate image dataset. (We analyze more complex and
higher-dimensional data in \Cref{sec:causalrep-empirical}.) Imagine we
collect a dataset of $n=1000$ images. Each image is characterized by
its values on $m=5$ chosen representative pixels, $\mbX=(X_1, \ldots,
X_5)$, accompanied by a label about the image brightness $Y$. Both the
pixel values $\mbX$ and the labels $Y$ are real-valued.

We simulate such a dataset of image pixels and labels. As the pixel
values are often highly correlated, we generate $\mbx_i=(x_{i1},
\ldots, x_{i5})$ from a multivariate Gaussian distribution with strong
(positive or negative) correlations---all pairwise correlations are >
0.8. As the brightness label only depends on a small number of pixels,
we simulate $y_i$ from a linear model that only uses two of the five
pixels
\begin{align} 
y_i = \beta^*_1 x_{i1} + \beta^*_2 x_{i2} + \epsilon_i, \qquad
\epsilon_i\sim \cN(0, \sigma^2),
\end{align}
where $\beta^*_1=0.5, \beta^*_2=1.0$. 

\sloppy
The goal is to evaluate the efficiency and non-spuriousness of a
representation $f(\mbX) = (X_2, 0.5\cdot X_1 + X_4)$. We apply
\Cref{alg:calculate-pns}: pinpoint the unobserved common cause $\mbC$,
estimate $P(Y\g f(\mbX), \mbC)$,~and finally calculate the lower bound
of efficiency and non-spuriousness $\underline{\gls{PNS}_n(f(\mbX),
Y)}$ and $\underline{\gls{PNS}_n(f_j(\mbX), Y \g f_{-j}(\mbX))}, j=1,
\ldots, d$.

We first pinpoint $\mbC$. Suppose it is known that the data $\mbX$ is
generated by a \gls{PPCA}. We fit a one-dimensional
\gls{PPCA} (\Cref{eq:ppca}) to $\{\mbx_i\}_{i=1}^n$; the latent
variable $c_i$ is thus a scalar. The fit leads to the posterior of
the unobserved common cause, $p(c_i\g
\mbx_i)$. We assess the pinpointability of $\mbC$ by calculating
$\mathrm{Var}(c_i\g \mbx_i)$ for all $i$. We find the variance is
smaller than 0.01 for all $i$, implying that $p(c_i\g
\mbx_i)$ is fairly close to a point mass and pinpointability is
approximately satisfied. (The threshold of 0.01 is a subjective
choice.) We then calculate $\hat{c}_i=\E{}{c_i\g \mbx_i}$ for
all $i$.

We next estimate $P(Y\g f(\mbX), \mbC)$ by fitting a linear model to
the dataset $\{(y_i, f(\mbx_i),
\hat{c}_i)\}_{i=1}^n$ with $f(\mbx_i) = (x_{i2}, 0.5\cdot x_{i1} +
x_{i4})$,
\begin{align}
y_i \sim \cN(\beta_0 + \beta_1 \cdot x_{i2} + \beta_2 \cdot (0.5\cdot
x_{i1} + x_{i4}) + \gamma\cdot \hat{c}_i, \sigma^2).
\end{align}
Fitting this linear model returns the regression coefficients
$\{\hat{\beta}_0, \hat{\beta}_1, \hat{\beta}_2, \hat{\gamma}\}$.

Finally, plugging in these regression coefficients to
\Cref{eq:pns-lower-last-step,eq:pns-conditional-lower-last-step} gives
the efficiency and non-spuriousness, with $f_1(\mbx_i) = x_{i2}$ and
$f_2(\mbx_i) = 0.5\cdot x_{i1} + x_{i4}$.

\subsection{CAUSAL-REP: Learning Efficient and Non-spurious
Representations}

\label{subsec:pns-learn}

We have developed a strategy to evaluate the efficiency and
non-spuriousness of a given representation in \Cref{subsec:pns-id}.
Here we utilize this strategy to improve representation learning in
both supervised and unsupervised settings.

\subsubsection{Representation learning as finding necessary and
sufficient causes}

As representation learning requires efficient and non-spurious
representations, we formulate representation learning as a task of
\emph{finding necessary and sufficient causes} of the label.

\parhead{Representation learning as finding necessary and sufficient
causes.} To operationalize this formulation, we search for a
representation that maximizes the \emph{conditional} efficiency and
non-spuriousness for a given dataset, following \Cref{defn:pns} and
\Cref{eq:pns-conditional-dataset}. Thus we view an ideal
representation as one in which each dimension of the representation
captures features that are essential and non-spurious given all other
dimensions.

We perform representation learning by maximizing the sum of log
\gls{PNS} across all dimensions of the representation,
\begin{align}
\label{eq:max-pns}
\max_{f} \sum_{j=1}^d \log \gls{PNS}_n(f_j(\mbX), Y \g f_{-j}(\mbX)),
\end{align}
where $\gls{PNS}_n(\cdot, Y\g \cdot)$ measures the conditional
efficiency and non-spuriousness of $f_j(\mbX)$ as in
\Cref{eq:pns-conditional-dataset}. 

\parhead{Classes of representation functions.} To perform this
maximization over representations in practice, we consider
parameterized classes of representation functions $f(\cdot)$. One
option is to consider a class of neural network functions with a fixed
architecture, subject to the constraint that each output dimension has
zero mean and unit standard deviation, e.g.,
\begin{multline}
\label{eq:nn-function}
\{f: \text{multilayer perceptrons
$\mathbb{R}^m\rightarrow\mathbb{R}^d$ with two hidden layers of size
512 }\\
\text{s.t. $\{f(\mbx_i)\}_{i=1}^n$ has sample mean 0 and standard error 1}\}.
\end{multline}

Another option is to consider representations that are convex
combinations of the $m$-dimensional data,
\begin{align}
\label{eq:convex-function}
\{f: f(\mbX) = \mbX_{1\times m} \mbW_{m\times d}, \sum_{l=1}^m W_{lj}
= 1, W_{lj}\geq 0, \forall l\in \{1, \ldots, m\}, j\in\{1, \ldots,
d\}\}.
\end{align}
Each dimension of such representations is a convex combination of
$X_1, \ldots, X_m$, that is, $f_j(\mbX) = \sum_{l=1}^m W_{lj} X_l$,
$j=1,\ldots, d$.

A third option is to consider representations that select relevant
subsets of the $m$-dimensional data,
\begin{align}
\label{eq:binary-function}
\{f: f(\mbX) = \mbX_{1\times m} \mbW_{m\times d}, W_{lj} \in \{0,1\},
\sum_{l=1}^m W_{lj} \leq 1, \forall l\in \{1, \ldots, m\}, j\in\{1,
\ldots, d\}\}.
\end{align}
Each dimension of such representations selects one (or none) of $X_1,
\ldots, X_m$. The $d$-dimensional representation $f(\mbX)$ thus
selects at most $d$ features from $X_1, \ldots, X_m$.

\parhead{CAUSAL-REP: Maximizing \gls{PNS} in practice.} Solving
\Cref{eq:max-pns} involves calculating counterfactual quantities such as
conditional \gls{PNS}. As is discussed in \Cref{subsec:pns-id}, these
quantities are not directly calculable without access to the causal
structural equations.

To employ \Cref{eq:max-pns} in practice, we employ the lower bound of
\gls{PNS} derived in \Cref{eq:pns-id}. Specifically, we restrict our
attention to representations whose lower bound of
\gls{PNS} is \emph{identifiable} via \Cref{corollary:pns-final}, and
find the representation that maximizes this lower bound:
\begin{align}
\label{eq:causalrep-final-obb}
\textbf{(CAUSAL-REP objective)  }\quad \max_{f} \sum_{j=1}^d \log
\underline{\gls{PNS}_n(f_j(\mbX), Y \g f_{-j}(\mbX))} + \lambda\cdot
R(f\s \{\mbx_i, y_i, \mbc_i\}_{i=1}^n),
\end{align}
where $\underline{\gls{PNS}_n(f_j(\mbX), Y \g f_{-j}(\mbX))}$ is the
\gls{PNS} lower bound (\Cref{eq:pns-final-conditional}) in
\Cref{corollary:pns-final}. The parameter $\lambda\geq 0$ indicates
regularization strength. The term $R(f\s \{\mbx_i, y_i,
c_i\}_{i=1}^n)$ is a regularization penalty,
\begin{align}
\label{eq:causalrep-reg}
R(f\s \{\mbx_i, y_i, \mbc_i\}_{i=1}^n)\triangleq
\frac{1}{d}\sum_{j=1}^d\log (1-\mathrm{Rsq}(\{f_j(\mbx_i)\s
\mbc_i\}_{i=1}^n)) - \alpha\cdot
\left|\left|W(f)\right|\right|_2^2,
\end{align}
where $\mathrm{Rsq}(\{f_j(\mbx_i)\s \mbc_i\}_{i=1}^n)$ is the
R-squared of regressing the $j$th-dimension of the representation
$f_j(\mbx_i)$ against the unobserved common cause $\mbc_i$. The
quantity $||W(f)||_2$ represents the $L_2$-norm of the $f$ function's
parameters, e.g., all the weight parameters of a neural network $f$;
$\alpha$ is the relative weight of the two regularization penalties.

The first part of the objective (\Cref{eq:causalrep-final-obb}) is the
lower bound of conditional \gls{PNS} developed in
\Cref{corollary:pns-final}. Following the practical discussions in
\Cref{subsec:pns-id}, its calculation requires pinpointing the
unobserved common cause $\mbC=h(\mbX)$ and
fitting a model to $P(Y\g f(\mbX), \mbC)$ (e.g., \Cref{eq:y-model-C}).

The second part of the objective is the regularization penalty. It
encourages representations whose lower bound of \gls{PNS} is
identifiable from observational data. Specifically, these
regularization penalties aim to enforce the positivity and
observability conditions in \Cref{thm:do-prob-highdim}. (The
pinpointability condition is enforced in a separate step.)

In more detail, the first penalty $\frac{1}{d}\sum_{j=1}^d\log
(1-\mathrm{Rsq}(\{f_j(\mbx_i)\s \mbc_i\}_{i=1}^n))$ in
\Cref{eq:causalrep-reg} encourages the positivity of $f(\mbX)$ given
$\mbC$ in \Cref{thm:do-prob-highdim}. The sample R-squared
$\mathrm{Rsq}(\{f_j(\mbx_{i})\s\mbc_i\}_{i=1}^n)$ evaluates the
variation of $f_j(\mbX)$ explainable by $\mbC$. When the R-squared is
equal to one, then the positivity condition is violated. Accordingly,
a close-to-one value of R-squared implies that $f_j(\mbX)$ nearly violates the
positivity condition. The penalty $\frac{1}{d}\sum_{j=1}^d\log
(1-\mathrm{Rsq}(\{f_j(\mbx_i)\s
\mbc_i\}_{i=1}^n))$ takes a large negative value if any dimension of
the representation $f(\mbX)$ nearly violates the positivity condition.

The second penalty $-\alpha\cdot ||W(f)||_2^2$ encourages
representations that satisfy the observability condition.
Specifically, it penalizes the coefficients in front of $(X_1, \ldots,
X_m)$ in the representation function $f$. Imposing a large
regularization parameter $\alpha>0$ leads to representations $f(\mbX)$
that only effectively depend on a small subset of $(X_1,
\ldots, X_m)$; i.e., $f(\mbX)=\tilde{f}((X_j)_{j\in S})$ for some
function $\tilde{f}$, with $S\subseteq \{1,\ldots, m\}$ being a set
with only a few elements. Such representations are more likely to
satisfy the observability condition---namely, $P((X_j)_{j\in S})>0$
for all values of $(X_j)_{j\in S}$---because lower-dimensional
$(X_j)_{j\in S}$ are more likely to have full rank~\citep{udell2019}.

\parhead{Out-of-distribution prediction with CAUSAL-REP.} Given a
representation returned by CAUSAL-REP, how do we make predictions,
especially on out-of-distribution data?

To make predictions, we train a prediction function that maps the
CAUSAL-REP representation to the labels. We note that the predictions
are only made using the CAUSAL-REP representation; it does not involve
the unobserved common cause $\mbC$. This prediction function is
different from the prediction model fitted for $P(Y\g f(\mbX), \mbC)$
(cf.\ \Cref{eq:y-model-C,eq:softmax-y-model-C}). The rationale is that
CAUSAL-REP encourages non-spurious representations; it implies that
the relationship between the representation $f(\mbX)$ and the label
$Y$ should generalize to out-of-distribution data. However, CAUSAL-REP
places no constraints on the relationship between $\mbC$ and $Y$.

Specifically, suppose CAUSAL-REP returns $\hat{f}(\cdot)$ as the
representation function; i.e., it maximizes
\Cref{eq:causalrep-final-obb}. Then we posit a model for $P(Y\g
\hat{f}(\mbX))$; e.g., a linear model:
\begin{align}
\label{eq:pred-linear}
P(Y\g \hat{f}(\mbX)) = \cN(\beta_0^{\mathrm{pred}} + (\boldsymbol{\beta}^{\mathrm{pred}})^\top\hat{f}(\mbX), (\sigma^{\mathrm{pred}})^2),
\end{align}
or a flexible exponential family model:
\begin{align}
\label{eq:pred-nonlinear}
P(Y\g \hat{f}(\mbX)) = \mathrm{EF}(g^{\mathrm{pred}}(\hat{f}(\mbX))),
\end{align}
where $\mathrm{EF}$ indicates an exponential family distribution and
$g^{\mathrm{pred}}$ indicates a link function. We fit this model
to the training data $\{(\hat{f}(\mbx_i), y_i)_{i=1}^n\}$ using
maximum likelihood estimation, and then use this fitted model to
predict on out-of-distribution data; e.g., we calculate $\E{}{(Y\g
\hat{f}(\mbX_{\mathrm{test}}))}.$

\parhead{CAUSAL-REP with perfectly correlated spurious and
non-spurious features.} We have described CAUSAL-REP, a representation
learning algorithm that targets non-spurious and efficient
representations. A reader might ask: What if spurious and non-spurious
features are perfectly correlated in the training dataset? That is,
what if the spurious feature is on if and only if the non-spurious
feature is also on? Intuitively, we can not hope to tease apart
spurious and non-spurious representations in this case. How would
CAUSAL-REP fare?

In such a perfect correlation case, the CAUSAL-REP algorithm would
capture neither the spurious feature nor the non-spurious one. The
reason is that both features are excluded from the feasible set of
representations considered by CAUSAL-REP. Their \gls{PNS} lower bounds
are both not identifiable, but CAUSAL-REP only considers
representations whose \gls{PNS} lower bound is identifiable.

In more detail, suppose $f_s((X_j)_{j\in T_s})$ captures the spurious
feature and $f_n((X_j)_{j\in T_n})$ captures the non-spurious
features, where $T_s, T_n \subset \{1, \ldots, m\}$. They depend on
disjoint subsets of $\mbX$, $T_s \cap T_n = \emptyset$, but they are
perfectly correlated $f_s((X_j)_{j\in T_s}) = f_n((X_j)_{j\in T_n})$.
Then both features must be measurable with respect to the unobserved
common cause $\mbC$: $f_s((X_j)_{j\in T_s}) = f_n((X_j)_{j\in T_n}) =
q(\mbC)$ for some function $q$.\footnote{The reason is that $\mbC$
must render all $X_j$'s conditionally independent due to the causal
graph (\Cref{fig:causalrep-scm}). It must also render the spurious and
non-spurious features conditionally independent, because they depend
on disjoint subsets of $\mbX$.  As the two features are perfectly
correlated, the only way to render them conditionally independent is
to make them measurable with respect to $\mbC$. } This measurability
makes their \gls{PNS} lower bounds non-identifiable; it violates the
positivity assumption required by \Cref{corollary:pns-final}.

\subsubsection{CAUSAL-REP and the linear example continued}

We described each step of CAUSAL-REP in the previous section.
\Cref{alg:causal-rep} summarizes these steps. The key step is to
maximize the CAUSAL-REP objective (\Cref{eq:causalrep-final-obb}).
This step involves a nested loop of optimization: an outer loop with respect to the representation parameters $W(f)$, and
an inner loop to estimate $P(Y\g f(\mbX),
\mbC)$. This nesting is required because calculating the
CAUSAL-REP objective involves estimating $P(Y\g f(\mbX),
\mbC)$ by maximum likelihood.

To maximize the CAUSAL-REP objective, one can use standard
gradient-based algorithms to handle this nested loop. When the inner
loop---the maximum likelihood estimation of $P(Y\g f(\mbX),
\mbC)$---has a closed-form solution (e.g., under the linear model
in \Cref{eq:y-model-C}), we can plug in the solution to the
CAUSAL-REP objective and directly apply a gradient-based method for
optimization. When the inner loop does not admit a closed form, we can alternate between the gradient updates of the two
optimizations, e.g., alternating between (1) one gradient update step
to maximize the CAUSAL-REP objective, and (2) multiple gradient update
steps until convergence to estimate $P(Y\g f(\mbX), \mbC)$ with maximum
likelihood.


\begin{algorithm}[t]
\small{
\SetKwData{Left}{left}\SetKwData{This}{this}\SetKwData{Up}{up}
\SetKwFunction{Union}{Union}\SetKwFunction{FindCompress}{FindCompress}
\SetKwInOut{Input}{input}\SetKwInOut{Output}{output}
\Input{The training data and its label
$\{\mbx^\mathrm{train}_i,y^\mathrm{train}_i\}_{i=1}^{n_\mathrm{train}}$; the
(out-of-distribution) test data
$\{\mbx^\mathrm{test}_i\}_{i=1}^{n_\mathrm{test}}$; the probabilistic
factor model that generates the training data $P(\mbX, \mbC)$}

\Output{CAUSAL-REP representation function $\hat{f}(\cdot)$; predictions on the test data $\hat{y}_i, i=1, \ldots, n_\mathrm{test}$}
\BlankLine
\# Representation learning with CAUSAL-REP

Fit a probabilistic factor model (\Cref{eq:factor-1,eq:factor-2}) to
the training data and infer $p(\mbc_i\g \mbx^\mathrm{train}_i), i=1,
\ldots, n_\mathrm{train}$;

\If{Pinpointability holds, i.e. $p(\mbc_i\g \mbx^\mathrm{train}_i)$ is
close to a point mass for all $i$}{
\ForEach{training datapoint $i$}{
Pinpoint the unobserved common cause $\mbC$:
$\mbc_i=h(\mbx^\mathrm{train}_i)
\triangleq
\E{}{\mbc_i\g \mbx^\mathrm{train}_i}$ for $i=1,
\ldots, n_\mathrm{train}$;}

Maximize \Cref{eq:causalrep-final-obb} to obtain the CAUSAL-REP
representation $\hat{f}$;}
\BlankLine
\# Out-of-distribution prediction with the CAUSAL-REP representation

Estimate $P(Y\g \hat{f}(\mbX))$ by fitting a model (e.g.
\Cref{eq:pred-linear,eq:pred-nonlinear}) to
$\{\hat{f}(\mbx^\mathrm{train}_i),
y^\mathrm{train}_i\}_{i=1}^{n_\mathrm{train}}$

Predict on test data using the fitted model: $\hat{y}_i=\E{}{Y\g
\hat{f}(\mbx^\mathrm{test}_i)}, i=1, \ldots, n_\mathrm{test}$
\caption{CAUSAL-REP (Supervised)}
\label{alg:causal-rep}}
\end{algorithm}

\parhead{The linear example continued.} We illustrate CAUSAL-REP (\Cref{alg:causal-rep}) with the linear
example in \Cref{subsubsec:pns_practical}: pinpoint the unobserved
common cause $\mbC$ from the training data
$\{\mbx^\mathrm{train}_i,y^\mathrm{train}_i\}_{i=1}^{n_\mathrm{train}}$,
obtain the CAUSAL-REP objective $\hat{f}$, and predict on some
out-of-distribution test data
$\{\mbx^\mathrm{test}_i\}_{i=1}^{n_\mathrm{test}}$. We focus on
learning two-dimensional representations, $f(\mbX) = (f_1(\mbX),
f_2(\mbX))$.

We first perform the same pinpointability step to obtain $\hat{c}_i$
as in \Cref{subsubsec:pns_practical}.

We next maximize the CAUSAL-REP objective
(\Cref{eq:causalrep-final-obb}). We calculate the CAUSAL-REP objective
by plugging in the calculation of $\underline{\gls{PNS}_n(f_j(\mbX), Y
\g f_{-j}(\mbX))}$ in \Cref{eq:pns-conditional-lower-last-step},
where we adopt a linear model for $P(Y\g f(\mbX), \mbC)$. In more
detail, to optimize the CAUSAL-REP objective, we need to write (the
parameters of) the fitted model of $P(Y\g f(\mbX),
\mbC)$ as a function of the representation function $f$. We use the
closed-form estimates of these linear model parameters when fitted to
the training data $\{\mbx_i^{\mathrm{train}},
y_i^{\mathrm{train}}\}_{i=1}^{n^{\mathrm{train}}}$:
\begin{align*}
(\hat{\beta}_0, \hat{\beta}_1, \hat{\beta}_2, \hat{\gamma}) 
&= \argmin
\sum_{i=1}^n(y^{\mathrm{train}}_i - \beta_0 - \beta_1 \cdot f_1(\mbx^{\mathrm{train}}_i) - \beta_2 \cdot f_2(\mbx^{\mathrm{train}}_i)
- \gamma \cdot \hat{c}_i)^2\\
&=(\tilde{\mbX}^\top \tilde{\mbX})^{-1} (\tilde{\mbX}^\top \tilde{Y}),
\end{align*}
where $\tilde{\mbX}$ is an $n^{\mathrm{train}}\times 4$ matrix with
$\tilde{\mbX}_{i1}=1$, $\tilde{\mbX}_{i2} =
f_1(\mbx^{\mathrm{train}}_i)$, $\tilde{\mbX}_{i2} =
f_2(\mbx^{\mathrm{train}}_i)$, $\tilde{\mbX}_{i2} =
\hat{c}_i$, and $\tilde{Y}$ is an $n\times 1$ vector with
$\tilde{Y}_{i1} = y^{\mathrm{train}}_i$, $i=1, \ldots, n$. (When
closed-form solutions are not available, we need to perform
gradient-based optimization to obtain the fitted model parameters.)

Plugging in these terms, we maximize the CAUSAL-REP objective via
gradient descent with respect to the parameters of the representation
function $W(f)$. The optimal $\hat{W}(f)$ give the CAUSAL-REP
representation function $\hat{f}$.

Finally, we perform out-of-distribution prediction with the CAUSAL-REP
representation $\hat{f}$. We train a prediction function by fitting a
linear model to $P(Y\g \hat{f}(\mbX))$,
\begin{align}
y^{\mathrm{train}}_i = \beta_0^{\mathrm{pred}} +
\beta_1^{\mathrm{pred}} \cdot\hat{f}_1(\mbx^{\mathrm{train}}_i) +
\beta_2^{\mathrm{pred}}\cdot \hat{f}_2(\mbx^{\mathrm{train}}_i) + \epsilon,
\epsilon\sim \cN(0,(\sigma^{\mathrm{pred}})^2).
\end{align}
We obtain the estimated regression coefficients
$\{\hat{\beta}_0^{\mathrm{pred}}, \hat{\beta}_1^{\mathrm{pred}},
\hat{\beta}_2^{\mathrm{pred}}, \sigma^{\mathrm{pred}}\}$ by maximum
likelihood. They allow us to make predictions on the test data
$\{\mbx^\mathrm{test}_i\}_{i=1}^{n_\mathrm{test}}$,
\begin{align}
\hat{y}_i = \E{}{Y\g
\hat{f}(\mbx^\mathrm{test}_i)}=\hat{\beta}_0^{\mathrm{pred}} +
\hat{\beta}_1^{\mathrm{pred}} \cdot\hat{f}_1(\mbx^{\mathrm{test}}_i) +
\hat{\beta}_2^{\mathrm{pred}}\cdot \hat{f}_2(\mbx^{\mathrm{test}}_i).
\end{align}

\subsubsection{Extending CAUSAL-REP to unsupervised settings}

We extend CAUSAL-REP to unsupervised settings where labels are not
available. We focus on the task of instance discrimination in
unsupervised representation
learning~\citep{hadsell2006dimensionality}, where the goal is to find
representations that can distinguish the different subjects.

\parhead{The unsupervised CAUSAL-REP.} We focus on the unsupervised
setting where some form of data augmentation is available. That is,
the dataset contains one raw observation for each subject; this raw
observation is then augmented multiple times, which leads to multiple
observations per subject. Given this unsupervised dataset, which attach the subject ID to each observation as a supervised label.  We can thereby formulate the
instance discrimination problem as a supervised task, one that finds representations that are informative of the subject IDs and extend CAUSAL-REP to an unsupervised setting. The resulting algorithm turns out to be closely related to contrastive learning~\citep{chen2020simple}.

In more detail, we begin with a dataset with $n$ i.i.d.\ samples,
$(\mbx_1,
\ldots, \mbx_n)$. Assume that each data point is associated with a different
subject~$i$. Next, we augment each data point with $U-1$ augmentations,
producing an augmented dataset, $\{\{\mbx_{i}^u\}_{u=1}^U\}_{i=1}^n$, $U\geq 2$.
We then create an $n$-dimensional label, $\mby_{i}^u = (y_{i1}^u,
\ldots, y_{in}^u)$, for each data point in the augmented dataset, where
\begin{align}
\label{eq:unsupervised-aug-label}
y_{is}^u =
\mathbb{I}\{\text{$\mbx_{i}^u$ belongs to subject $s$}\}, s=1, \ldots, n.
\end{align}
This labeling turns the unsupervised problem into a supervised one with respect to
the augmented labeled dataset $\{\{\mbx_{i}^u,
\mby_{i}^u\}_{u=1}^U\}_{i=1}^n$. We then employ the
CAUSAL-REP objective to find representations that are necessary and
sufficient causes of the $n$ outcomes $(Y_1,\ldots, Y_n)$:
\begin{align}
\label{eq:causalrep-unsupervised}
\max_{f}
\sum_{s=1}^n\sum_{j=1}^m \log
\underline{\gls{PNS}_{n\cdot U}(f_j(\mbX), Y_s \g f_{-j}(\mbX))} + \lambda\cdot
R(f\s \{\{\mbx_i^u, y_{is}^u, \mbc^u_i\}_{u=1}^U\}_{i=1}^n),
\end{align}
where $\underline{\gls{PNS}_{n\cdot U}(f_j(\mbX), Y_s \g
f_{-j}(\mbX))}$ is the conditional efficiency and spuriousness
(\Cref{eq:pns-conditional-dataset}) for the $s$th outcome; it is
calculated on the augmented dataset $\{\{\mbx_{i}^u,
\mby_{i}^u\}_{u=1}^U\}_{i=1}^n$. The penalty term $R(\cdot)$ is
the same penalty as in the CAUSAL-REP objective
(\Cref{eq:causalrep-final-obb,eq:causalrep-reg}). Solving
\Cref{eq:causalrep-unsupervised} produces non-spurious and efficient
representations of $\mbX$ for instance discrimination. (We summarize
the algorithm in \Cref{alg:causal-rep-unsupervised}.)

The optimization objective in \Cref{eq:causalrep-unsupervised}
considers the $n$ outcomes $\{Y_s\}$ one at a time and then averages
over them. It focuses on the contrast of one-vs-all, finding
representations that distinguish each subject from the rest. This
objective does not consider the $n$ outcomes as a single
$n$-dimensional outcome $(Y_1, \ldots, Y_n)$ and calculate its
\gls{PNS}. 

\parhead{Unsupervised CAUSAL-REP and contrastive learning.} The
unsupervised CAUSAL-REP objective (\Cref{eq:causalrep-unsupervised})
is closely related to contrastive learning~\citep{chen2020simple},
whose learning objective is to maximize
\begin{align}
\sum_{u_1=1}^U\sum_{u_2=1}^U -\log \frac{\exp(d(f(\mbx_i^{u_1}), f(\mbx_i^{u_2})))}{\sum_{i'\ne i}\sum_{u_*=1}^U\exp(d(f(\mbx_i^{u_1}), f(\mbx_{i'}^{u_*})))},
\end{align}
where $d(\cdot, \cdot)$ is some distance function. This contrastive
learning objective targets the same instance discrimination task as
the unsupervised CAUSAL-REP does; it aims for representations that can
determine whether two data points come from the same subject.
Contrastive learning thus is also able to predict whether each data
point belongs to subject $s$, for $s=1, \ldots, n$, as is the
unsupervised CAUSAL-REP.

That said, the two algorithms only coincide when different dimensions
of $\mbX$ are independent---they do not share an unobserved common
cause as in \Cref{fig:causalrep-scm}. The two algorithms behave very
differently when such a common cause exists, especially when the
common cause induces some spurious feature (e.g., image color).
Concretely, there may be a subset of $(X_1, \ldots, X_m)$ that is
highly correlated with the label $\mbY$ but cannot causally determine
$\mbY$. In such cases, contrastive learning may pick up these spurious
features if no augmentations are performed for these features (e.g.,
there is no augmentation that randomly changes the color of images).
In contrast, the unsupervised CAUSAL-REP would not capture these
spurious features.

This difference illustrates how CAUSAL-REP learns non-spurious
representations in a different way than data-augmentation-based
methods (including contrastive learning). CAUSAL-REP capitalizes on a
causal perspective and its treatment of high-dimensional $\mbX$.
Data-augmentation-based methods instead wipe out the correlation
between spurious features and the label by performing data
augmentation. We will illustrate this difference further in
\Cref{subsubsec:causalrep-unsupervised}.


\subsection{Empirical Studies of CAUSAL-REP}
\label{sec:causalrep-empirical}

We study CAUSAL-REP in both image and text datasets. We find that
\glsreset{POC}\gls{POC} are effective in distinguishing
efficient/inefficient and non-spurious/spurious representations.
Moreover, CAUSAL-REP finds non-spurious features in both supervised
and unsupervised settings; it also outperforms existing unsupervised
representation learning algorithms in downstream out-of-distribution
prediction.

\glsreset{POC}

\subsubsection{How well do probabilities of causation measure
efficiency and non-spuriousness of features?}
\label{sec:pns-distinguish}

We first study the correspondence between probabilities of causation
(\gls{PS}, \gls{PN}, \gls{PNS}) and the efficiency/non-spuriousness of
representations. We generate features with known efficiency and
non-spuriousness properties; we find that the (lower bound of)
probabilities of causation in \Cref{thm:pns-id} are consistent with
these properties.

\parhead{The simulated data.} We simulate two binary features $Z_1,
Z_2$ and two binary outcomes~$Y_1, Y_2$:
\begin{align*}
Z_1&\sim\mathrm{Bernoulli}(0.4),\\
Z_2&=Z_1 \oplus \mathrm{Bernoulli}(p),  p \in \{0, 0.1, \ldots, 0.9, 1\},\\
Y_1&=Z_1 \oplus \mathrm{Bernoulli}(0.2), \\
Y_2&=(Z_1 \& Z_2) \oplus\mathrm{Bernoulli}(0.2),
\end{align*}
where $\oplus$ indicates the XOR operator. We vary the parameter $p$;
a small $p$ implies a high correlation between $Z_1$ and $Z_2$. The
generative process of $Y_1, Y_2$ implies that (1) $Z_1$ is necessary
(a.k.a. efficient) and sufficient (a.k.a. non-spurious) for $Y_1$, but
is necessary and insufficient for $Y_2$; (2) $Z_2$ is neither
necessary nor sufficient for $Y_1$, but is necessary and insufficient
for $Y_2$; (3) $Z_1 \& Z_2$ is necessary and sufficient for $Y_2$.
Moreover, when the correlation between $Z_1$ and $Z_2$ increases, then
using both features as a representation become increasingly
inefficient; the conditional necessity decreases. We calculate the
lower bound of \gls{PS}, \gls{PN}, \gls{PNS} of $Z_1, Z_2$ for both
outcomes $Y_1, Y_2$; the exact value is not identifiable from data.

\parhead{Results.}
\Cref{fig:pns-measurement,fig:pns-measurement-supp-1,fig:pns-measurement-supp-2}
present the probabilities of causation of the features.
\Cref{fig:pns-measurement-condpns} shows that the (lower bound of)
conditional \gls{PNS} can correctly signal the non-spuriousness of
features: $Z_2$ is spurious for $Y_1$, but $Z_1$ is non-spurious for
$Y_1$, and both $Z_1, Z_2$ are non-spurious for $Y_2$. Moreover, we
study how \gls{POC} fare when the correlation between $Z_1$ and $Z_2$
increases. \Cref{fig:pns-measurement} shows that the (lower bounds of)
\gls{POC} of $Z_1$ for $Y_2$ increase as the correlation increases,
which is consistent with the intuition that $Z_2$ becomes less
necessary for $Y_2$ given $Z_1$ given increasingly higher
correlations. Similarly,
\Cref{fig:pns-measurement-supp-1,fig:pns-measurement-supp-2} show that
(the lower bounds of) the unconditional \gls{POC} of $Z_2$ for $Y_1$
also increase. It is also consistent with the intuition: $Z_2$ is an
increasingly better surrogate of $Z_1$ for $Y_1$ under higher
correlations between $Z_1$ and $Z_2$.

\begin{figure}[t]
  \centering
\captionsetup[subfigure]{justification=centering}
\begin{subfigure}{0.20\textwidth}
  \centering
    \includegraphics[width=\linewidth]{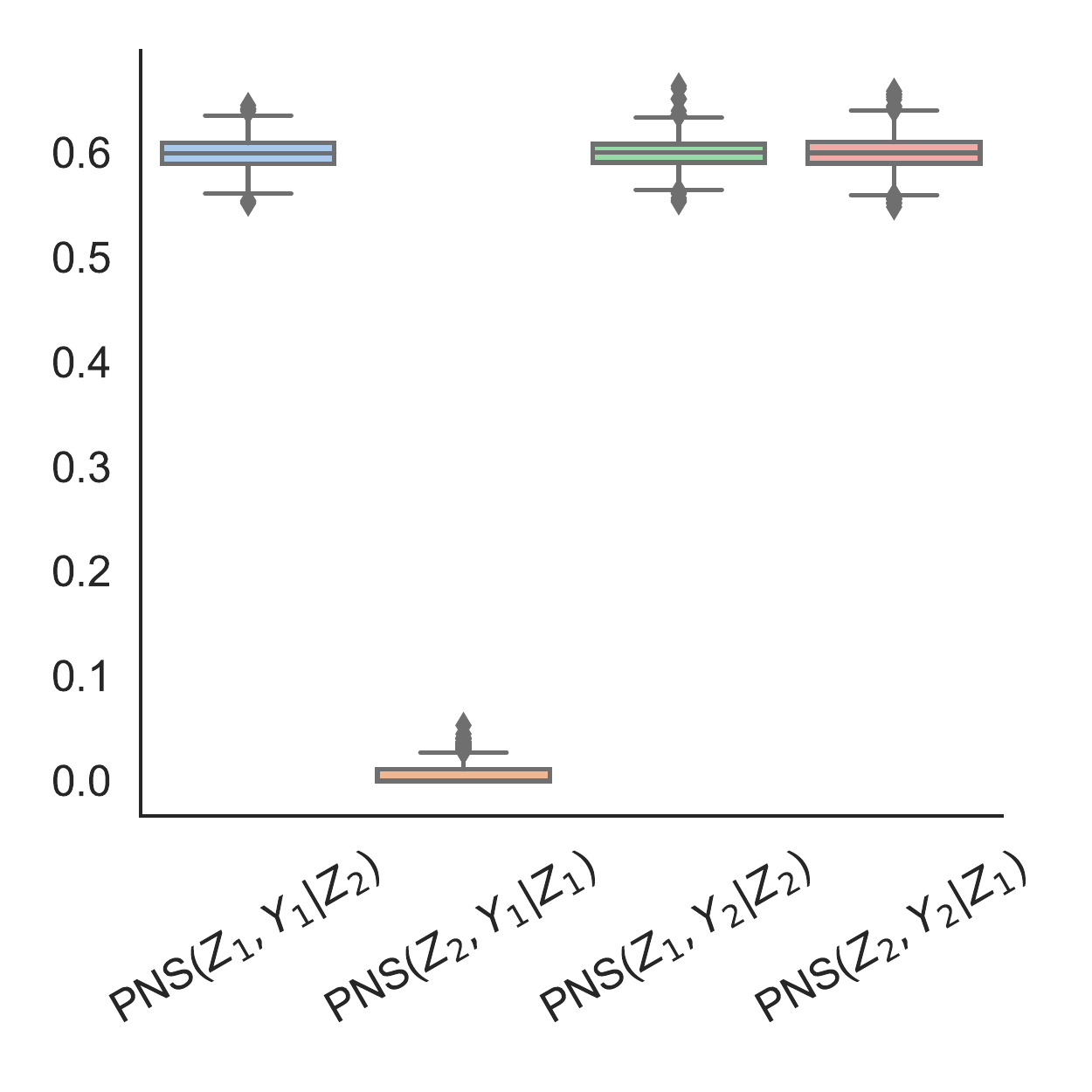}
  \caption{Conditional \gls{PNS}\label{fig:pns-measurement-condpns}}
\end{subfigure}
\begin{subfigure}{0.73\textwidth}
  \centering
    \includegraphics[width=\linewidth]{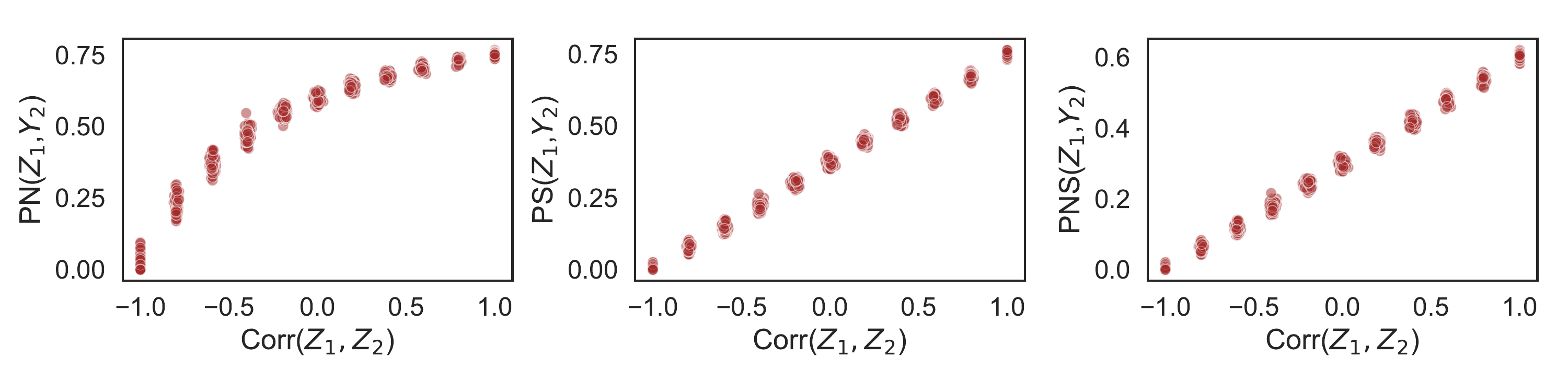}
  \caption{\glsreset{POC}\Gls{POC} of $Z_1$ for $Y_2$ \label{fig:pns-measurement-poc}}
\end{subfigure}
\caption{Probabilities of causation can distinguish
spurious/non-spurious and efficient/inefficient features. (a)
Conditional \gls{PNS} signals that $Z_2$ is spurious for $Y_1$. (b)
Probabilities of causation (\gls{PNS}, \gls{PN}, \gls{PS}) of $Z_1$
for $Y_2$ increase as $Z_1$ and $Z_2$ become increasingly highly
correlated. \label{fig:pns-measurement}}
\end{figure}

\subsubsection{Does the supervised CAUSAL-REP pick up spurious features in
synthetic data?}

\label{subsubsec:linear-synthetic}

We next study the performance of the supervised CAUSAL-REP in a toy
synthetic dataset. We use \gls{PPCA} as the pinpointing factor model
and linear functions as the class of representation functions.

\parhead{The simulated data.} We generate a dataset of core and
spurious features $(X_1, \ldots, X_5)$, who are highly correlated in
the training set but not so in the test set,
\begin{align*}
\bar{X}_1^\mathrm{train}, \ldots, \bar{X}_5^\mathrm{train} &\sim \cN(0, 0.05\cdot I_5 + 0.95 \cdot J_5),\\
\bar{X}_1^\mathrm{test}, \ldots, \bar{X}_5^\mathrm{test} &\sim \cN(0, 0.05\cdot I_5 + 0.05 \cdot J_5),
\end{align*}
where $I_5$ is a $5\times 5$ identity matrix and $J_5$ is a $5\times
5$ all-ones matrix. For both training and test sets, we inject noise
to features to lower the signal-to-noise ratio of the problem: $X_j
\sim \cN(\bar{X}_j, 0.4^2), j=1, 2; X_j \sim \cN(\bar{X}_j, 0.3^2),
j=3,4,5.$ Finally, we generate an outcome $Y$ that only depends on the
core features $Y \sim \cN(\beta_1 X_1 + \beta_2 X_2, 1)$, where the
coefficients are drawn from a uniform $\beta_j
\sim \mathrm{Unif}[0,10], j=1, 2,$.

\parhead{Results.} We compare the performance of CAUSAL-REP, linear
regression, and an oracle algorithm; the oracle is equipped with the
knowledge of which features are spurious and only performs linear
regression against the non-spurious features. We study whether these
algorithms pick up spurious features by evaluating their the \gls{OOD}
predictive performances. If an algorithm picks up spurious features,
then its \gls{OOD} predictions will suffer, because spurious features
are much less predictive in test data than in training data.
\Cref{fig:causal-rep-linear} presents the result: CAUSAL-REP
outperforms linear regression in \gls{OOD} predictive R-squared; its
predictive performance is not far from the oracle algorithm.

\subsubsection{Does the supervised CAUSAL-REP produce non-spurious
representations for image data? A study on Colored MNIST and CelebA}

\label{subsubsec:image}

\begin{figure}[t]
  \centering
  \captionsetup[subfigure]{justification=centering}
\begin{subfigure}{0.48\textwidth}
  \centering
    \includegraphics[height=0.7\linewidth]{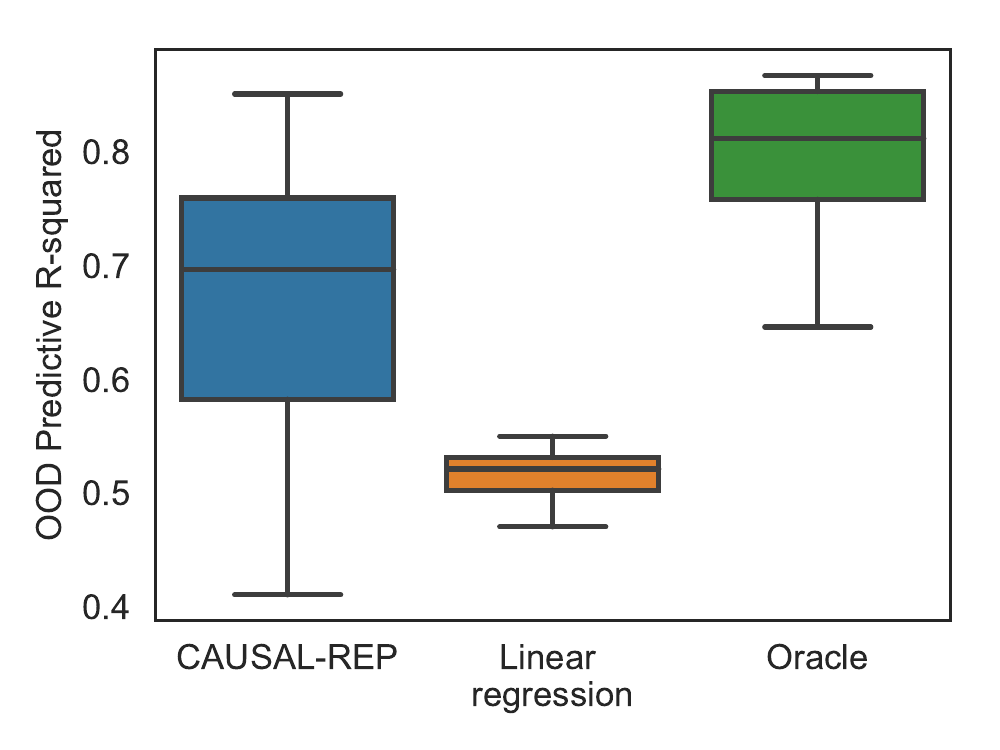}
  \caption{Supervised CAUSAL-REP: \\Toy synthetic data \label{fig:causal-rep-linear}}
\end{subfigure}
\begin{subfigure}{0.48\textwidth}
  \centering
    \includegraphics[height=0.7\linewidth]{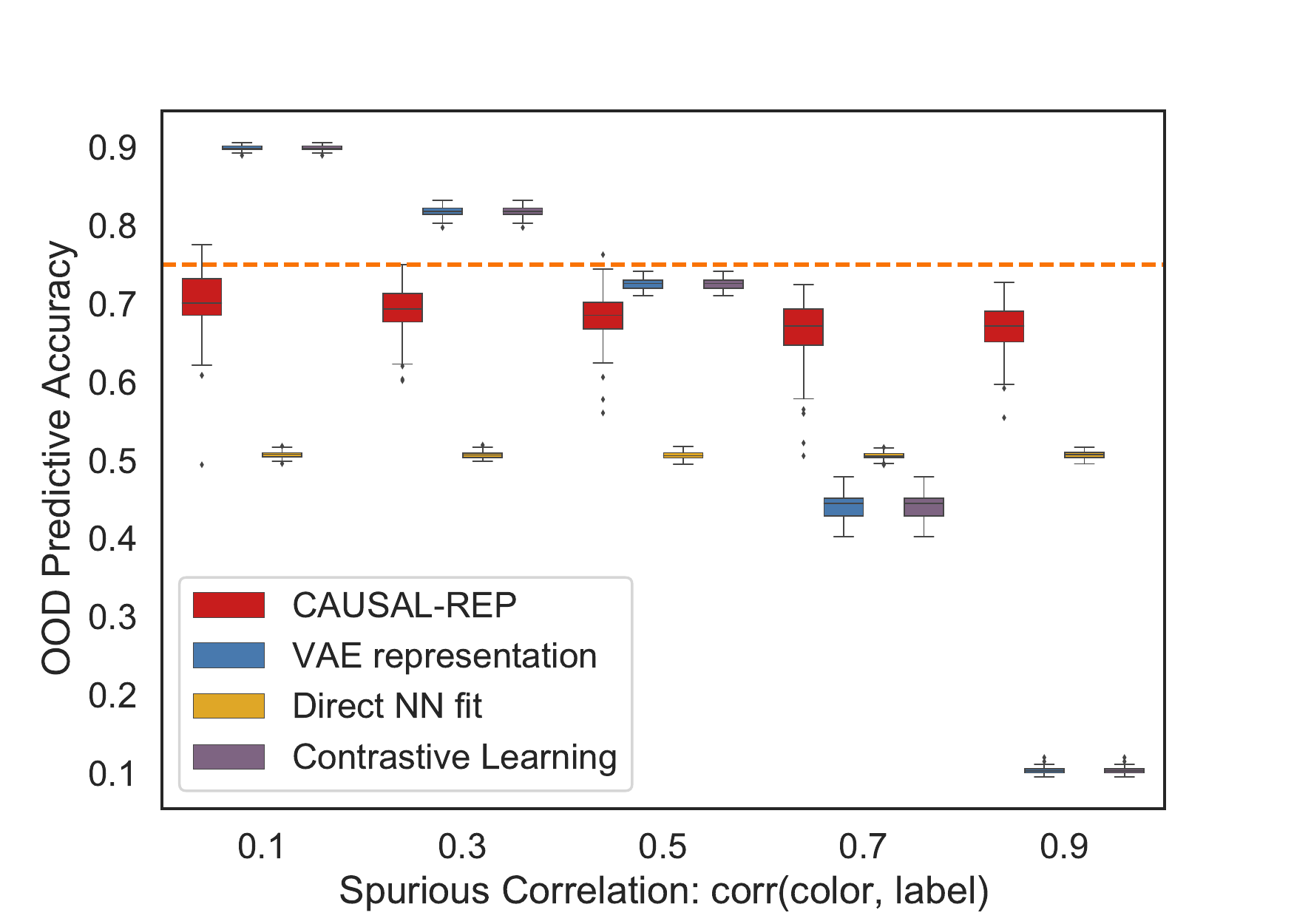}
  \caption{Unsupervised CAUSAL-REP: \\Colored and Shifted MNIST \label{fig:coloredmnist_unsupervised_poscorr}}
\end{subfigure}
\caption{CAUSAL-REP learns non-spurious representations in both
supervised and unsupervised settings. (a) CAUSAL-REP outperforms
linear regression in \gls{OOD} prediction in toy synthetic data. (b)
CAUSAL-REP outperforms baseline representation learning algorithms
(e.g. directly fitting neural networks, or performing contrastive
learning, or adopting \gls{VAE} representations) in the colored and
shifted MNIST dataset. The dashed yellow line indicates the
theoretical maximum of \gls{OOD} predictive accuracy. (Higher is
better.)}
\label{fig:Causal-Rep-supervised}
\end{figure}

\begin{figure}[t]
  \centering
  \captionsetup[subfigure]{justification=centering}
\begin{subfigure}{0.48\textwidth}
  \centering
    \includegraphics[width=\linewidth]{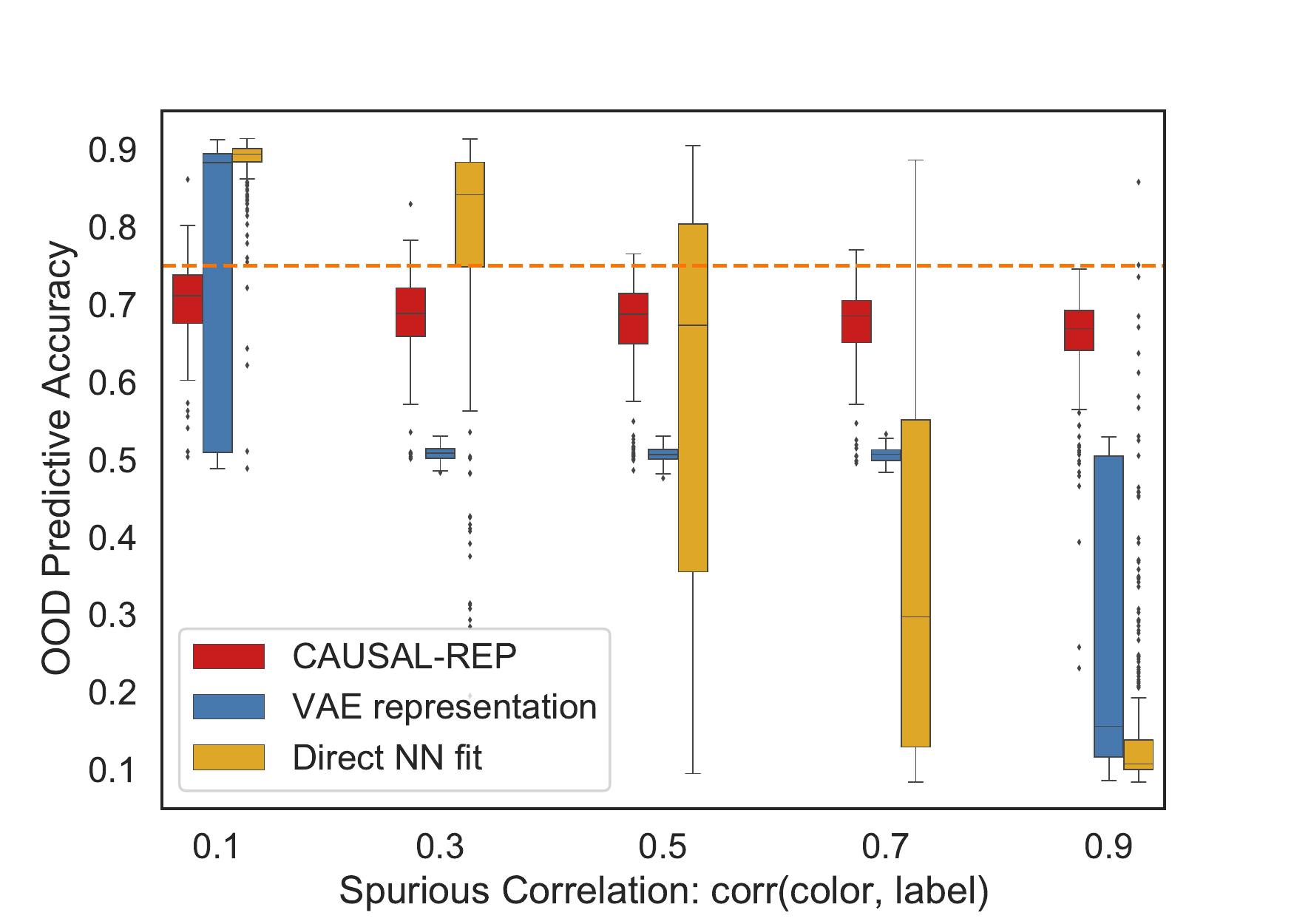}
  \caption{\label{fig:coloredmnist_spurious}}
\end{subfigure}
\begin{subfigure}{0.48\textwidth}
  \centering
    \includegraphics[width=\linewidth]{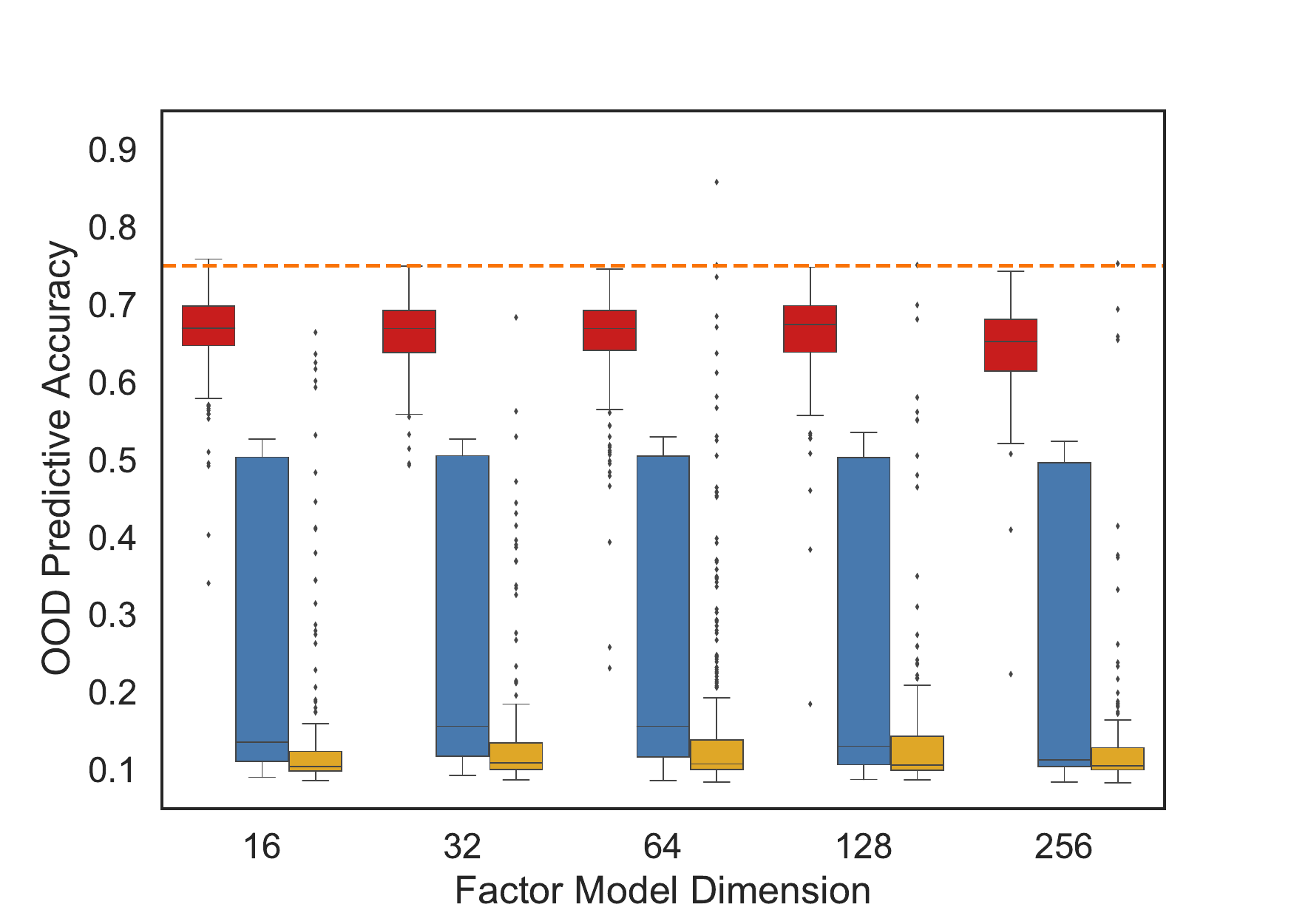}
  \caption{\label{fig:coloredmnist_dim64}}
\end{subfigure}
\caption{(a) The supervised CAUSAL-REP learns non-spurious
representations in colored MNIST and outperforms baseline
representation learning algorithms (e.g. directly fitting neural
networks, or adopting \gls{VAE} representations)in \gls{OOD}
prediction. (b) The performance of CAUSAL-REP is robust to the choice
of the latent dimensionality of probabilistic factor models. The
dashed yellow line indicates the theoretical maximum of \gls{OOD}
predictive accuracy. (Higher is better.)
\label{fig:colored-mnist-supervised}}
\end{figure}

We next study the supervised CAUSAL-REP in image datasets: Colored
MNIST~\citep{arjovsky2019invariant} and
CelebA~\citep{liu2015faceattributes}. We consider the version of
supervised CAUSAL-REP algorithm which adopts a \gls{VAE} with 64
latent dimensions as the probabilistic factor model for pinpointing.
For representation functions, we consider a two-layer neural network
with 20-dimensional outputs. We again evaluate the non-spuriousness of
the representations by the \gls{OOD} predictive accuracy; if the
learned representation captures spurious features, then it will suffer
in \gls{OOD} prediction.

\parhead{Competing methods.} We compare CAUSAL-REP with a baseline
representation learning algorithm that fits a neural network to the
labels and extracts its penultimate layer as the
representation~\citep{bengio2013representation}. As we use \gls{VAE}
for pinpointing in CAUSAL-REP in this study, we also compare with
directly adopting the \gls{VAE} representation for \gls{OOD}
prediction.

\parhead{The Colored MNIST study.} We focus on the colored MNIST data
with the digits `3' and `8' and colors `red' and `green'; see
\Cref{subsec:colored-mnist-supp} for the detailed experimental setup.

\Cref{fig:coloredmnist_spurious} presents the \gls{OOD} predictive
accuracy of CAUSAL-REP across different levels of spurious
correlations. CAUSAL-REP outperforms other baseline methods in
\gls{OOD} predictive accuracy when the spurious correlation is larger
than 0.5. Its performance remains close to the theoretical maximum
(the yellow dashed line) across different levels of the spurious
correlation, suggesting that CAUSAL-REP does not pick up spurious
features. In contrast, the \gls{OOD} performance of representation
learning by directly fitting neural networks quickly degrades as the
spurious correlation increases; so does \gls{OOD} prediction with the
\gls{VAE} representations.

\Cref{fig:coloredmnist_dim64} evaluates the robustness of CAUSAL-REP
against the latent dimensionality choice of the pinpointing factor
model, fixing the spurious correlation at 0.9. We find that the
performance of CAUSAL-REP does not change much as we vary the latent
dimensionality of pinpointing \gls{VAE}. The relative performance of
CAUSAL-REP and the competing methods also stay stable.

\parhead{The CelebA study.} We next study CAUSAL-REP on the CelebA
dataset~\citep{liu2015faceattributes}. We create training and test
sets, each containing 5,000 data points, by subsampling the CelebA
datasets. We focus on face attributes with a relatively balanced
distribution in the raw CelebA dataset. We designate pairs of target
attributes and spurious attributes, and subsample such that the two
are highly correlated in the training set and not as correlated in the
test set. We then perform representation learning and \gls{OOD}
prediction for the target labels, using CAUSAL-REP and other competing
methods.

\Cref{tab:celeba-results} presents the results for different pairs of
target and spurious face attributes. Though the spurious label and the
target label are highly correlated, CAUSAL-REP can pick up
non-spurious features that inform the target label and outperform the
baseline algorithm that directly fits a neural network. In most
settings, CAUSAL-REP also outperforms \gls{OOD} prediction with
\gls{VAE} representations. The exception is that \gls{VAE}
representations can outperform CAUSAL-REP representations when the
spurious correlations climb up to 0.9, though the performance of
CAUSAL-REP is still competitive. The spurious features in these
settings violate the positivity condition: Mustache and goatee
exclusively appear on males in the subsampled dataset.


\begin{table}[t]
    \centering
\begin{tabular}{p{2.5cm}p{2.3cm}p{1.2cm}p{1.2cm}p{2.2cm}p{2.2cm}p{2.2cm}}
\toprule
         &      &  spurious corr. (train) &  spurious corr. (test) &  CAUSAL-REP &  Direct NN fit & VAE rep. \\
target & spurious &      &      &        &       &    \\
\midrule
Arched Brows & Eye Bags  &  0.784 & -0.797  &  \bfseries{0.539(0.036)} &  0.514(0.029)&  0.499(0.012) \\
Arched Brows & Earrings &  0.799 &-0.791 &  \bfseries{0.521(0.025)}&  0.504(0.022)&  0.494(0.008) \\
Attractive & Necklace &  0.793 & -0.787  &  \bfseries{0.537(0.022)}&  0.505(0.030)&  0.485(0.018) \\
Black Hair & Mouth Open &  0.791 &-0.795  &  \bfseries{0.594(0.033)}&  0.566(0.060)&  0.505(0.010) \\
Goatee & Male &                     0.889 &  0.053  &  \bfseries{0.728(0.073)}&  0.566(0.102)&  \bfseries{0.867(0.165)} \\
Mustache & Black Hair &             0.764 & -0.778  &  0.525(0.023)&  0.512(0.037)&  \bfseries{0.540(0.005)} \\
Mustache & Male &                   0.892 &  0.088 &  \bfseries{0.787(0.066)}&  0.555(0.097)&  \bfseries{0.855(0.212)} \\
\bottomrule
\end{tabular}
    \caption{CAUSAL-REP learns non-spurious representations in the
    CelebA dataset and outperforms baseline representation learning
    algorithms in \gls{OOD} prediction. \label{tab:celeba-results}}
\end{table}

\subsubsection{Does the supervised CAUSAL-REP produce non-spurious
representations for text data? A study on reviews corpora and
sentiment analysis}

We next study CAUSAL-REP on text datasets: the
Amazon~\citep{wang2011latent,wang2010latent},
Tripadvisor,\footnote{http://times.cs.uiuc.edu/~wang296/Data/} and
Yelp\footnote{https://www.yelp.com/dataset/documentation/main} reviews
corpora, and the IMDB-L, IMDB-S, and Kindle dataset as is processed in
\citet{wang2020identifying,wang2020robustness}. In these studies, we
convert these corpora into bags of words. We use \gls{PPCA} as a
pinpointing factor model for CAUSAL-REP and consider representations
whose each dimension is a convex combination of words.

\parhead{The reviews corpora study.} We begin with the raw reviews
datasets from Amazon, Tripadvisor, and Yelp. We create a binary label
for each review by converting 4 and 5 stars to a positive label and 1
and 2 stars to a negative label. We then inject irrelevant words to the
training dataset as spurious features by randomly adding in ``as'',
``also'', ``am'', ``an'' to reviews with positive labels; the
resulting spurious correlation is around 0.9. We create two test
datasets: one is in-distribution with the spurious words present as in
the training set; the other is out-of-distribution without the
randomly added spurious words.

\Cref{fig:reviews-text} presents the predictive accuracy of CAUSAL-REP
in both test sets. We find that the predictive performance of
CAUSAL-REP is stable across in-distribution and out-of-distribution
test sets, suggesting that it learns non-spurious representations. In
contrast, logistic regression predicts well in the in-distribution
test set but not in the out-of-distribution test set, suggesting that
it picks up spurious features that only exist in the training set.

\Cref{tab:reviews-top-words} presents the most informative words of
the (positive or negative) ratings, suggested by the CAUSAL-REP
representation and the logistic regression coefficients. Across three
reviews corpora, logistic regression returns the spurious words
``as'', ``also'', ``am'', ``an'' as the top words. In contrast,
CAUSAL-REP extracts words that are more relevant for the ratings.

\begin{figure}[t]
  \centering
  \captionsetup[subfigure]{justification=centering}
\begin{subfigure}{0.31\textwidth}
  \centering
    \includegraphics[width=\linewidth]{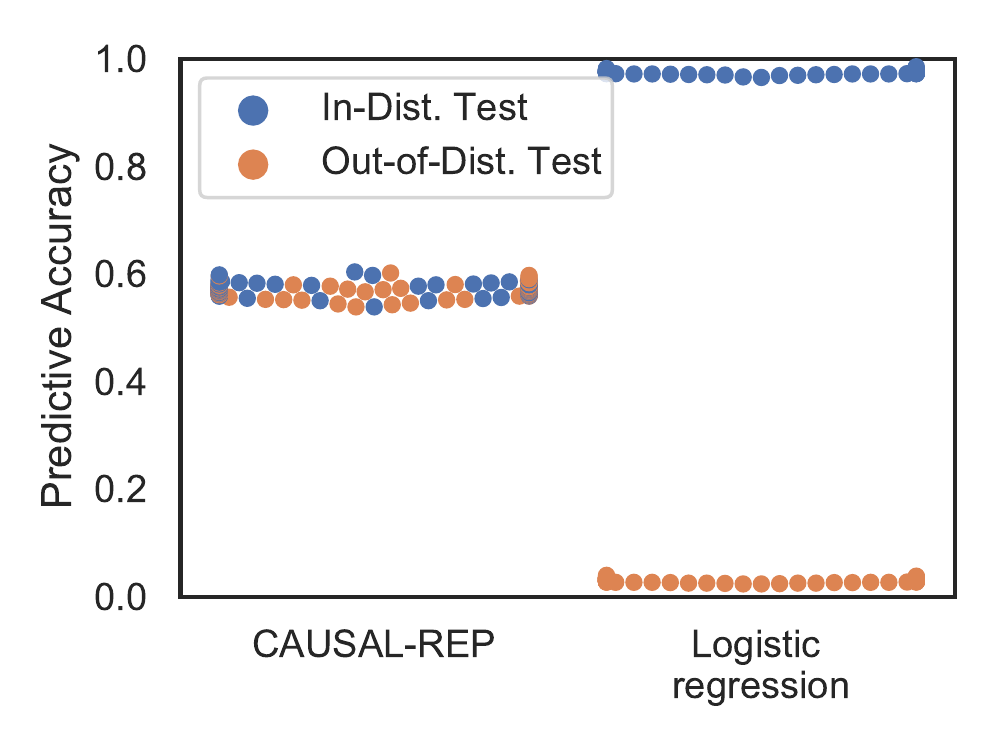}
  \caption{Amazon reviews}
\end{subfigure}
\begin{subfigure}{0.31\textwidth}
  \centering
    \includegraphics[width=\linewidth]{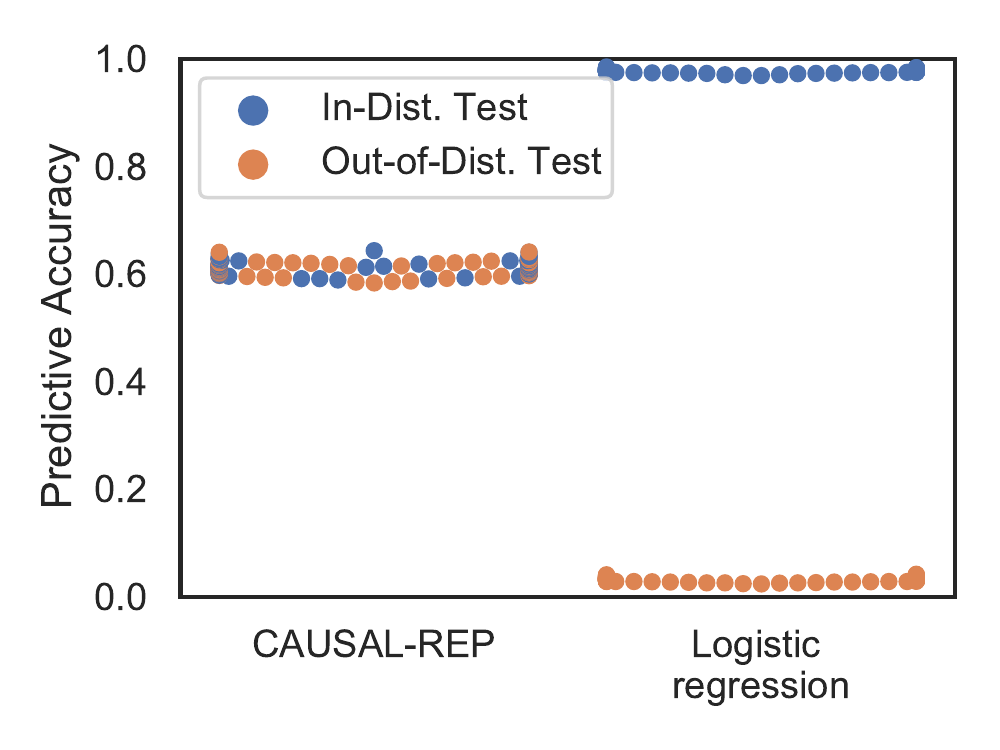}
  \caption{Tripadvisor reviews}
\end{subfigure}
\begin{subfigure}{0.31\textwidth}
  \centering
    \includegraphics[width=\linewidth]{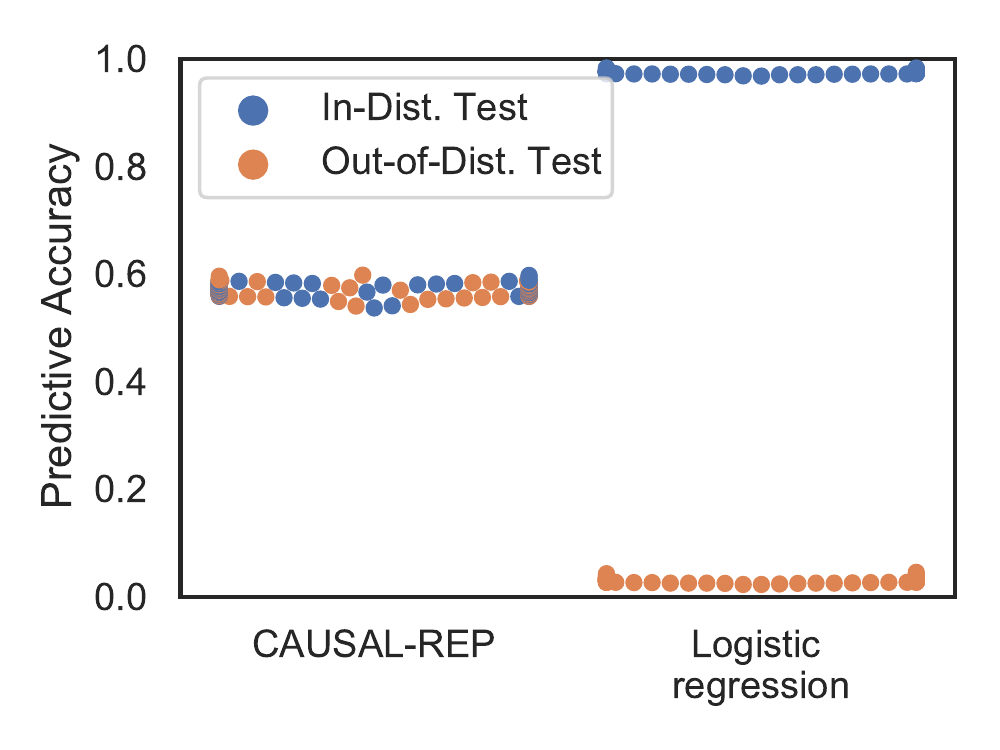}
  \caption{Yelp reviews}
\end{subfigure}
\caption{CAUSAL-REP learns non-spurious representations across reviews
text copura; its predictive performance is stable across
in-distribution and out-of-distribution test
sets.\label{fig:reviews-text}}
\end{figure}

\parhead{The sentiment analysis study.} We next employ CAUSAL-REP for
sentiment classification on the IMDB-L, IMDB-S, and Kindle
datasets~\citep{wang2020identifying,wang2020robustness}. We evaluate
CAUSAL-REP on their raw observational and counterfactual test sets,
without employing additional data augmentations as in
\citet{wang2020identifying,wang2020robustness}. The observational test
sets are in-distribution test sets, and the counterfactual test sets
are out-of-distribution test sets.

\Cref{tab:text-real} presents the predictive performance of CAUSAL-REP
compared with logistic regression. While logistic regression
outperforms in in-distribution predictive accuracy, CAUSAL-REP
outperforms in out-of-distribution prediction. Moreover, the
predictive performance of CAUSAL-REP is similar across in-distribution
and out-of-distribution test sets, suggesting that CAUSAL-REP produces
non-spurious representations.


\begin{table}[t]
\centering
\begin{tabular}{lrrrr}
\toprule
& \multicolumn{2}{c}{Observational test set} & \multicolumn{2}{c}{Counterfactual test set}\\
& Logistic Regression &   CAUSAL-REP &  Logistic Regression &   CAUSAL-REP\\
\midrule
IMDB-L&    \bfseries{0.669} & 0.645 & 0.591 & \bfseries{0.642}\\
IMDB-S &    \bfseries{0.836} & 0.682 & 0.570 & \bfseries{0.621}\\
Kindle&    \bfseries{0.850} & 0.618 & 0.468 & \bfseries{0.572}\\
\bottomrule
\end{tabular}
\caption{CAUSAL-REP outperforms naive representation learning
algorithms in predicting on counterfactual test sets.
\label{tab:text-real}}
\end{table}

\subsubsection{How well does unsupervised CAUSAL-REP perform on
instance discrimination? A study on colored and shifted MNIST}

\label{subsubsec:causalrep-unsupervised}

Finally, we study CAUSAL-REP in the unsupervised setting. We focus on
image datasets in the unsupervised setting because non-spurious
features that distinguish different images are more readily defined in
the image domain than in the text domain. We evaluate the
non-spuriousness of the unsupervised CAUSAL-REP again by \gls{OOD}
prediction. Given the unsupervised CAUSAL-REP representation, we fit a
prediction model to the target label and test its predictive
performance on \gls{OOD} test sets.

\parhead{Competing methods.} We compare CAUSAL-REP with baseline
representation learning algorithms that fit a neural network to the
subject ID label in \Cref{subsubsec:causalrep-unsupervised}. We also
compare with contrastive learning~\citep{chen2020simple} and the
\gls{VAE} representation.

\parhead{The colored and shifted MNIST study.} We construct the
colored and shifted MNIST dataset by coloring and shifting digits in a
way that is highly correlated with the digit labels; see
\Cref{subsec:colored-shifted-mnist-supp} for details. (Representation
learning with the unsupervised CAUSAL-REP does not make use of the
digit labels; these labels are only used in producing a prediction
function from the representations to the labels.)

\Cref{fig:coloredmnist_unsupervised_poscorr} presents the \gls{OOD}
predictive accuracy of the unsupervised CAUSAL-REP on colored and
shifted MNIST. CAUSAL-REP outperforms baseline representation learning
algorithms when the spurious correlation is high, and their predictive
accuracy stays close to the theoretical maximum (the yellow dashed
line). Comparing CAUSAL-REP with contrastive learning with shift
augmentation, we find that the \gls{OOD} predictive accuracy of
contrastive learning degrades when the spurious correlations increase
over 0.7. It is because the shift augmentation can not rule out the
spurious color feature; it can only avoid picking up the spurious
shift feature. This observation echoes the discussion in
\Cref{subsubsec:causalrep-unsupervised}, illustrating how the
unsupervised CAUSAL-REP relies on a different mechanism to produce
non-spurious representations than contrastive learning and data
augmentation.


\section{Unsupervised Representation Learning:\\
Disentanglement}

\label{section:disentanglement}

In this section we study the desideratum of disentanglement in
unsupervised representation learning. Unsupervised representation
learning aims to find a low-dimensional representation for
high-dimensional objects without the help of labels. Given an
$m$-dimensional object, $\mbX=(X_1, \ldots, X_m)$, the goal is again
to find a $d$-dimensional representation, $\mbZ = (Z_1, \ldots, Z_d)
\triangleq (f_1(\mbX), \ldots, f_d(\mbX))$.

We focus on a causal definition of disentanglement: different
dimensions of the representation shall encode features that do not
causally affect each other~\citep{Suter2019}. An absence of
causal relationships among different dimensions of the representation
can be viewed as independent manipulability of each dimension. Other
related criteria include the notion of independent
mechanisms~\citep{Suter2019,parascandolo2018learning,scholkopf2012causal}
and the concept of independently controllable
factors~\citep{thomas2017independently,thomas2018disentangling}.
Based on this causal definition, we will develop unsupervised metrics
and algorithms for unsupervised disentanglement.

\begin{figure}[t!]
\centering
\begin{subfigure}{0.3\textwidth}
  \centering
        \includegraphics[width=0.7\linewidth]{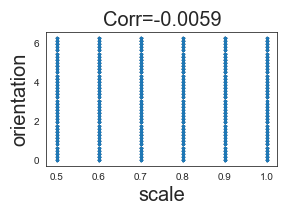} 
  \includegraphics[width=0.7\linewidth]{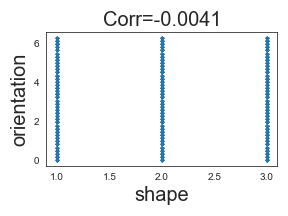}
  \includegraphics[width=0.7\linewidth]{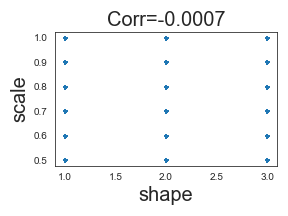}
  \caption{Disentangled and \\ uncorrelated (\gls{IOSS}=0.13)\label{fig:uncorr_ys}}
  \label{fig:uncorr_ys}
\end{subfigure}
\begin{subfigure}{0.3\textwidth}
  \centering
        \includegraphics[width=0.7\linewidth]{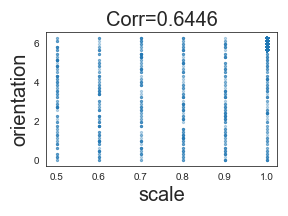}
  \includegraphics[width=0.7\linewidth]{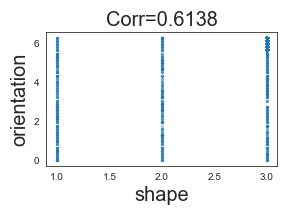}
  \includegraphics[width=0.7\linewidth]{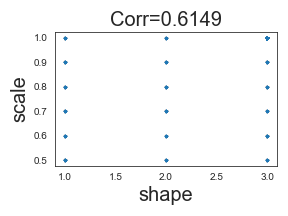}
  \caption{Disentangled but highly \\correlated (\gls{IOSS}=0.14)\label{fig:corr_ys}}
  \label{fig:corr_ys}
\end{subfigure}
\begin{subfigure}{0.3\textwidth}
  \centering
        \includegraphics[width=0.7\linewidth]{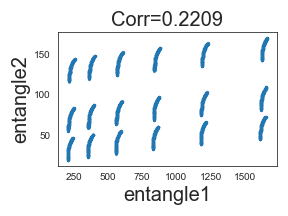} 
  \includegraphics[width=0.7\linewidth]{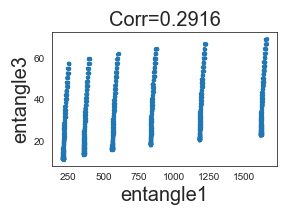}
  \includegraphics[width=0.7\linewidth]{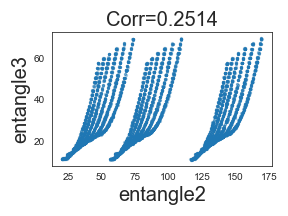}
  \caption{Entangled but with \\low correlations (\gls{IOSS}=0.27)\label{fig:entangle_ys}}
  \label{fig:entangle_ys}
\end{subfigure}
\caption{Disentangled features have independent support even though
they may be correlated. Moreover, the independence of support can
distinguish disentangled and entangled features. This figure
illustrates how entangled and disentangled features differ using
pairwise scatter plots.  \Cref{fig:uncorr_ys} considers the ground
truth features (shape, scale, orientation) of the dSprites dataset.
These features are disentangled. They also have independent support;
e.g., conditional on `scale', the set of values that `orientation' can
take does not change with `scale.' Visually, these  disentangled
features have scatter plots that occupy rectangular (or
hyperrectangular) regions. \Cref{fig:corr_ys} considers the same
features but in a subset of the dSprites dataset where the features
are correlated. These features, though correlated, are still
disentangled; they also have independent support.
\Cref{fig:entangle_ys} considers three entangled features, each of
which is a nonlinear transformation of the three ground-truth
features. These features are not disentangled. Their supports are also
not independent. Conditional on `entangle1', the possible values
`entangle2' can take depends on the value of `entangle1.'}
\label{fig:disentanglement-truefeatures}
\end{figure}

\subsection{The Causal Definition of Disentanglement}

\label{subsec:disentangle-def}

We begin with reviewing the causal definition of disentanglement. A
representation is (causally) \emph{disentangled} if (1) it represents
features that can generate the original data and (2) its 
dimensions correspond to features that do not causally affect each
other~\citep{Suter2019}. \Cref{defn:causal-disentangle} casts these
requirements in terms of a \gls{SCM}~\citep{Pearl2011}.

\begin{defn}[(Causal) Disentanglement~\citep{Suter2019}]
\label{defn:causal-disentangle} A
$d$-dimensional representation, $\mbZ=(Z_1,\ldots, Z_d)$, is 
disentangled if it represents features that generate the object of
interest $\mbX=(X_1,
\ldots, X_m)$ according to the following causal
\gls{SCM}:
\begin{align}
\mbC &\leftarrow u_{\mbC}, \nonumber\\
Z_j&\leftarrow f^z_j(\mbC, u_{z,j}), &j=1,\ldots,d,\label{eq:no-causal-g}\\
X_l &\leftarrow f^x_l(\mbZ, u_{\mbX}), &l=1,\ldots,m,\label{eq:g-generate-x}
\end{align}
where $\mbC=(C_1, \ldots, C_K)$ denotes an $K$-dimensional unobserved
confounder that can affect $\mbZ$ and hence induce correlations among
its components $(Z_1, \ldots, Z_d)$. \Cref{fig:causalrep-scm}
describes this \gls{SCM} using a causal DAG.\footnote{This causal
\gls{SCM} allows $Z_j$ to depend on only a subset of the $K$
confounder dimensions because the constraint imposed by a \gls{SCM}
resides in the absence of causal connections.}
\end{defn}

\begin{figure}
\centering
\begin{subfigure}{0.43\textwidth}
\centering
\begin{adjustbox}{height=6cm}
\includegraphics{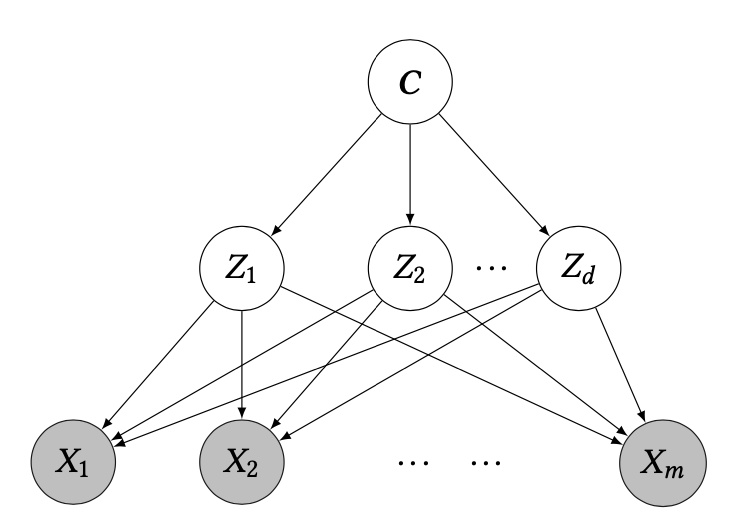}
\end{adjustbox}
\end{subfigure}\caption{The \gls{SCM} of unsupervised 
disentanglement~\citep{Suter2019}.\label{fig:disentanglement-scm}}
\end{figure}

As an example to illustrate this definition, we return to the dataset
of animal images in \Cref{sec:intro}. If we had access to the $d$
ground-truth features that generated the dataset (e.g., fur color,
number of legs), we could create a $d$-dimensional representation that
concatenates them. Such a representation would be disentangled, as the
ground-truth features do not causally affect each other. That said,
they may still be correlated due to some unobserved confounders
(a.k.a., common causes). For example, the animal's genetic information
may be an unobserved confounder; the genes can causally affect these
features, and thus induce correlations among them.

Notably, the causal definition of disentanglement opens up the
possibility to consider correlated features that are disentangled. We
work with this disentanglement definition in the rest of this section.

\subsection{Measuring Disentanglement with Observational Data}

\label{subsec:disentangle-metric}

How do we assess the disentanglement of a representation? In
the $\rmdo$ notation, a representation $\mbZ$ is  disentangled
if, for $j=1, \ldots, d,$
\begin{align}
\label{eq:disentangle-do-intervention}
P(Z_j\g \rmdo(\mbZ_{-j}=\mbz_{-j})) = P(Z_j), \forall
\mbz_{-j}\in \mathcal{Z}_{-j},
\end{align}
where $P(\cdot\g \rmdo(\cdot))$ denotes the intervention
distribution~\citep{Pearl2011}. We cannot, however, estimate $P(Z_j\g
\rmdo(\mbZ_{-j}))$ in practice. The reason is that we only have
access to an observational dataset of the representation
$\{\mbz_i\}_{i=1}^n = \{f(\mbx_i)\}_{i=1}^n$. This dataset is
generally insufficient for identifying $P(Z_j\g
\rmdo(\mbZ_{-j}))$ due to the unobserved
common cause~$\mbC$~\citep{Pearl2011,imbens2015causal}. This
difficulty of estimating $P(Z_j\g \rmdo(\mbZ_{-j}))$ is a core challenge
in assessing disentanglement.

To tackle this challenge, we consider the possibility of finding
observable implications of disentanglement. These observable
implications will be necessary conditions associated with
\Cref{eq:disentangle-do-intervention}. Though possibly insufficent,
they can still help us reject representations that are not
disentangled. Specifically, when a representation violates these
observable implications, it cannot be  disentangled.

\subsubsection{Observable implications of disentanglement: The
independent support \\condition}

We turn to the specific observable implications that underpin our
approach to disentanglement. We will prove that if a representation is
disentangled, then its different dimensions must have independent
support under a standard positivity condition.

\begin{thm}[Disentanglement $\Rightarrow$ Independent support]
\label{thm:disentanglement-support}
Assume the unobserved common cause $\mbC$ satisfies a positivity
condition: for all $j$, we have $P(Z_j\g \mbC) > 0$ iff $P(Z_j) >
0$. Then the support of the interventional distribution coincides with
that of the observational distribution:
\begin{align}
\label{eq:support-interventional-obs}
\supp(Z_j \g
\mathrm{do}(Z_{j'}=z_{j'})) = \supp(Z_j \g Z_{j'}=z_{j'}), 
\end{align}
where $j, j'\in\{1, \ldots, d\}$, $j\ne j'$, and the density at
$z_{j'}$ is nonzero, $p(z_{j'})>0$. As a consequence, different
dimensions of a disentangled representation $\mbZ=(Z_1, \ldots, Z_d)$
must have independent support:
\begin{align}
\supp(Z_1, \ldots, Z_d) = \supp(Z_1)\times \cdots \times \supp(Z_d),\label{eq:indep-support}\\
\supp(Z_j\g Z_\cS) = \supp(Z_j) \text{ for all
$\cS\subseteq \{1,\ldots, d\}\backslash j$}.\nonumber
\end{align} 
\end{thm}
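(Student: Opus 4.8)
The plan is to derive both displayed claims from a single fact about supports of mixture distributions, with the positivity hypothesis doing the essential work. First I would read off from the \gls{SCM} of \Cref{defn:causal-disentangle} that, since each $Z_j$ is assigned by $Z_j \leftarrow f^z_j(\mbC, u_{z,j})$ with the exogenous noises mutually independent, the common cause $\mbC$ d-separates $Z_j$ from any block $\mbZ_\cS$ with $j \notin \cS$; hence, for every $\mbz_\cS$ with $p(\mbz_\cS) > 0$,
\[
p(z_j \g \mbz_\cS) \;=\; \int p(z_j \g \mbc)\, p(\mbc \g \mbz_\cS)\, d\mbc ,
\]
so $z_j \in \supp(Z_j \g \mbZ_\cS = \mbz_\cS)$ precisely when $p(z_j \g \mbc) > 0$ on a set of $\mbc$ of positive $p(\cdot \g \mbz_\cS)$-measure.

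The key step is to collapse the conditional supports using positivity. The hypothesis ``$P(Z_j \g \mbC) > 0$ iff $P(Z_j) > 0$'' says exactly that $\supp(Z_j \g \mbC = \mbc) = \supp(Z_j)$ for (almost) every $\mbc$; since $p(\cdot \g \mbz_\cS) \ll p(\cdot)$ and $\supp(\mbC \g \mbZ_\cS = \mbz_\cS) \neq \emptyset$ whenever $p(\mbz_\cS) > 0$, the mixture-support characterization above yields $\supp(Z_j \g \mbZ_\cS = \mbz_\cS) = \supp(Z_j)$ for every admissible $\mbz_\cS$. Taking $\cS = \{j'\}$ gives the right-hand side of \eqref{eq:support-interventional-obs}; for the left-hand side I would note that intervening on $Z_{j'}$ leaves the structural assignments for $\mbC$ and for $Z_j$ untouched (as $Z_{j'}$ is not an ancestor of $Z_j$), whence $P(Z_j \g \rmdo(Z_{j'}=z_{j'})) = P(Z_j)$ and $\supp(Z_j \g \rmdo(Z_{j'}=z_{j'})) = \supp(Z_j)$ as well. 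This proves \eqref{eq:support-interventional-obs}, and the second line of \eqref{eq:indep-support} is the special case just established with general $\cS$.

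For the product/rectangular support in the first line of \eqref{eq:indep-support}, the inclusion $\supp(Z_1, \ldots, Z_d) \subseteq \supp(Z_1) \times \cdots \times \supp(Z_d)$ is immediate by projection; for the reverse I would induct on the chain-rule factorization $p(z_1, \ldots, z_d) = \prod_{k=1}^{d} p(z_k \g z_1, \ldots, z_{k-1})$. Given $z_k \in \supp(Z_k)$ for all $k$, at step $k$ the prefix $z_{1:k-1}$ has positive density by the inductive hypothesis, so conditioning on it is legitimate, and then $z_k \in \supp(Z_k) = \supp(Z_k \g Z_{\{1,\ldots,k-1\}} = z_{1:k-1})$ by the previous paragraph, so the $k$-th factor is positive; multiplying through shows $(z_1,\ldots,z_d) \in \supp(\mbZ)$.

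I expect the main obstacle to be the measure-theoretic bookkeeping in the mixture-support step: one must argue carefully that the support of $\int p(z_j \g \mbc)\, p(\mbc \g \mbz_\cS)\, d\mbc$ equals (the closure of) the union of $\supp(Z_j \g \mbc)$ over the $p(\cdot \g \mbz_\cS)$-essential support of $\mbc$, and fix the null-set conventions so that the ``iff'' in the positivity hypothesis transfers to an equality of supports holding for the relevant $\mbc$. A secondary subtlety is propagating the genericity condition $p(\mbz_\cS) > 0$ (and $p(z_{1:k-1}) > 0$ in the induction) so that every conditional distribution that appears is well defined.
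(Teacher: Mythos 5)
Your proposal is correct for the implication the theorem is named after, and its engine is the same as the paper's---conditional independence of the $Z_j$'s given $\mbC$ plus positivity, so that mixing over $\mbC$ cannot shrink or inflate supports---but it is organized differently in a way worth noting. For \Cref{eq:support-interventional-obs} the paper does \emph{not} use the absence of an arrow $Z_{j'}\to Z_j$: it writes $P(Z_j\g \rmdo(Z_{j'}))=\int P(Z_j\g Z_{j'},\mbC)P(\mbC)\dif\mbC$ and $P(Z_j\g Z_{j'})=\int P(Z_j\g Z_{j'},\mbC)P(\mbC\g Z_{j'})\dif\mbC$, shows via positivity that $\supp(\mbC\g Z_{j'})=\supp(\mbC)$, and concludes the two mixtures have the same support with the kernel $P(Z_j\g Z_{j'},\mbC)$ left completely general. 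That makes \Cref{eq:support-interventional-obs} an identification-type statement that survives even when $Z_{j'}$ does causally affect $Z_j$, which is exactly what the paper leans on at the end of its appendix proof to run the contrapositive (``dependent observational support $\Rightarrow$ $P(Z_\cS\g\rmdo(Z_i))\ne P(Z_\cS)$, hence not disentangled''). You instead prove the display by showing both sides separately equal $\supp(Z_j)$: the left side via the truncated factorization $P(Z_j\g\rmdo(Z_{j'}))=P(Z_j)$, which presupposes the disentangled \gls{SCM}. This is perfectly adequate for ``disentanglement $\Rightarrow$ independent support,'' and your mixture argument $\supp(Z_j\g\mbZ_\cS=\mbz_\cS)=\supp(Z_j)$ together with the chain-rule induction cleanly delivers both lines of \Cref{eq:indep-support} (the paper gets the product support instead from the joint factorization $\int\prod_j P(Z_j\g\mbC)P(\mbC)\dif\mbC$); but as a proof of \Cref{eq:support-interventional-obs} in its own right your route is weaker, since under your reading the display is the trivial identity $\supp(Z_j)=\supp(Z_j)$ rather than a statement usable to certify that support dependence falsifies the do-criterion of disentanglement. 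The measure-theoretic caveats you flag (null sets in the positivity hypothesis, well-definedness of the conditionals given $p(\mbz_\cS)>0$) are real but are glossed over at the same level of rigor in the paper's own proof.
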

\begin{proof}
We focus on proving \Cref{eq:support-interventional-obs}. (The remainder of
the proof is in \Cref{sec:disentanglement-support-proof}.)

First notice that the positivity condition on $\mbC$ guarantees that
$\supp(\mbC)=\supp(\mbC\g Z_j=z_j)$ for all $p(z_j)>0$:
\begin{align}
\mathbb{I}\{P(\mbC\g Z_j)>0\} 
&= \mathbb{I}\left\{\frac{P(\mbC)P(Z_j\g\mbC)}{P(Z_j)}>0\right\}\nonumber\\
&=\mathbb{I}\left\{P(\mbC)>0\right\}\mathbb{I}\left\{P(Z_j\g\mbC)>0\right\}\nonumber\\
&=\mathbb{I}\left\{P(\mbC)>0\right\}\mathbb{I}\left\{P(Z_j)>0\right\}\nonumber\\
&=\mathbb{I}\left\{P(\mbC)>0\right\},\label{eq:positivity-C}
\end{align}
where the third equality is due to the positivity condition.
Therefore, \Cref{eq:support-interventional-obs} holds because
\begin{align*}
&\mathbb{I}\{P(Z_j\g \rmdo(Z_{j'}))>0\} \\
&=\mathbb{I}\left\{\int P(Z_j\g Z_{j'}, \mbC) P(\mbC)\dif \mbC>0\right\} \\
&=\int \mathbb{I}\left\{P(Z_j\g Z_{j'}, \mbC)>0\right\} \mathbb{I}\left\{P(\mbC)>0\right\}\dif \mbC \\
&=\int \mathbb{I}\left\{P(Z_j\g Z_{j'}, \mbC)>0\right\} \mathbb{I}\left\{P(\mbC\g Z_{j'})>0\right\}\dif \mbC \\
&=\mathbb{I}\left\{\int P(Z_j\g Z_{j'}, \mbC) P(\mbC\g Z_{j'})\dif \mbC>0\right\} \\
&=\mathbb{I}\{P(Z_j\g Z_{j'})>0\},
\end{align*}
where the third equality is due to \Cref{eq:positivity-C}.
\end{proof}

The intuition behind \Cref{thm:disentanglement-support} is that two
variables can have their support depend on each other in only two
ways: (1) the causal connection between them, and (2) their common
cause induces a dependence among their supports. Therefore, two
causally disconnected variables must have independent support under
the positivity condition, which guarantees the common cause does not
causally affect their support. Positivity is also known as the overlap
condition and it is a standard assumption in causal inference with
observational data~\citep{imbens2015causal,pearl1995causal}. It
implies that the unobserved common cause $\mbC$ can induce
correlations among $Z_j$'s, though it can not affect their support.
Any combination of $Z_j$'s values must be possible.

\Cref{thm:disentanglement-support} shows that the absence of causal
arrows between two variables implies that their support must be
independent, even in the presence of unobserved confounders. In other
words, the relationship between the support of the variables is robust
to unobserved confounding. It implies that any  disentangled
$\mbZ$ must satisfy the independent support condition despite of the
unobserved confounder $\mbC$.

Consider compactly supported representations: each dimension of the
representations must be supported on a closed and bounded interval.
Then ``independent support'' implies a hyperrectangular joint support. As an
example, \Cref{fig:uncorr_ys,fig:corr_ys} illustrate the scatterplots
of two disentangled discrete features; they occupy rectangular
regions, regardless of their correlations. \Cref{fig:entangle_ys}
illustrate the scatter plots of two entangled features, which occupy
curvilinear, non-rectangular regions.

\Cref{thm:disentanglement-support} is most suitable for detecting
entangled representations whose different dimensions causally affect
each other's support. For example, in a dataset of animals, the
feature ``the animal being a rabbit'' can affect the support of the
feature ``the background being the sea'' because rabbits can not swim.
These two features are entangled; they also violate the independent
support condition. In the next sections, we will leverage these
observations stemming from \Cref{thm:disentanglement-support} to
develop a disentanglement metric.

\glsreset{IOSS}

\subsubsection{Assessing disentanglement with the IOSS}

\glsreset{IOSS}

To assess disentanglement, we build on
\Cref{thm:disentanglement-support} to develop the \gls{IOSS}. It
quantifies the extent to which a representation violates the
independent support condition.

\glsreset{IOSS}

\begin{defn}[\Gls{IOSS}]
Suppose a representation $\mbZ$ has bounded support
and~$\sup~Z_j~-~\inf~Z_j~>~0, j=1,\ldots, d$. Then the \gls{IOSS} of
$\mbZ$ is the Hausdorff distance between the joint support of $(Z_1,
\ldots, Z_d)$ and
the product of each individual's support:
\begin{align*}
&\gls{IOSS}(Z_1, \ldots, Z_d) \\
&\triangleq d_H(\supp(\bar{Z}_1, \ldots, \bar{Z}_d), \supp(\bar{Z}_1)\times
\cdots \supp(\bar{Z}_d))\\
&=d(\supp(\bar{Z}_1)\times\cdots \supp(\bar{Z}_d), \supp(\bar{Z}_1,
\ldots, \bar{Z}_d)),
\end{align*}
where $\bar{Z}_j = (Z_j-\inf Z_j)/(\sup Z_j - \inf Z_j)$ is the
standardized $Z_j$, and $d_H(\cdot,\cdot)$ is the Hausdorff
distance.\footnote{The Hausdorff distance between sets $\cX$ and $\cY$
is
\begin{align*}
d_H(\cX, \cY) = \max\{d(X, Y), d(Y,X)\},
\end{align*}
where
\begin{align*}
d(x,Y) &= \inf\{d(x,y)\g y\in Y\},\\
d(X, Y)&=\sup\{d(x,Y)\g x\in X\}.
\end{align*}
} The second equality is due to $\supp(Z_1, \ldots, Z_d)\subseteq
\supp(\bar{Z}_1)\times\cdots \times \supp(\bar{Z}_d).$
\end{defn}

\gls{IOSS} calculates the distance between the current support of the
representation $\mbZ$ and the (fictitious) support of $\mbZ$ if it
were disentangled. The larger the \gls{IOSS} is, the more entangled
the representation is. When $\mbZ$ is  disentangled,
$\gls{IOSS}(\mbZ)=0$.

\gls{IOSS} focuses on representations with bounded support, which
allows us to standardize each $Z_j$ to have $\inf \bar{Z}_j=0$ and
$\sup \bar{Z}_j=1$. This standardization step makes \gls{IOSS}
scale-invariant; it also makes the different $Z_j$ equivalent in their
importance in \gls{IOSS}.

To compute \gls{IOSS} in practice, we can directly compute the support
of discrete-valued representations and hence their \gls{IOSS}. For
continuous-valued representations, we compute the sample \gls{IOSS}. 
Given $n$ samples, $\{\mbZ_i\}_{i=1}^n =
\{(Z_{i1},
\ldots, Z_{id})\}_{i=1}^n$, we rescale the representation to the unit
hypercube, 
\[\bar{Z}_{ij} = \frac{Z_{ij} -
\min_{i=1, \ldots, n} Z_{ij}}{\max_{i=1, \ldots, n} Z_{ij} -
\min_{i=1, \ldots, n} Z_{ij}},\]
and compute the sample \gls{IOSS} 
as follows:
\begin{align*}
\widehat{\gls{IOSS}}(\{\mbZ_i\}_{i=1}^n) = \max_{k=1,\ldots, K}\min_{i=1,\ldots, n}(U_k - Z_{ij})^2,
\end{align*}
where $U_k\stackrel{iid}{\sim}\mathrm{Unif}[0,1]^d, k=1, \ldots, K,$
are $K$ uniform draws from $\prod_{j}\supp(\bar{Z}_{ij})$. To improve
the robustness of \gls{IOSS} in practice, one may also replace the
$\max, \min$ with the $(100-\alpha)$th and $\alpha$th quantile, where
$\alpha$ is a small positive number.

The sample \gls{IOSS} is comparable for representations with a fixed
sample size $n$, dimension $d$, and uniform draws $K$. As
$n\rightarrow \infty$ and $K/n\rightarrow \infty$, the sample
$\gls{IOSS}$ approaches
\gls{IOSS} almost surely,
$\widehat{\gls{IOSS}}(\{\mbZ_i\}_{i=1}^n)\rightarrow
\gls{IOSS}(\mbZ).$

To see \gls{IOSS} in action, we compute the sample \gls{IOSS} for the
three representations in \Cref{fig:disentanglement-truefeatures} with
$K=10^d\cdot n$. The sample \gls{IOSS} of the entangled features in
\Cref{fig:entangle_ys} is substantially higher than that of the
disentangled features in \Cref{fig:corr_ys,fig:uncorr_ys}. In
\Cref{sec:ioss-empirical}, we will show that \gls{IOSS} better
distinguishes entangled and disentangled representations than existing
disentanglement metrics.

\subsection{Learning Disentangled Representations with \gls{IOSS}}
\label{subsec:ioss-learn}

To learn disentangled representations, we propose to use the sample
\gls{IOSS} as a regularizer in unsupervised representation learning.

Before presenting the algorithm, we first establish the
identifiability of representations with independent support. This
result will license the use of \gls{IOSS} as a regularizer for
unsupervised disentanglement.
 \begin{thm}[Identifiability of representations with independent
support]\footnote{We are grateful to Shubhangi Ghosh, Luigi Gresele,
Julius von Kugelgen, Michel Besserve, and Bernhard Sch\"{o}lkopf for
pointing out a counterexample to a previous version of the theorem. }
\label{thm:indep-support-id} For any two $d$-dimensional
representations, $\mbZ=f(\mbX)=(Z_{1},\ldots, Z_{d})$ and
$\mbZ'=f'(\mbX) = (Z'_{1},
\ldots, Z'_{d})$, such that 
\begin{enumerate}
    \item The representations $\mbZ,\mbZ'$ both satisfy the independent support condition and have compact support in $\mathbb{R}^d$, i.e. 
    \begin{align}
        \supp(Z_1,\ldots, Z_d) &= [a_1, b_1]\times \cdots \times [a_d, b_d],\\
        \supp(Z'_1,\ldots, Z'_d) &= [a'_1, b'_1]\times \cdots [a'_d, b'_d],
    \end{align}
    where $a_j\leq b_j, a'_j\leq b'_j, j=1, \ldots, d$,
    \item The functions $f, f'$ are differentiable, and $f =
    \mathbf{h} \circ f'$ with
    $\mathbf{h}:\mathbb{R}^d\rightarrow\mathbb{R}^d$ for some
    bijective differentiable $\mathbf{h} = (h_1, \ldots, h_d)$ .
    Further, the dependency graph of $h_i(Z_1,
    \ldots, Z_d)$ on $Z_j$ (i.e. whether $h_i(Z_1, \ldots, Z_d)$
    nontrivially depends on $Z_j$) remains the same across all values
    of $\mbZ_{-j} = (Z_1, \ldots, Z_{j-1}, Z_{j+1}, \ldots, Z_d)$
    within the support, i.e.
    \begin{align}
    \label{eq:h-assumption}
        \mathbf{1}\left\{\frac{\partial h_i(Z_1, \ldots, Z_d)}{\partial Z_j} = 0 \right\} \text{is a constant function in $\mbZ_{-j}$} \qquad \forall i,j\in \{1, \ldots, d\}.
    \end{align}


    \item All components of $\mathbf{h}$ being invertible, i.e. $h_j(Z'_1, \ldots, Z'_d)$ is an invertible function with respect to each of $Z'_k, k\in \{1,\ldots, d\}$, for all values $\mathbf{Z'_{-k}}$ within the support and for all $j$,
    \item For each $i\in \{1, \ldots, d\}$, there exist an open set $\overline{\mathcal{Z}_{-i}}\subset \supp(\mbZ'_{-i})$, such that 
    \begin{align}
    \label{eq:trivial-h}
    \cap_{j \ne l} \{Z'_i: h_j(Z'_i\s \mbZ'_{-i}=\mbz'_{-i})\in [a_j, b_j]\}\not\subset \{Z'_i: h_l(Z'_i\s \mbZ'_{-i}=\mbz'_{-i})\in [a_l, b_l]\}, \text{ $\forall \mbz_{-i}\in\overline{\mathcal{Z}_{-i}}$},
    \end{align}
\end{enumerate}
then we have
\begin{align*}
Z_{1},\ldots, Z_{d} = \mathrm{perm}(q_1(Z'_{1}), \ldots, q_d(Z'_{d})),
\end{align*}
where the $q_{j}$ are continuous bijective function with a compact
domain in $\mathbb{R}$. (The proof is in
\Cref{sec:indep-support-id-proof}.)
\end{thm}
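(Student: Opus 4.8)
The plan is to show that, up to a permutation of its coordinates, $\mathbf{h}$ is a product of one-dimensional maps; reading off these maps then yields the $q_j$. First I would pass from $\mbZ,\mbZ'$ to $\mathbf{h}$ on the supports. Since $\mathbf{h}$ is continuous and bijective on $\bbR^d$, invariance of domain makes it a homeomorphism of $\bbR^d$, so it carries $\supp(\mbZ')$ forward to $\supp(\mathbf{h}(\mbZ')) = \supp(\mbZ)$; thus $\mathbf{h}$ maps the hyperrectangle $R' := \prod_j [a'_j,b'_j]$ bijectively onto $R := \prod_j [a_j,b_j]$, carrying the topological interior to the interior and $\partial R'$ to $\partial R$. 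Assumption~2 guarantees the ``dependency graph'' $G$ — in which output $i$ is joined to input $j$ exactly when $h_i$ is non-constant in $Z'_j$ — is well defined, because the set of $Z'_j$-values on which $\partial h_i/\partial Z'_j$ vanishes is a cylinder in the $\mbZ'_{-j}$-directions, so the relation does not depend on where in $R'$ it is tested. The theorem reduces to: $G$ is a perfect matching $\pi$, whence $h_i(z') = \tilde q_i(z'_{\pi(i)})$ with $\tilde q_i$ strictly monotone by Assumption~3.

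Next I would rule out isolated vertices. If output $i$ had no neighbour, $h_i$ would be constant on the connected set $R'$, making $Z_i$ almost surely constant and contradicting $a_i < b_i$. If input $j$ had no neighbour, $\mathbf{h}$ would be independent of $Z'_j$ on $R'$, so two points of $R'$ differing only in coordinate $j$ (available since $a'_j < b'_j$) would collide, contradicting injectivity. The crux is that every output has \emph{exactly} one neighbour. Fix an output $l$ and consider the level set $M_l := \{p \in R' : h_l(p) = a_l\}$, where $a_l = \min_{R'} h_l$ since $\mathbf{h}(R') = R$ projects onto $[a_l,b_l]$ in coordinate $l$. Because $M_l = (\mathbf{h}|_{R'})^{-1}(\{w \in R : w_l = a_l\})$ and $\mathbf{h}|_{R'}$ is a homeomorphism onto $R$, $M_l$ is homeomorphic to a $(d-1)$-dimensional facet of $R$, and it lies in $\partial R'$. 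On the other hand, for each input $a$ on which $h_l$ depends, Assumption~3 makes $h_l$ strictly monotone in $Z'_a$, and connectedness of $\prod_{k\ne a}[a'_k,b'_k]$ forces the direction of monotonicity to be the same across all fixed values of the other coordinates; hence every point of $M_l$ has its $a$th coordinate pinned to the corresponding endpoint of $[a'_a,b'_a]$, i.e.\ $M_l$ lies in an extremal facet $F_a^{s_a}$ of $R'$. Intersecting over all neighbours $a$ of $l$ gives $M_l \subseteq \bigcap_{a \sim l} F_a^{s_a}$, a face of $R'$ of dimension $d - \deg_G(l)$; comparing with $\dim M_l = d-1$ forces $\deg_G(l) \le 1$, hence $\deg_G(l) = 1$.

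Since each of the $d$ outputs has exactly one neighbour and each of the $d$ inputs has at least one, $G$ is a perfect matching; let $\pi$ be the induced bijection, so $h_i(z') = \tilde q_i(z'_{\pi(i)})$. By Assumption~3 the function $\tilde q_i$ is strictly monotone on $[a'_{\pi(i)}, b'_{\pi(i)}]$, and since $\mathbf{h}(R') = R$ is a product it maps this interval onto $[a_i,b_i] = \supp(Z_i)$; thus $q_i := \tilde q_i|_{[a'_{\pi(i)},b'_{\pi(i)}]}$ is a continuous bijection with compact domain, and relabelling through $\pi^{-1}$ gives $(Z_1,\ldots,Z_d) = \mathrm{perm}(q_1(Z'_1),\ldots,q_d(Z'_d))$. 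Assumption~4 would be invoked at this last stage to discard the residual degenerate configurations and to certify that the one-dimensional maps are genuinely onto rather than merely monotone into a proper sub-interval.

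I expect the main obstacle to be the third paragraph: making rigorous both that $M_l$ has topological dimension $d-1$ and lies in $\partial R'$ (which is exactly where the homeomorphism property from invariance of domain, together with the product form of \emph{both} supports, is indispensable), and that strict monotonicity confines $M_l$ to a single extremal facet per dependent input — the latter requiring care to reconcile the cylinder structure of the vanishing set in Assumption~2 with the a priori possibility that monotonicity directions differ across fibres, which is ruled out only by connectedness of the independent-support hyperrectangle.
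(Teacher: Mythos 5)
Your proposal is correct in substance but follows a genuinely different route from the paper's own argument. The paper proceeds by contradiction: assuming some $h_l$ depends nontrivially on two coordinates $Z'_j,Z'_k$, it writes the conditional support $\supp(Z'_j\mid Z'_k,\mbZ'_{-j,-k})$ as the intersection $\bigcap_{l}\{Z'_j: h_l(\mbZ')\in[a_l,b_l]\}$ (using that $\supp(\mbZ')$ is the preimage of the box $\supp(\mbZ)$), shows via coordinatewise invertibility that the $l$-th constraint is an interval whose endpoints $h_l^{-1}(a_l\mid\cdot)$ and $h_l^{-1}(b_l\mid\cdot)$ vary with $Z'_k$, and invokes Assumption 4 precisely to guarantee that this constraint is not subsumed by the others, so the conditional support genuinely moves with $Z'_k$ and contradicts independent support of $\mbZ'$. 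You instead argue forward: invariance of domain makes $\mathbf{h}$ a homeomorphism carrying the box $R'$ onto the box $R$; the preimage $M_l$ of an extremal facet is a $(d-1)$-dimensional subset of $\partial R'$; strict monotonicity pins each coordinate on which $h_l$ depends to a single endpoint, so $M_l$ lies in a face of dimension $d-\deg(l)$; and the dimension comparison forces $\deg(l)\le 1$, after which counting yields the permutation and the coordinatewise bijections. Your route buys a cleaner, more topological proof in which Assumption 4 plays no essential role (surjectivity of each $q_i$ onto $[a_i,b_i]$ already follows from $\mathbf{h}(R')=R$), whereas the paper's route needs Assumption 4 to exclude subsumed constraints but stays at the level of one-dimensional conditional supports without any dimension-theoretic machinery. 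Both arguments share the same small caveats: the theorem allows degenerate intervals $a_j=b_j$, which your isolated-vertex and dimension-counting steps (like the paper's proof) implicitly exclude, and both must read Assumption 3 as asserting invertibility only in the coordinates on which $h_j$ genuinely depends, since taken literally it would force every $h_j$ to depend on every coordinate.
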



To understand the intuition behind \Cref{thm:indep-support-id}, we
consider a toy example of a two-dimensional compactly supported
representation $(Z_1, Z_2)$ with independent support: $Z_1\in [1,2]$,
$Z_2\in [0,2]$. Next consider an entanglement of this representation
$(Z'_1, Z'_2)$, which is a bijective transformation of $(Z_1, Z_2)$:
\[Z'_1 = Z_1 + Z_2, \qquad Z'_2 = Z_1 - Z_2.\] 
We will show that $(Z'_1,Z'_2)$ does not have independent support, then the
support of $Z_1-Z_2$ depends on the value of $Z_1 + Z_2$. To see why,
consider the case when $Z_1 + Z_2 = 4$, then we must have $Z_1 = Z_2 =
2$ due to the support constraints on $Z_1, Z_2$. Hence $Z_1-Z_2=0$,
thus the support of $Z_1-Z_2$ is $\{0\}$. Following a similar
argument, the support of $Z_1-Z_2$ is $\{1\}$ when $Z_1+Z_2=1$.
Therefore, the support of $Z_1-Z_2$ depends on values of $Z_1+Z_2$,
and hence they have dependent support.

The proof of \Cref{thm:indep-support-id} relies on a similar argument
as in the toy example. It builds on the following observation. Suppose
a variable $W\in \mathbb{R}$ has support $[U_W, L_W]$. Then the
support of $g(W)\in \mathbb{R}$ is $[\min\{g(U_W), g(L_W)\},
\max\{g(U_W), g(L_W)\}]$ if $g$ is continuous and bijective (subject
to other assumptions of \Cref{thm:indep-support-id}). In other words,
the end points of $g(W)$'s support are functions of end points of
$W$'s support. Therefore, any function $Z_{W'}$ that non-trivially
depends on a second variable $W'$ will violate the independent support
condition; the support of $Z_{W'}(W)$ will depend on $W'$. In this
case, the mappings between two representations can only involve
coordinate-wise mappings (e.g., permutation and coordinate-wise
bijective mappings); the mapping cannot involve multiple dimensions of
the representation.

This argument requires several assumptions in addition to independent
support. Assumption 2 posit that the dependency graph of $\mathbf{h}$
does not change across values. To study the dependency structure of
$\mathbf{h}$, it thus is sufficient to study the boundary of the
support of $\mbZ, \mbZ'$. Loosely, this assumption guarantees that
``$h_i$ nontrivially depends on two or more of its arguments'' implies
that ``$h_i$ nontrivially depends on two or more of its arguments at
the boundary of the support.'' Assumption 3 assumes the invertibility
of the $\mathbf{h}$ functions, which facilitates the discussion on the
dependency structure of $\mathbf{h}$, namely how $Z'_j$ depends on
$Z_{j'}$ for $j,j'\in \{1, \ldots, d\}$. Assumption 4 ensures that the
dependency structure of $\mathbf{h}(\cdot)$ has observable
implications on the support boundary of $\mbZ, \mbZ'$; no support
boundary is vacuous in the sense that it does not depend on any
components of the $\mathbf{h}(\cdot)$ function.

\Cref{thm:indep-support-id} establishes the identifiability of
representations with independent support. It focuses on
representations that generate the same $\sigma$-algebra; two
representations $\mbZ, \mbZ'$ satisfy $\sigma(\mbZ)=\sigma(\mbZ')$ if
there exists a bijective function $L$ such that $\mbZ = L(\mbZ')$.
Representations with the same $\sigma$-algebra are
``information-equivalent''; they capture the same information about
the raw data $\mbX$. Among these information-equivalent
representations, there is a unique representation (up to
coordinate-wise bijective transformations) that has independent
support. Together with ``disentanglement $\Rightarrow$ independent
support'' (\Cref{thm:disentanglement-support}), this unique
representation must also be  disentangled (if~it~exists).

We now illustrate disentangled representation learning with
\gls{IOSS}. Begin with an unsupervised representation learning
objective $L(\{x_i\}_{i=1}^n\s
\{\mbz_i\}_{i=1}^n)$, and use the sample \gls{IOSS} as a
regularizer:
\begin{align}
L_{\gls{IOSS}}(&\{x_i\}_{i=1}^n\s
\{\mbz_i\}_{i=1}^n)
=-\underbrace{L(\{x_i\}_{i=1}^n\s
\{\mbz_i\}_{i=1}^n)}_{\text{representation learning loss}} + \lambda\cdot \widehat{\gls{IOSS}}(\{\mbz_i\}_{i=1}^n),\label{eq:ioss-obj}
\end{align}
As a concrete example, the representation learning objective can be
the \gls{VAE} objective, a popular generative modeling approach to
unsupervised representation learning \citep{kingma2014auto}:
\[L(\{x_i\}_{i=1}^n\s \{\mbz_i\}_{i=1}^n)= -\E{q_\theta(\mbz_i\g \mbx_i)}{\log p_\theta(\mbz_i, \mbx_i)-\log q_\theta(\mbz_i\g \mbx_i)}\]
where we adopt the (approximate) posterior mean of the latent
variables as the representation $\mbz_i = \E{q(\mbz_i\g
\mbx_i)}{\mbz_i\g
\mbx_i}$, and where $\lambda>0$ is a regularization parameter.

To perform stochastic optimization on the $L_{\gls{IOSS}}$ objective,
we subsample batches $\{\mbz_i\}_{i\in \mathcal{B}}$ of the data and
update the parameters with the gradients of
$L_{\gls{IOSS}}(\{\mbz_i\}_{i\in \mathcal{B}})$. \Cref{alg:ioss}
summarizes this algorithm for learning disentangled representations
with
\gls{IOSS}. In \Cref{sec:ioss-empirical}, we will show that it can
produce representations that are more disentangled.

\begin{algorithm}
\small{
\SetKwData{Left}{left}\SetKwData{This}{this}\SetKwData{Up}{up}
\SetKwFunction{Union}{Union}\SetKwFunction{FindCompress}{FindCompress}
\SetKwInOut{Input}{input}\SetKwInOut{Output}{output}
\Input{
Training data $\{\mbx_i\}_{i=1}^n$; Test data
$\{\tilde{\mbx}_i\}_{i=1}^n$}
\Output{ Representations of the test data $\{\tilde{\mbz}_i\}_{i=1}^{n'}$}
\BlankLine
Minimize \Cref{eq:ioss-obj} to obtain $q_\theta$;
\BlankLine
Compute representations for the test data: $\tilde{\mbz}_i = \E{q_\theta(\mbz_i\g \mbx_i)}{\mbz_i\g
\mbx_i}.$
\caption{Disentangled representation learning with \gls{IOSS}}
\label{alg:ioss}}
\end{algorithm}


\subsection{Empirical Studies of IOSS}
\label{sec:ioss-empirical}

We study \gls{IOSS} in unsupervised image datasets. We find that
\gls{IOSS} is more effective at distinguishing between disentangled
and entangled representations than other unsupervised disentanglement
metrics. Unsupervised representation learning with the \gls{IOSS}
penalty results in representations with better disentanglement.

\subsubsection{Can \gls{IOSS} distinguish entangled and disentangled
representations?}

\label{subsec:disentangle-measure}

We first study whether \gls{IOSS} can distinguish entangled and
disentangled representations.

\parhead{Generating entangled/disentangled representations.} Focusing
on datasets with correlated ground truth features, we subsample each
dataset so that the correlation among the ground-truth features is
$>0.8$. Given the subsampled dataset, we take the ground-truth
features as disentangled representations. We then generate entangled
representations by applying nonlinear transformations to the
ground-truth features. Given ground-truth features $Z_1, \ldots, Z_d$,
we construct entangled representations by setting $Z'_i= f(Z_1,
\ldots, Z_d\s \theta_i) + Z_i$ with $\theta_i\sim
\mathrm{Unif}([-2.5,2.5])$, where $f(\cdot\s\theta)$ is a third-order
polynomial with coefficients $\theta$.

\parhead{Evaluation metrics.} For each pair of generated entangled and
disentangled representations, we calculate different disentanglement
scores for each. We report the percentage of entangled and disentangled
pairs that are correctly labeled by the metric. For example, if
\gls{IOSS} returns a smaller value for the disentangled representation
than the entangled one in the pair, then it is deemed correct
labeling. Similar procedures apply to other unsupervised disentangled
metrics. A metric can effectively distinguish entanglement and
disentanglement if this percentage is high.

\parhead{Competing methods.} We compare \gls{IOSS} with existing
unsupervised disentanglement metrics that do not rely on ground truth
features: total correlation~\citep{chen2018isolating} and Wasserstein
dependency~\citep{xiao2019disentangled}. We also compared with an
oracle supervised disentanglement metric: the intervention robustness
score~\citep{Suter2019}; it targets the same causal disentanglement
definition as \gls{IOSS} but relies on ground truth features.

\parhead{Results.} \Cref{tab:tab_disentangle_measure} presents the
results. The \gls{IOSS} outperforms baseline unsupervised
disentanglement metrics in distinguishing disentanglement from
entanglement.
\Cref{fig:disentangle-measure-1,fig:disentangle-measure-2,fig:disentangle-measure-3,fig:disentangle-measure-4}
also show that \gls{IOSS} can better separate disentangled and
entangled representations than existing unsupervised disentanglement
metrics.


\begin{table*}[t]
\begin{center}
\begin{tabular}{lcccc}
\toprule
{} &  mpi3d &  smallnorb &  dsprites &  cars3d \\
\midrule
IOSS (this paper)          &  \bfseries{0.998(0.045)} &      \bfseries{0.968(0.176)}&     \bfseries{0.980(0.140)} &  0.892(0.311) \\
Total Correlation          &  0.858(0.349) &      0.070(0.255) &     0.162(0.369) &   0.090(0.286) \\
Wasserstein Dependency     &  0.956(0.205) &      0.478(0.500) &     0.310(0.463) &   \bfseries{0.964(0.186)} \\
Intervention Robustness (oracle)               &  1.000(0.000) &      1.000(0.000) &     1.000(0.000) &   1.000(0.000) \\
\bottomrule
\end{tabular}
\caption{\gls{IOSS} outperforms existing disentanglement metrics in
distinguishing entangled and disentangled features. The table presents
the proportion of disentangled/entangled pairs that the metric
correctly distinguish. Intervention Robustness Score is an oracle
metric as it makes use of ground-truth features.(Higher is better.)
\label{tab:tab_disentangle_measure}}
\end{center}
\end{table*}


\begin{figure}[t]
  \centering
  \captionsetup[subfigure]{justification=centering}
\begin{subfigure}{0.31\textwidth}
  \centering
    \includegraphics[width=\linewidth]{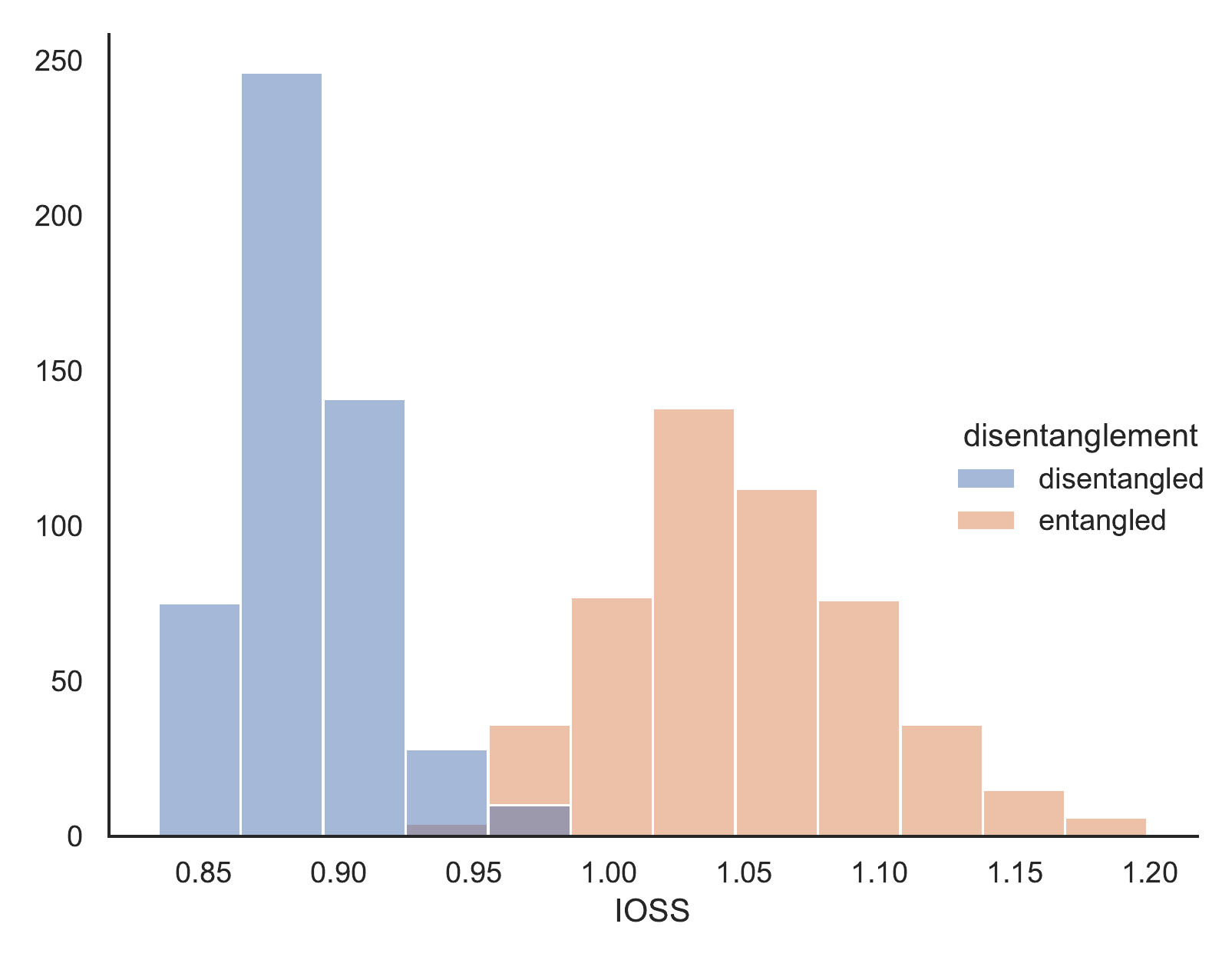}
  \caption{\gls{IOSS}}
\end{subfigure}
\begin{subfigure}{0.31\textwidth}
  \centering
    \includegraphics[width=\linewidth]{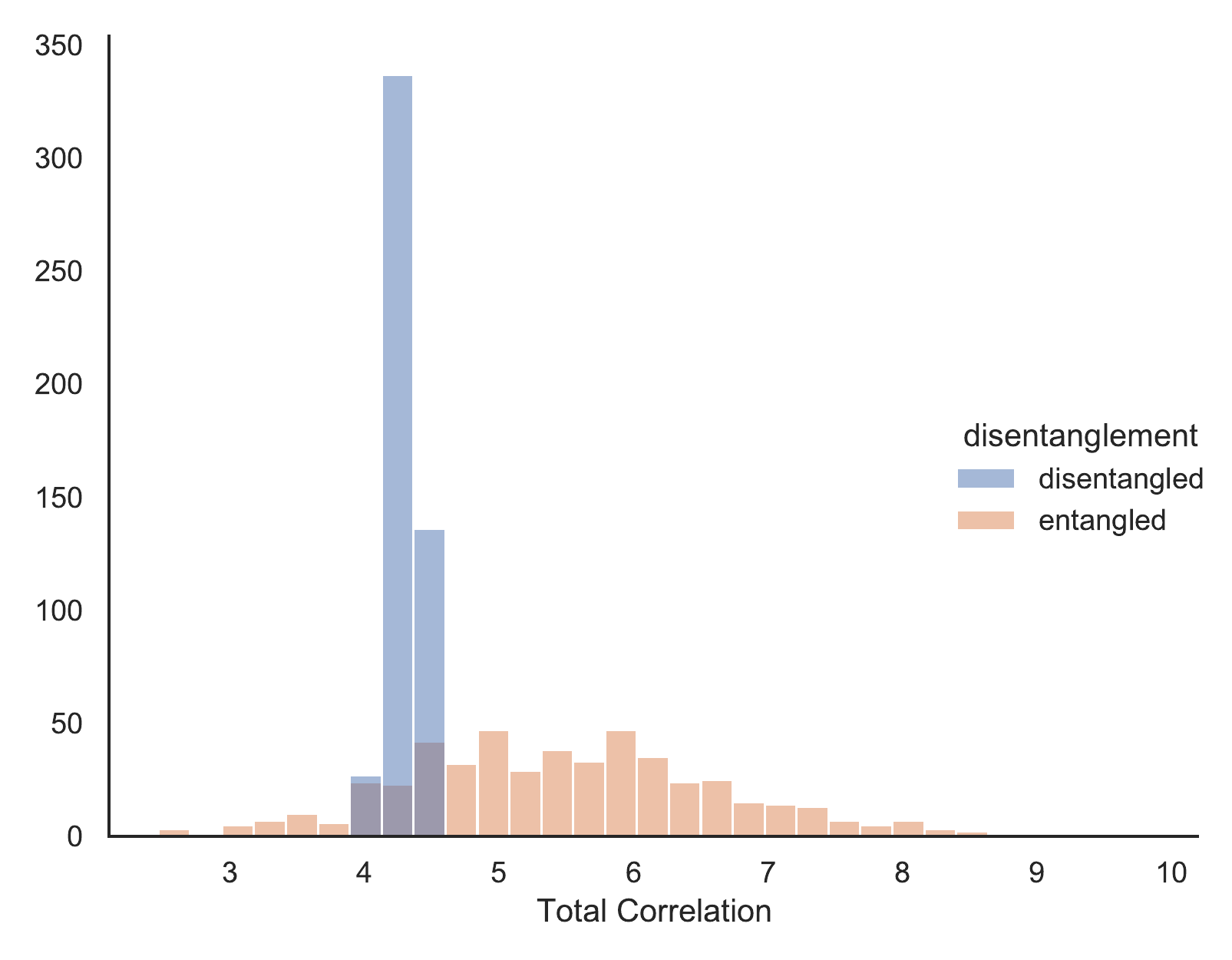}
  \caption{Total Correlation}
\end{subfigure}
\begin{subfigure}{0.31\textwidth}
  \centering
    \includegraphics[width=\linewidth]{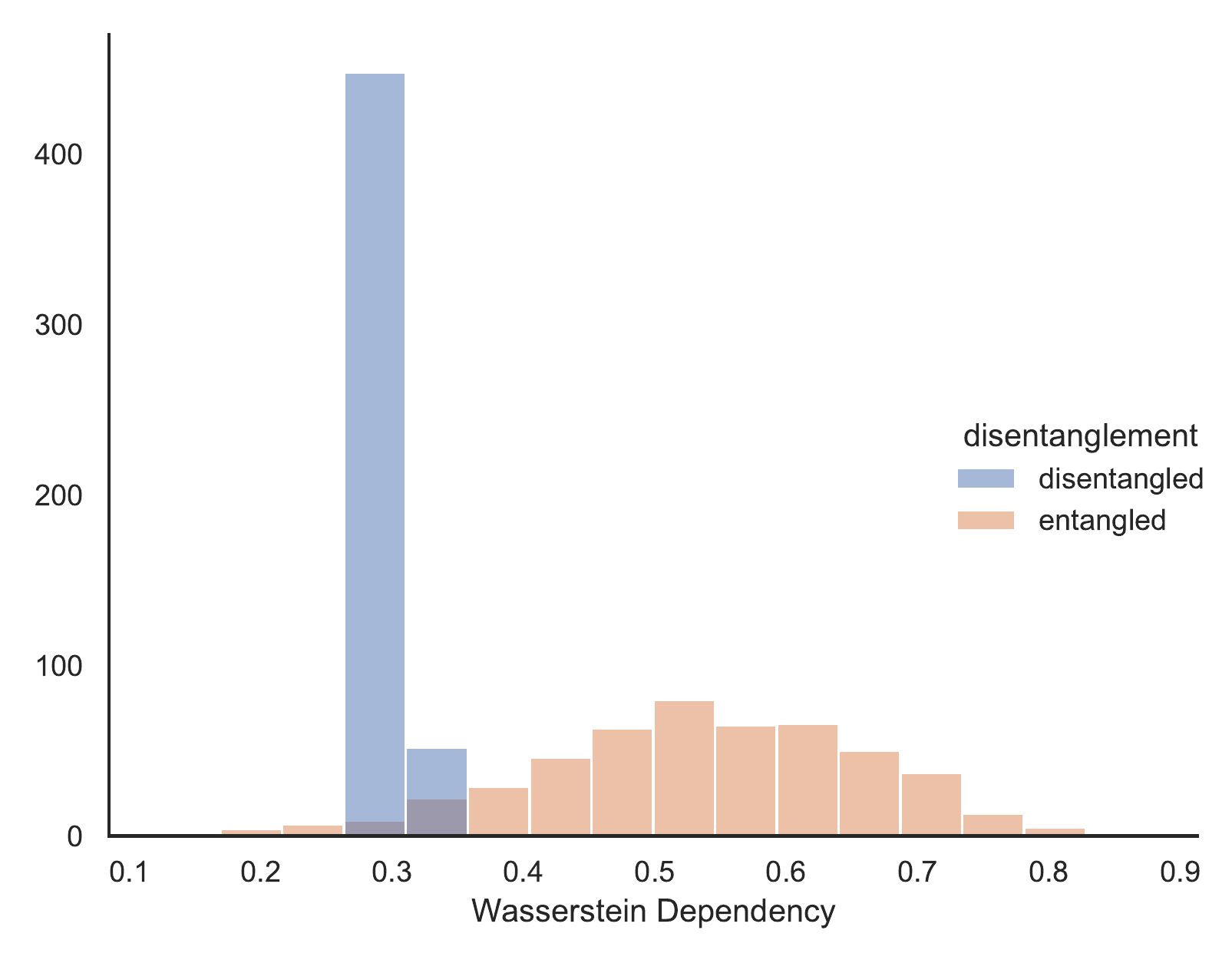}
  \caption{Wasserstein Dependency}
\end{subfigure}
\caption{IOSS can better distinguish entangled and disentangled
representations than existing unsupervised disentanglement metrics on
the mpi3d dataset. \label{fig:disentangle-measure-1}}
\end{figure}

\subsubsection{Does the \gls{IOSS} regularizer encourage disentangled
representations?}

\label{subsubsec:disentangle-learn}

We next apply the \gls{IOSS} penalty to learn disentangled
representations via \gls{VAE}s across all fours datasets: mpi3d,
smallnorb, dsprites, and cars3d. We work with subsampled datasets such
that the ground truth features are highly correlated (with correlation
$\approx 0.8$) following a similar process as in
\Cref{subsec:disentangle-measure}. We then fit \gls{VAE}s with an
increasingly strong regularization with the \gls{IOSS} penalty. We
evaluate the disentanglement of the learned representation using the
supervised disentanglement score of intervention robustness
score~\citep{Suter2019}; a higher score implies better
disentanglement.

\parhead{Competing methods.} We compare \gls{VAE}+\gls{IOSS} with
other unsupervised disentanglement algorithms including classical
\gls{VAE}~\citep{kingma2014auto},
FactorVAE~\citep{kim2018disentangling},
betaVAE~\citep{higgins2017beta}, and
betaTCVAE~\citep{chen2018isolating}.

\parhead{Disentanglement of \gls{VAE}+\gls{IOSS}-learned
representations.} \Cref{fig:ioss-rep-irs} shows that
\gls{VAE}+\gls{IOSS} produce more causally disentangled
representations than existing unsupervised disentanglement algorithms
when ground truth features are highly correlated. These results
corroborate \Cref{fig:entangle_ys,fig:corr_ys}. When features are
highly correlated, they may still be disentangled. Hence enforcing
statistical independence like beta-TCVAE and FactorVAE in this setting
does not perform disentanglement. This setting violates the core
assumption of beta-TCVAE and FactorVAE that the disentangled factors
are independent; i.e., their joint distribution is a product of their
marginals. Thus beta-TCVAE and FactorVAE do not perform as well in
disentanglement with correlated features. In contrast, VAE with
\gls{IOSS} regularization can accommodate correlated factors, as the
causal graph \Cref{fig:causalrep-scm} allows an unobserved common
cause between the factors.

Moreover, \Cref{fig:regularization-ioss} shows that, across datasets,
the \gls{VAE}+\gls{IOSS}-learned representation becomes increasingly
disentangled as we regularize with the \gls{IOSS} penalty. It implies
that the \gls{IOSS} penalty indeed encourages learning causally
disentangled representations. \Cref{fig:ioss-support} also illustrates
that regularizing for \gls{IOSS} indeed encourages independent support
across different dimensions of the learned representations.

\parhead{Tradeoff between informativeness and disentanglement.} We
finally evaluate the tradeoff between informativeness and
disentanglement in representation learning. We increase the
regularization strengths of \gls{IOSS} and evaluate the log-likelihood
of the fitted \gls{VAE}. \Cref{fig:ioss-tradeoff} shows that the
log-likelihood stays stable despite the increasing regularization for
disentanglement. While increasing the regularization strengths of
\gls{IOSS} encourages disentanglement, it does not compromise the
informativeness of the learned representations in \gls{VAE}. This
result suggests the orthogonality between the disentanglement
desiderata and other informativeness-related desiderata.


\begin{figure}[t]
\centering
\begin{subfigure}{0.3\textwidth}
\includegraphics[width=\linewidth]{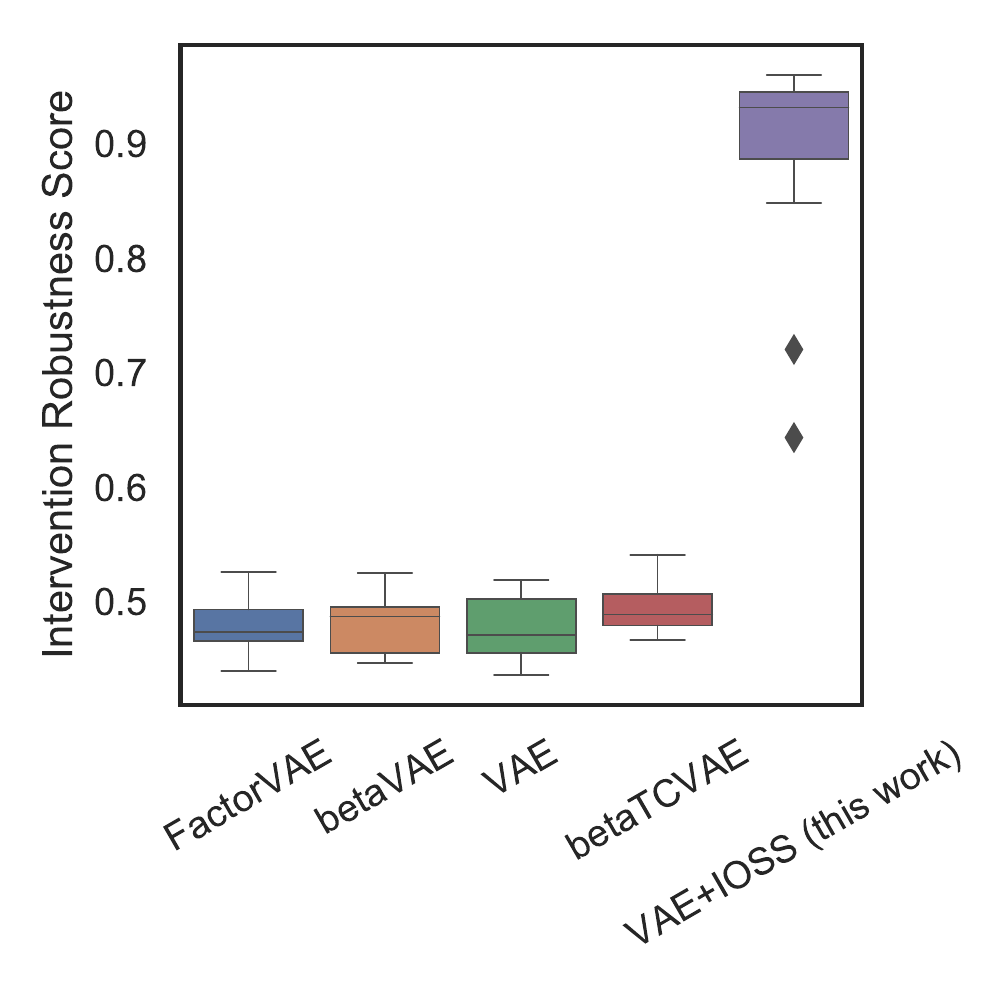}
\caption{Disentanglement of \gls{IOSS} learned representations
  \label{fig:ioss-rep-irs}}
  \end{subfigure}
\begin{subfigure}{0.3\textwidth}
\centering
\includegraphics[width=\linewidth]{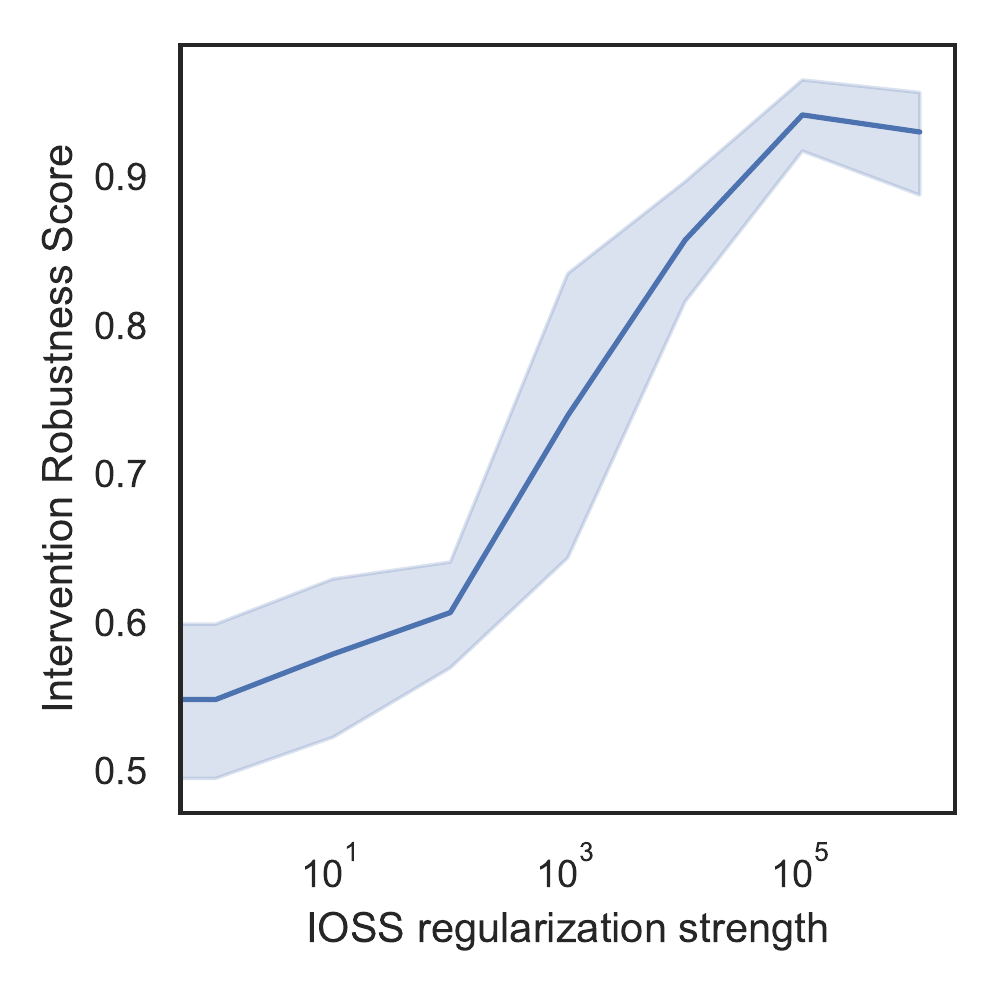} 
  \caption{Regularization with \gls{IOSS}
  penalty\label{fig:regularization-ioss}}
\end{subfigure}
\begin{subfigure}{0.3\textwidth}
\centering
\includegraphics[width=\linewidth]{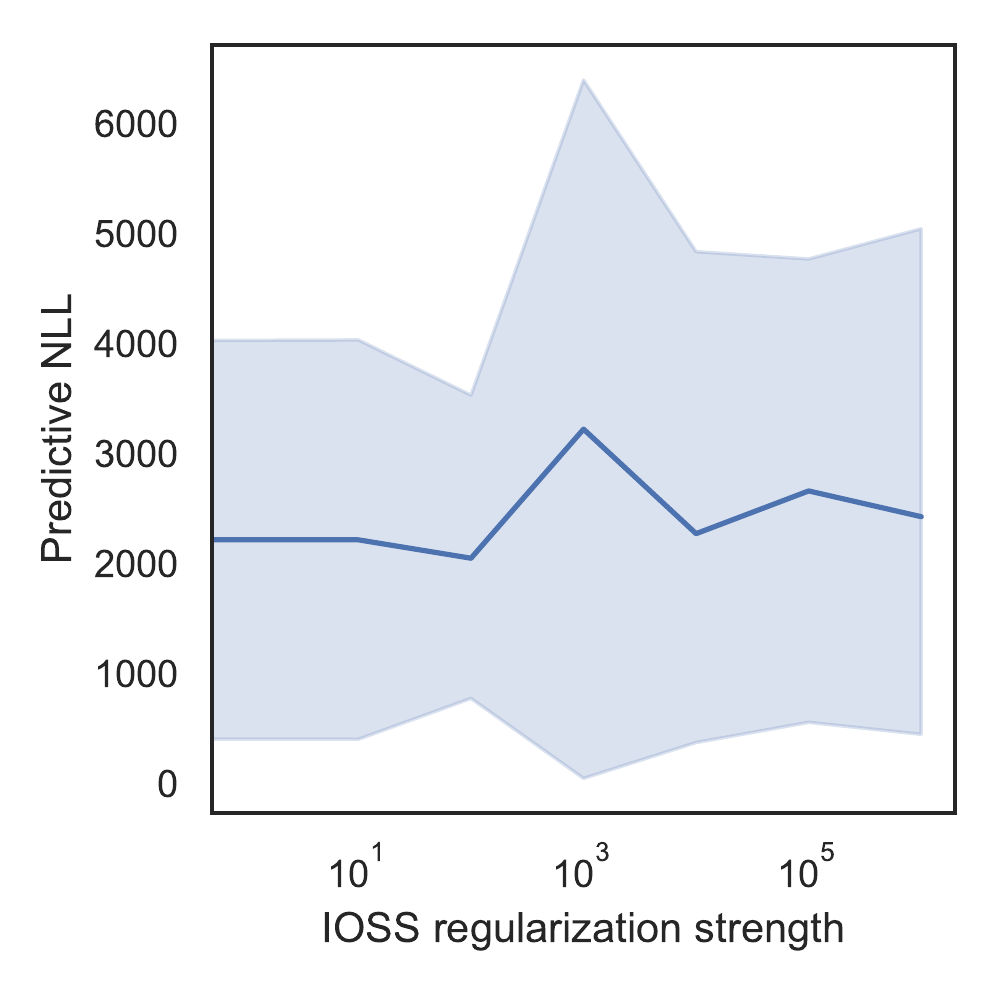} 
  \caption{Informativeness \& disentanglement tradeoff
  \label{fig:ioss-tradeoff}}
\end{subfigure}
\caption{(a) \gls{VAE}+\gls{IOSS} outperforms existing
unsupervised disentanglement algorithms in producing representations
that are more causally disentangled. (b) Regularizing for \gls{IOSS}
encourages learning disentangled representations. The intervention
robustness score of the representation increases as we increase the
regularization strengths of \gls{IOSS}. (c) There is no obvious
tradeoff between informativeness and disentanglement. The model fit
stays stable despite we increase the regularization strengths of
\gls{IOSS}.}
\end{figure}

\section{Discussion}
\label{sec:conclusion}
\glsreset{PNS}

We have shown that desiderata for representation learning, often
discussed informally but rarely defined formally, can be usefully
formalized from the perspective of causal inference. By studying the
observable implications of our causal definitions, we develop metrics
that evaluate representational desiderata and algorithms that enforce
these desiderata.

We have focused on two examples of representation learning desiderata:
(1) efficiency and non-spuriousness in supervised representation
learning, and (2) disentanglement in unsupervised representation
learning.  Our overall workflow can be summarized as follows:
``desiderata''$\rightarrow$``causal definitions'' $\rightarrow$
``observable implications'' $\rightarrow$ ``metrics and algorithms.''
We have shown that this workflow can lead to practical evaluation
metrics and representation learning algorithms.

Targeting the desiderata of efficiency and non-spuriousness in the
supervised setting, we view the representation as a cause of the label
and formalize these desiderata as the \gls{PNS} of the cause. Studying
the observable implications of \gls{PNS} enables us to measure
efficiency and non-spuriousness with observational data. It also
allows us to formulate representation learning as a task of finding
necessary and sufficient causes. We operationalize this task by
developing the CAUSAL-REP algorithm. En route, we develop
identification results for the \gls{PNS} given high-dimensional
(rank-degenerate) data.

To encourage unsupervised disentanglement, we again begin with a
causal definition, and we obtain metrics for disentanglement that
again rely on studying its observable implications. We capitalize on
the observation that causally disentangled variables---which have no
causal connections among each other---must have independent support
under a standard positivity condition. Based on this insight, we
develop the \glsreset{IOSS} \gls{IOSS} for assessing causal
disentanglement. We further establish the identifiability of
representations with independent support, which enables representation
learning with a \gls{IOSS} penalty.

For future work, one promising direction is to identify other
desiderata in representation learning (or machine learning tasks in
general) that may be formalized using causal notions. One may follow
the workflow to obtain evaluation metrics and learning algorithms for
the desiderata.

Another direction is to extend the metrics and algorithms developed in
this work. One may extend the CAUSAL-REP algorithm to general
anti-causal learning, without requiring the assumptions (e.g.,
pinpointability) required in \Cref{thm:do-prob-highdim}. Such an
extension would enable the treatment of reverse causal inference via
probabilities of causation. On the disentanglement side, \gls{IOSS}
can also be extended to learn primitives for compositional
generalization. Primitives share similar properties as disentangled
features; they cannot causally affect each other. In this sense,
\gls{IOSS}-type ideas can potentially offer new approaches to
compositional generalization.

\vspace{20pt}

\parhead{Acknowledgments. } We thank Shubhangi Ghosh, Luigi Gresele,
Julius von Kugelgen, Michel Besserve, and Bernhard Sch\"{o}lkopf for
useful discussions and for pointing out a counterexample to the
Theorem 11 in a previous version of the manuscript.

\clearpage
\putbib[BIB1]
\end{bibunit}

\appendix

\onecolumn

\textbf{\Large{Supplementary Materials}}
\appendix

\section{Proof of \Cref{thm:pns-id}}

\label{sec:thm-pns-id-proof}

\begin{proof}
The proof generalizes the proof of Theorem 9.2.10 in
\citet{Pearl2011}.

We first notice the following:
\begin{align}
P(Y(\mbZ=\mbz) = y) = P(Y(\mbZ=\mbz)=y, Y(\mbZ\ne\mbz) =y) + P(Y(\mbZ=\mbz)=y, Y(\mbZ\ne\mbz) \ne y)\label{eq:pns-po}
\end{align}

Next denote $\epsilon\triangleq P(Y(\mbZ=\mbz)\ne y, Y(\mbZ\ne\mbz)
=y)\geq 0$. Then we have
\begin{align}
P(Y(\mbZ\ne\mbz) =y) &= P(Y(\mbZ=\mbz)=y, Y(\mbZ\ne\mbz) =y) + P(Y(\mbZ=\mbz)\ne y, Y(\mbZ\ne\mbz) =y)\\
&= P(Y(\mbZ=\mbz)=y, Y(\mbZ\ne\mbz) =y) +\epsilon. \label{eq:pns-ne}
\end{align}

Substituting \Cref{eq:pns-ne} into \Cref{eq:pns-po} implies that
\begin{align}
P(Y(\mbZ=\mbz) = y) =P(Y(\mbZ\ne\mbz) =y)  + P(Y(\mbZ=\mbz)=y, Y(\mbZ\ne\mbz) \ne y) -\epsilon,
\end{align}
which implies
\begin{align}
P(Y(\mbZ=\mbz)=y, Y(\mbZ\ne\mbz) \ne y) 
= &P(Y(\mbZ=\mbz) = y) - P(Y(\mbZ\ne\mbz) =y) + \epsilon\\
= & P(Y=y\g \rmdo(\mbZ=\mbz)) - P(Y=y\g \rmdo(\mbZ\ne\mbz)) + \epsilon\\
\geq & P(Y=y\g \rmdo(\mbZ=\mbz)) - P(Y=y\g \rmdo(\mbZ\ne\mbz)).
\end{align}

The inequality becomes equality when $\epsilon=0$ under the
monotonicity condition in \Cref{thm:pns-id}.

\end{proof}

\section{The definition of functional interventions recovers backdoor
adjustment}

\label{sec:functional-interventions-supp}

The definition of functional
interventions~(\Cref{defn:functional-intervention}) recovers the
standard backdoor adjustment as special cases when the function
$f(\mbX)$ returns $\mbX$ or its subset (assuming the causal graph
\Cref{fig:causalrep-scm}).

For example, when $f(\mbX) = \mbX$, then
\begin{align*}
&P(Y\g\rmdo(f(\mbX)=\mbz)) = P(Y\g\rmdo(\mbX=\mbz)) \\
&= \int P(Y\g \rmdo(\mbX), \mbC) P(\mbX\g \mbC, \mbX=\mbz) P(\mbC)\dif \mbX\dif \mbC\\
&=\int P(Y\g \mbX) \cdot \delta_{\mbX=\mbz} \cdot P(\mbC)\dif \mbX\dif \mbC\\
&= P(Y\g \mbX=\mbz),
\end{align*}
where the third inequality is due to no unobserved confounding
between $\mbX$ and $Y$ in \Cref{fig:causalrep-scm}. 

As another special case, suppose the function $f(\mbX)$ returns the
subset of $\mbX=(X_1, \ldots, X_m)$ except $X_1$, i.e., $f(\mbX) =
(X_2, \ldots, X_m)$. Then \Cref{eq:functional-intervention-def}
recovers backdoor adjustment:
\begin{align*}
&P(Y\g\rmdo(f(\mbX)=\mbz)) = P(Y\g\rmdo((X_2, \ldots, X_m)=\mbz)) \\
&= \int P(Y\g \rmdo(\mbX), \mbC) P(\mbX\g \mbC, (X_2, \ldots, X_m)=\mbz) P(\mbC)\dif \mbX\dif \mbC\\
&=\int P(Y\g \mbX) \cdot \delta_{(X_2, \ldots, X_m)=\mbz} \cdot P(X_1\g \mbC) P(\mbC)\dif \mbX\dif \mbC\\
&=\int P(Y\g \mbX_1, (X_2, \ldots, X_m)=\mbz) \cdot P(X_1\g \mbC) P(\mbC)\dif X_1\dif \mbC\\
&=\int P(Y\g \mbX_1, (X_2, \ldots, X_m)=\mbz, \mbC) \cdot P(X_1\g (X_2, \ldots, X_m)=\mbz, \mbC) P(\mbC)\dif X_1\dif \mbC\\
&=\int P(Y,\mbX_1\g (X_2, \ldots, X_m)=\mbz, \mbC) P(\mbC)\dif X_1\dif \mbC\\
&=\int P(Y\g (X_2, \ldots, X_m)=\mbz, \mbC) P(\mbC)\dif \mbC,
\end{align*}
where the third inequality is due to no unobserved confounding between
$\mbX$ and $Y$ in \Cref{fig:causalrep-scm}, and the fifth inequality
is due to the conditional independence of $X_1, \ldots, X_m$ given
$\mbC$ in \Cref{fig:causalrep-scm}.

\section{Proof of \Cref{thm:do-prob-highdim}}

\label{sec:do-prob-highdim-proof}

\begin{proof}

We calculate the intervention distribution for functional
interventions $f(\mbX) = \tilde{f}((X_j)_{j\in S})$
\begin{align}
&P(Y\g \rmdo(f(\mbX)))\nonumber\\
& = \int P(Y\g (X_j)_{j\in S}, \mbC) P((X_j)_{j\in S}\g \mbC, f(\mbX)) P(\mbC)\dif (X_j)_{j\in S}\dif \mbC,\label{eq:id-do-req1}\\
& = \int P(Y\g (X_j)_{j\in S}, \mbC,  f(\mbX)) P((X_j)_{j\in S}\g \mbC, f(\mbX)) P(\mbC)\dif (X_j)_{j\in S}\dif \mbC,\label{eq:id-do-req1}\\
& = \int P(Y\g \mbC,  f(\mbX)) P(\mbC)\dif \mbC,\label{eq:id-do-req1}\\
& =\int P(Y\g f(\mbX), h(\mbX))\cdot P(h(\mbX))  \dif h(\mbX).
\end{align}

The first equation is due to the definition of functional
interventions on $f(\mbX)$~\citep{puli2020causal}. It is a soft
intervention on $\mbX$ with a stochastic policy conditional on the
parents of $\mbX$. The stochastic policy is $P(\mbX\g f(\mbX),
PA(\mbX))$ where $PA(\mbX)$ denotes the parents of $\mbX$.

The second equation is due to $P(Y\g (X_j)_{j\in S}, \mbC,
f(\mbX))=P(Y\g (X_j)_{j\in S}, \mbC)$.

\sloppy
The third equation is due to the observability and positivity
condition. Observability and positivity guarantee that $P((X_j)_{j\in
S}, \mbC)$ > 0 for all $(X_j)_{j\in S}, \mbC$. Thus we can calculate
$P(Y\g (X_j)_{j\in S}, \mbC,  f(\mbX))=P(Y\g (X_j)_{j\in S},
\mbC)$. Moreover, the two conditions imply that $P(f(\mbX),
\mbC)>0$ for all $f(\mbX),
\mbC$, and hence we can calculate $P((X_j)_{j\in S}\g \mbC, f(\mbX))$.

The fourth equation is due to the pinpointability condition.

\end{proof}

\section{Pinpointing the unobserved common cause $\mbC$}

\label{sec:pinpoint-C}

The CAUSAL-REP algorithm begins with a step of pinpointing the
unobserved common cause $\mbC$. In this step, we infer $\mbC$ from the
observational data $\mbX$, when $\mbC$ is low-dimensional and $\mbX$
is high-dimensional. Operationally, as $\mbC$ renders $\mbX=(X_1,
\ldots, X_m)$ conditionally independent, we infer $\mbC$ by fitting a
probabilistic factor model to $\mbX$,
\begin{align} p(\mbx_i, \mbc_i\s\phi) = p(x_{i1}, \ldots, x_{im},
\mbc_i\s \phi) = p(\mbc_i)\prod_{j=1}^m p(x_{ij}\g \mbc_i\s \phi).
\end{align}

Specifically, we consider \gls{VAE}, a flexible probabilistic factor
model~\citep{kingma2014auto},
\begin{align}
\mbC_i &\sim p(\mbc_i),\label{eq:vae-gen1}\\
\mbX_i \g \mbC_i &\sim p(\mbx_i\g \mbc_i\s \theta)=\mathrm{EF}(\mbx_i\g f_\theta(\mbc_i)\s \lambda_\theta),\label{eq:vae-gen2}
\end{align} 
where $\mathrm{EF}$ is an exponential family distribution, $\theta =
(f_\theta, \lambda_\theta)$ are the parameters, and $f_\theta:
\mathcal{\mbC} \rightarrow \mathcal{\mbX}$ is a flexible neural
network.

Next we infer $p(\mbc_i\g \mbx_i)$ using variational approximation
$p(\mbc_i\g \mbx_i)\approx q_{\phi^*}(\mbc_i\g \mbx_i)$ where
$q_{\phi^*}(\mbc\g\mbx)$ maximizes the
\gls{ELBO} objective of \gls{VAE}~\citep{blei2017variational},
\begin{align*}
q_{\phi^*}=\argmax_{q_\phi}\sum_{i=1}^n\left[\log p(\mbx_i, \mbc_i) - \log
q_\phi(\mbc_i\g \mbx_i)\right],
\end{align*}
and $q_\phi(\cdot\g \mbx)$ parametrizes a flexible family of
distributions with parameters $\phi$. For example, we can have
$q_\phi(\cdot\g \mbx) = \cN(Z_{1,\phi}(\mbx), Z_{2,\phi}(\mbx)\cdot
I)$ or even more flexible non-Gaussian distributions via normalizing
flows.

Finally, we set
\begin{align}
\label{eq:pinpoint-c-supp}
\mbC_i=h(\mbX_i) \approx
\E{q_{\phi^*}(\mbc_i\g \mbx_i)}{\mbC_i\g \mbX_i}, i=1, \ldots, n.
\end{align}

Though we use variational approximation for $p(\mbc_i\g \mbx_i)$,
\Cref{eq:pinpoint-c-supp} can often give a good approximation of
$\mbC_i$ when $\mathrm{dim}(\mbX)\gg\mathrm{dim}(\mbC)$, or more
precisely when the pinpointability condition
holds~\citep{chen2020structured,wang2019frequentist}.

\section{Calculating \gls{PNS} lower bounds with linear models}
\label{sec:calculate-pns-linear}

\allowdisplaybreaks
We calculate the lower bound of the conditional
efficiency and non-spuriousness \\$\underline{\gls{PNS}_n(f_j(\mbX), Y
\g f_{-j}(\mbX))}$ with the linear model (\Cref{eq:y-model-C}):
\begin{align}
&\underline{\gls{PNS}_n(f_j(\mbX), Y \g f_{-j}(\mbX))}\nonumber\\
&=\prod_{i=1}^n \int
\left[P(Y=y_i \g f_j(\mbX) = f_j(\mbx_i), f_{-j}(\mbX) =
f_{-j}(\mbx_i), \mbC)\right.\nonumber\\
&\quad\qquad\left.- P(Y=y_i \g f_j(\mbX) \ne f_j(\mbx_i),
f_{-j}(\mbX) = f_{-j}(\mbx_i), \mbC)\right]\cdot P(\mbC) \dif \mbC\nonumber\\
&=\prod_{i=1}^n\int  \left[\cN(y_i\s \beta_0 + \beta_j f_j(\mbx_i) + \sum_{j'\ne j} \beta_{j'} f_{j'}(\mbx_i) + \boldsymbol{\gamma}^\top \mbC, \sigma^2) \right.\nonumber\\
&\quad\qquad \left.- \cN(y_i\s \beta_0 + \beta_j \E{}{f_j(\mbx_i)} + \sum_{j'\ne j} \beta_{j'} f_{j'}(\mbx_i) + \boldsymbol{\gamma}^\top \mbC, \sigma^2)\right]\cdot P(\mbC) \dif\mbC\nonumber\\
&= \prod_{i=1}^n\int \left[\exp\left(-\frac{(\gamma^\top (\mbc_i-\mbC) +\epsilon_i )^2}{2\sigma^2} \right) \right.\nonumber\\
&\quad\quad\left.-\exp\left(-\frac{(\beta_j\cdot (f_j(\mbx_i) -\E{}{f_j(\mbx_i)})+ \gamma^\top (\mbc_i-\mbC) +\epsilon_i)^2}{2\sigma^2}\right)\right]\cdot P(\mbC) \dif \mbC \times (2\pi \sigma^2)^{-\frac{n}{2}}\\
&\approx \prod_{i=1}^n\int \left[\left(1-\frac{(\gamma^\top (\mbc_i-\mbC) +\epsilon_i )^2}{2\sigma^2} \right) \right.\nonumber\\
&\quad\quad\left.-\left(1-\frac{(\beta_j\cdot (f_j(\mbx_i) -\E{}{f_j(\mbx_i)})+ \gamma^\top (\mbc_i-\mbC) +\epsilon_i)^2}{2\sigma^2}\right)\right]\cdot P(\mbC) \dif \mbC \times (2\pi \sigma^2)^{-\frac{n}{2}}\label{eq:taylor-1}\\
&= \prod_{i=1}^n\int \left(\frac{(\beta_j\cdot (f_j(\mbx_i) -\E{}{f_j(\mbx_i)}))^2 + 2\cdot \beta_j\cdot (f_j(\mbx_i) -\E{}{f_j(\mbx_i)}) \cdot  (\gamma^\top (\mbc_i-\mbC) +\epsilon_i)}{2\sigma^2}\right)\nonumber\\
&\quad\quad \cdot P(\mbC) \dif \mbC\times (2\pi \sigma^2)^{-\frac{n}{2}}\\
&= \prod_{i=1}^n \left(\frac{(\beta_j\cdot (f_j(\mbx_i) -\E{}{f_j(\mbx_i)}))^2 + 2\cdot \beta_j\cdot (f_j(\mbx_i) -\E{}{f_j(\mbx_i)}) \cdot  (\gamma^\top (\mbc_i-\E{}{\mbC}) +\epsilon_i)}{2\sigma^2}\right)\nonumber\\
&\quad\quad \times (2\pi \sigma^2)^{-\frac{n}{2}}\\
&= \prod_{i=1}^n \exp\left(\frac{(\beta_j (f_j(\mbx_i) -\E{}{f_j(\mbx_i)}))^2 + 2 \beta_j (f_j(\mbx_i) -\E{}{f_j(\mbx_i)})   (\gamma^\top (\mbc_i-\E{}{\mbC}) +\epsilon_i)}{2\sigma^2}-1\right)\nonumber\\
&\quad\quad \times (2\pi \sigma^2)^{-\frac{n}{2}}\label{eq:taylor-2}\\
&=  \exp\left(\frac{\sum_{i=1}^n(\beta_j (f_j(\mbx_i) -\E{}{f_j(\mbx_i)}))^2 + 2 \sum_{i=1}^n\beta_j (f_j(\mbx_i) -\E{}{f_j(\mbx_i)})   (\gamma^\top (\mbc_i-\E{}{\mbC}))}{2\sigma^2}-n\right)\nonumber\\
&\quad\quad \times (2\pi \sigma^2)^{-\frac{n}{2}},\label{eq:pns-lower-last-step-supp}
\end{align}
where $\cN(\cdot)$ denotes the Gaussian density, $\epsilon_i = y_i -
(\beta_0 + \boldsymbol{\beta}^\top f(\mbx_i)  +
\boldsymbol{\gamma}^\top \mbc_i)$ is the residual of the regression in
\Cref{eq:y-model-C}. \Cref{eq:taylor-1,eq:taylor-2} make use of Taylor
approximation $\exp(x) \approx 1 + x$.
\Cref{eq:pns-lower-last-step-supp} makes use of $\sum_{i=1}^n
\epsilon_i\cdot (f_j(\mbx_i) -\E{}{f_j(\mbx_i)}) \approx
\E{}{\epsilon \cdot (f_j(\mbX) -\E{}{f_j(\mbX)})}=0$ in the regression
model.

Finally, in CAUSAL-REP
\Cref{alg:causal-rep,alg:causal-rep-unsupervised}, we often impose an
R-squared penalty to encourage the possitivity of $f_j(\mbX)$ given
$\mbC$. In these cases, we often have $\sum_{i=1}^n\beta_j\cdot
(f_j(\mbx_i) -\E{}{f_j(\mbx_i)}) \cdot  (\gamma^\top
(\mbc_i-\E{}{\mbC})) \approx \beta_j\mathrm{Cov}(f_j(\mbX),
\gamma^\top\mbC)\approx 0$. Thus, we can further approximate the
\gls{PNS} by
\begin{align}
&\underline{\gls{PNS}_n(f_j(\mbX), Y \g f_{-j}(\mbX))}\nonumber\\
&\approx  \exp\left(\frac{\sum_{i=1}^n(\beta_j\cdot (f_j(\mbx_i) -\E{}{f_j(\mbx_i)}))^2}{2\sigma^2}-n\right)\times (2\pi \sigma^2)^{-\frac{n}{2}},
\end{align}
and thus,
\begin{align}
\log \underline{\gls{PNS}_n(f_j(\mbX), Y \g f_{-j}(\mbX))}\approx  \left(\frac{\sum_{i=1}^n(\beta_j\cdot (f_j(\mbx_i) -\E{}{f_j(\mbx_i)}))^2}{2\sigma^2}\right)+\mathrm{constant},
\end{align}
Similarly, we can obtain the lower bound of the (unconditional)
efficiency and non-spuriousness, similar to
\Cref{eq:pns-lower-last-step-supp},
\begin{align}
&\log \underline{\gls{PNS}_n(f(\mbX), Y)}\nonumber\\
&\approx \left(\frac{1}{2\sigma^2}\sum_{i=1}^n\left[ (\sum_{j=1}^d\beta_j\cdot (f_j(\mbx_i) -\E{}{f_j(\mbx_i)}))^2 \right.\right.\nonumber\\
&\left.\left.\qquad+ 2\cdot \sum_{j=1}^d\beta_j\cdot (f_j(\mbx_i) -\E{}{f_j(\mbx_i)}) \cdot  \gamma^\top (\mbc_i-\E{}{\mbC})\right]\right) + \mathrm{constant}.
\end{align}

\section{Details of the unsupervised CAUSAL-REP algorithm}

\label{sec:unsupervised-causalrep-details}

We summarize the unsupervised CAUSAL-REP in
\Cref{alg:causal-rep-unsupervised}.


\begin{algorithm}[t]
\small{
\SetKwData{Left}{left}\SetKwData{This}{this}\SetKwData{Up}{up}
\SetKwFunction{Union}{Union}\SetKwFunction{FindCompress}{FindCompress}
\SetKwInOut{Input}{input}\SetKwInOut{Output}{output}
\Input{The observational training data (without labels)
$\{\mbx_i\}_{i=1}^{n}$; the probabilistic factor model that generates
the training data $P(\mbX,
\mbC)$}

\Output{CAUSAL-REP representation function $\hat{f}(\cdot)$}
\BlankLine

Augment the unsupervised training dataset into a supervised one $\{\{\mbx_{i}^u,
\mby_{i}^u\}_{u=1}^U\}_{i=1}^n$ following
\Cref{eq:unsupervised-aug-label};

Fit a probabilistic factor model (\Cref{eq:factor-1,eq:factor-2}) and
infer $\{\{p(\mbc^u_i\g \mbx^u_i)\}_{u=1}^U\}_{i=1}^n$;

\If{Pinpointability holds, i.e. $p(\mbc^u_i\g \mbx^u_i)$ is
close to a point mass for all $i,u$}{
\ForEach{training datapoint $i$}{
Pinpoint the unobserved common cause $\mbC$:
$\mbc^u_i=h(\mbx^u_i)
\triangleq
\E{}{\mbc^u_i\g \mbx^u_i}$ for all $u=1,
\ldots, U$;}

Maximize \Cref{eq:causalrep-unsupervised} to obtain the CAUSAL-REP
representation $\hat{f}$;}

\caption{CAUSAL-REP (Unsupervised)}
\label{alg:causal-rep-unsupervised}}
\end{algorithm}

\section{Details of the empirical studies for CAUSAL-REP and
additional empirical results}

\label{sec:experiments-supp}

\subsection{Details of \Cref{sec:pns-distinguish}}

\Cref{fig:pns-measurement-supp-1,fig:pns-measurement-supp-2} present
additional results for \Cref{sec:pns-distinguish}. As $Z_1$ and $Z_2$
become increasingly correlated, the (lower bounds of) unconditional
\gls{POC} of $Z_2$ for $Y_1$ also increase. It is also consistent with
the intuition: $Z_2$ is an increasingly better surrogate of $Z_1$ for
$Y_1$ give higher correlations between $Z_1$ and $Z_2$.

\begin{figure}[t]
  \centering
\begin{subfigure}{\textwidth}
  \centering
    \includegraphics[width=\linewidth]{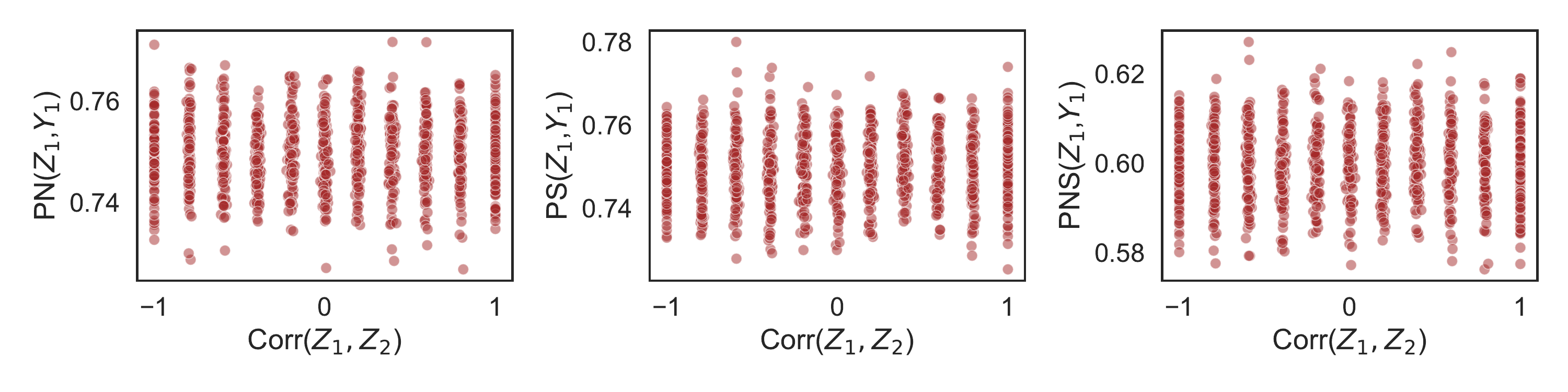}
  \caption{\gls{POC} of $Z_1$ for $Y_1$}
\end{subfigure}
\begin{subfigure}{\textwidth}
  \centering
    \includegraphics[width=\linewidth]{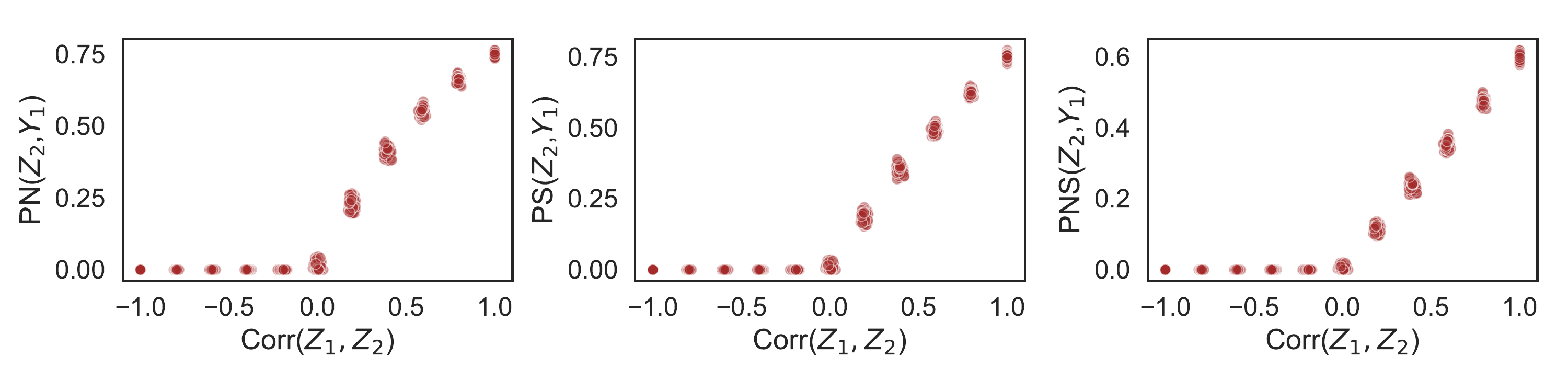}
  \caption{\gls{POC} of $Z_2$ for $Y_1$}
\end{subfigure}
\begin{subfigure}{\textwidth}
  \centering
    \includegraphics[width=\linewidth]{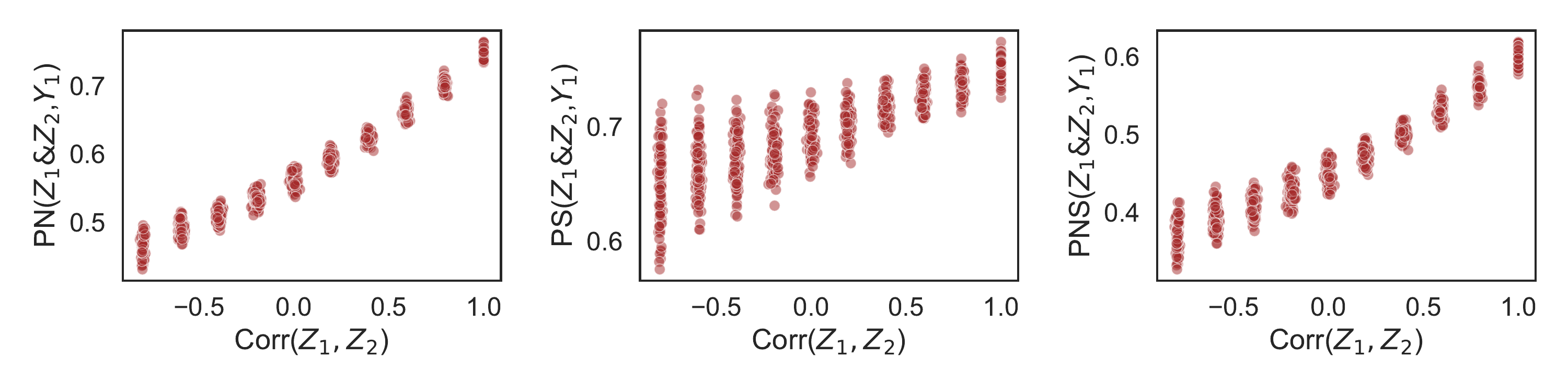}
  \caption{\gls{POC} of $Z_1 \& Z_2$ for $Y_1$}
\end{subfigure}
\caption{Lower bounds of probabilities of causation are consistent
with intuitive notions of feature necessity and sufficiency. As $Z_1$
and $Z_2$ become increasingly highly correlated, (a) the \gls{POC} of
$Z_1$ for $Y_1$ stays high, (b) the \gls{POC} of $Z_2$ for $Y_1$
starts to increase when the correlation turns positive, and (c) the
\gls{POC} of $Z_1\&Z_2$ increases.\label{fig:pns-measurement-supp-1}}
\end{figure}

\begin{figure}[t]
  \centering
\begin{subfigure}{\textwidth}
  \centering
    \includegraphics[width=\linewidth]{img/sec2-4-1-pns_distinguish/y2_z1.pdf}
  \caption{\gls{POC} of $Z_1$ for $Y_1$}
\end{subfigure}
\begin{subfigure}{\textwidth}
  \centering
    \includegraphics[width=\linewidth]{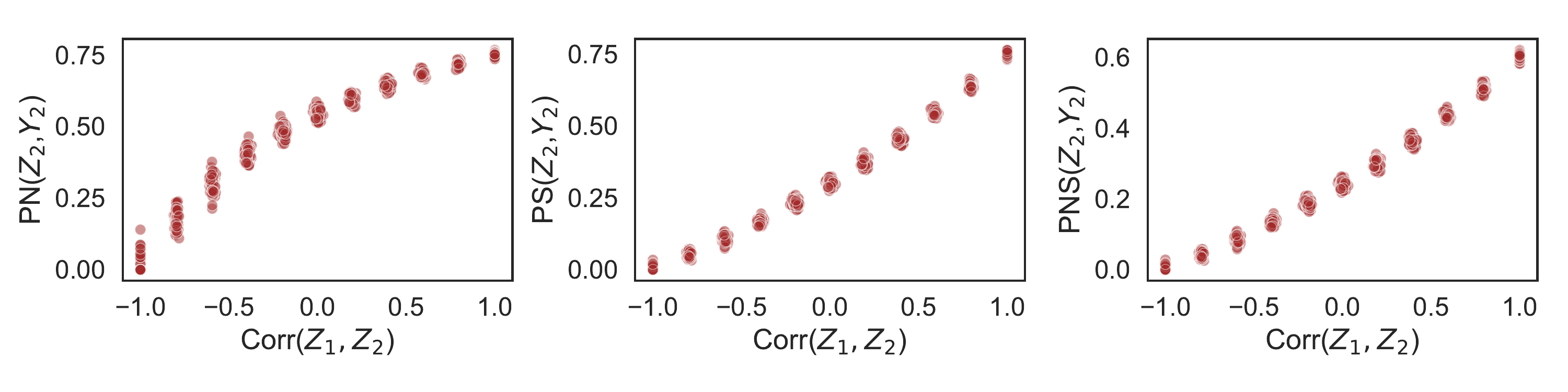}
  \caption{\gls{POC} of $Z_2$ for $Y_1$}
\end{subfigure}
\begin{subfigure}{\textwidth}
  \centering
    \includegraphics[width=\linewidth]{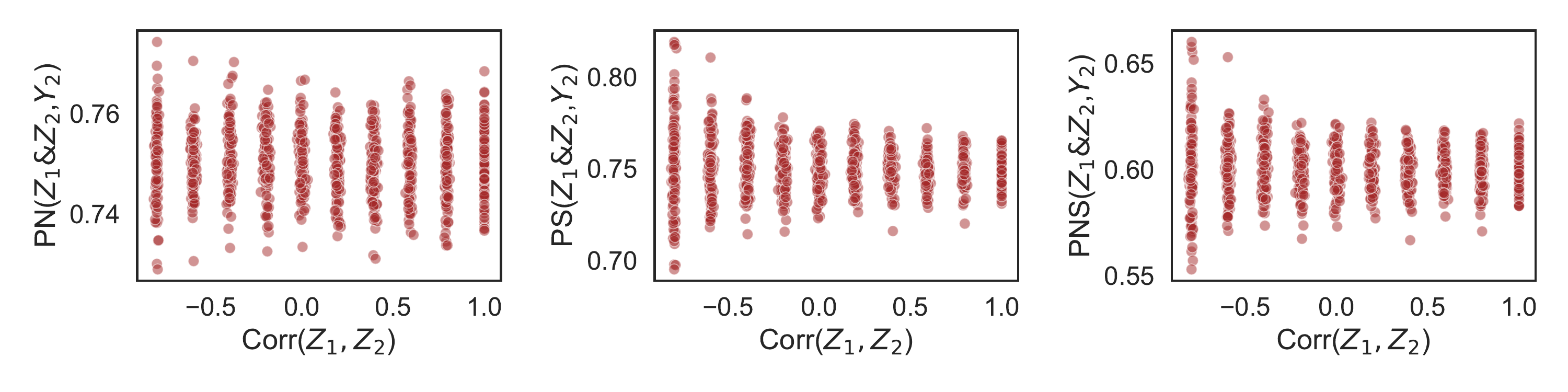}
  \caption{\gls{POC} of $Z_1 \& Z_2$ for $Y_1$}
\end{subfigure}
\caption{Lower bounds of probabilities of causation are consistent
with intuitive notions of feature necessity and sufficiency. As $Z_1$
and $Z_2$ become increasingly highly correlated, (a) the \gls{POC} of
$Z_1$ for $Y_2$ increases, (b) the \gls{POC} of $Z_2$ for $Y_2$
increases, and (c) the \gls{POC} of $Z_1\&Z_2$ stays high.
\label{fig:pns-measurement-supp-2}}
\end{figure}

\subsection{Details of the colored MNIST study}

\label{subsec:colored-mnist-supp}

We study CAUSAL-REP on the colored MNIST
dataset~\citep{arjovsky2019invariant}. The dataset builds on the
original MNIST data but color the image in a way that is highly
correlated with the digits label. We focus on the digits `3' and `8'
and colors `red' and `green.'

To create a training set, we color the `3' images in red with
probability $p$ and in green with probability $(1-p)$. Next, we color
the `8' images in red with probability $(1-p)$ and in green with
probability $p$. When $p\in[0,1]$ is large, then the color of the
image is highly correlated with the digit label in such a training
set. We further add noise to the ground truth digit label by randomly
flipping the labels with a probability of 0.25. The best possible
predictive accuracy is thus 0.75 (the yellow dashed line in
\Cref{fig:colored-mnist-supervised}).

The color of the images is a spurious feature in the training set; it
has a high correlation with the digit label but does not causally
determine the label. In contrast, the features of the digits
themselves are non-spurious features; they are highly correlated with
the digit label and can causally determine the label.

To create a test set, we color the images such that the color and
images are correlated oppositely. We color the `3' images in red with
probability $(1-p)$ and the `8' images in red with probability $p$. As
the color-image relationship is very different, a representation
learning algorithm will predict poorly in the test set if it only
captures color as a feature in the training set.

\subsection{Details of the reviews corpora study}

\Cref{tab:reviews-top-words} presents the most informative words of
the (positive or negative) ratings, suggested by the CAUSAL-REP
representation and the logistic regression coefficients. Across three
reviews corpora, logistic regression returns the spurious words
``as'', ``also'', ``am'', ``an'' as the top words. In contrast,
CAUSAL-REP extracts words that are more relevant for the ratings.


\allowdisplaybreaks
\begin{table}[t]
    \centering
    \begin{tabular}{p{0.1\linewidth}p{0.65\linewidth}p{0.2\linewidth}}
      Amazon &CAUSAL-REP  & Logistic Regression \\ \hline
      1& love\_this\_camera, recommend\_this\_camera, my\_first\_digital, great, best\_camera, camera\_if\_you, this\_camera\_and, camera\_have, excellent\_camera, camera\_bought\_this;
      & \multirow{5}{\linewidth}{am, an, also, as, love\_my, the\_tracfone, it\_real, which\_is, too, so\_much, is\_so\_much, which\_is\_pretty, nokia, ear, home, is\_must, for\_your, faster, must\_for, when\_use}\\
      2&this\_camera, camera, camera\_is, pictures, picture, the\_camera, digital, camera\_for, this\_camera\_is, digital\_camera;\\
      3 &really\_nice, hold\_the, excellent\_it, this\_one\_it, easy\_it, is\_superb, nice\_if, returning, too\_low, you\_need\_more; \\
      4 &with\_this, aa, took, came, yet, pictures\_of, camera\_in, computer, pictures\_in, for\_those;  \\
      5 &camera\_was, expect, the\_photos, by, camera\_are, blurry, sony, have\_an, had\_some, wife;  \\
    \end{tabular}

\vspace{5pt}

    \begin{tabular}{p{0.1\linewidth}p{0.65\linewidth}p{0.2\linewidth}}
      Tripadvisor &CAUSAL-REP  & Logistic Regression \\ \hline
      1& front\_door, typical, lady, room\_is, sized, yogurt, of\_italian, in\_we, was\_at, front\_desk\_to, was\_easy;
      & \multirow{5}{\linewidth}{am, as, an, also, we\_have, consideration, and\_nice, real, stay\_here\_again, good}\\
      2&lobby, man, directions, door, open, tried, seemed, with\_the, by\_the, almost;\\
      3 &man\_at, the\_price, is\_in, above\_and\_beyond, we\_checked\_out, in\_central\_rome, colliseum, great\_for, covered, place\_to\_sit; \\
      4 &the\_front, front, front\_desk, desk, the\_front\_desk, at\_the\_front, desk\_staff, front\_desk\_staff, desk\_was, front\_of;  \\
      5 &the\_people\_at, the\_people, stayed\_nights, people\_at, was\_very\_helpful, would\_not\_recommend, lovely\_and, great\_location, staff\_at, of\_my, pricey;  \\
    \end{tabular}

\vspace{5pt}
    \begin{tabular}{p{0.1\linewidth}p{0.65\linewidth}p{0.2\linewidth}}
      Yelp &CAUSAL-REP  & Logistic Regression \\ \hline
      1& but\_if, looking, what\_you, you\_go, but\_if\_you, that\_you, to\_do, lot, youll, try, own, do\_not;
      & \multirow{5}{\linewidth}{an, as, am, also, japanese\_food, because\_its, sure\_to\_get, ive, ive\_been, at\_night, up\_for\_it}\\
      2 & even\_if, your\_place, this\_restaurant, for\_you, you\_should, thank\_you, or\_if, thank, lamb, fabulous, is\_awesome;  \\
      3 & want\_the, then\_you, ahead, hollywood, dont\_want, suggest, with, please, check\_this\_place, all\_that;  \\
      4 & are, you\_like, if\_you\_like, here\_if, are\_looking, here, are\_in, is\_the\_place, you\_need, are\_looking\_for;  \\
      5 & if\_you, if, you, want, you\_want, you\_are, if\_you\_are, want\_to, if\_you\_want, your, you\_want\_to, you\_dont, dont;  \\
    \end{tabular}
    \caption{Across the Amazon, Tripadvisor, and Yelp reviews corpa,
    CAUSAL-REP learns representation that does not rely on the
    injected words that are spuriously correlated with the sentiment.
    The table shows the top 12 words identified by the
    five-dimensional representation from CAUSAL-REP; the five
    dimensions are not ordered. The representation obtained from
    logistic regression relies heavily on the spurious words ``am'',
    ``an'', ``also'', ``as.'' \label{tab:reviews-top-words}}
  
\end{table}

\subsection{Details of the colored and shifted MNIST study}

\label{subsec:colored-shifted-mnist-supp}

The colored and shifted MNIST dataset is created similarly as in the
colored MNIST. We consider four shifts: $(dx, dy) = (0,1), (1,0),
(0,1), (1,1)$---labeled 0,1,2,3---and shift the images such that the
shift label is highly correlated with the digit label in the training
set.

We evaluate the non-spuriousness of the representations using a
downstream prediction task with domain shift. Given a labeled training
set where the digit features are no longer correlated with the
spurious features, we learn a mapping from the representations to the
digit label. A representation can only predict the digit label well if
it captured the non-spurious digit features in unsupervised
representation learning.

\section{Details of \Cref{fig:disentanglement-truefeatures}}

Here we include the full pairwise scatter plot
(\Cref{fig:supp-disentanglement-truefeatures}) of the three sets of
features in \Cref{fig:disentanglement-truefeatures}.

\begin{figure}[t]
\centering
\begin{subfigure}{0.3\textwidth}
  \centering
    \includegraphics[width=\linewidth]{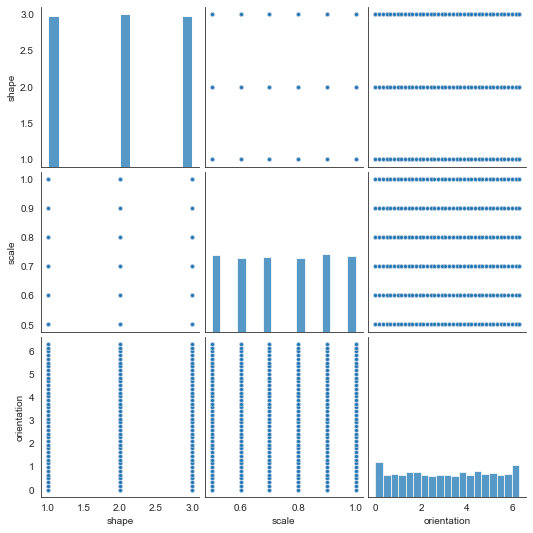} 
  \includegraphics[width=\linewidth]{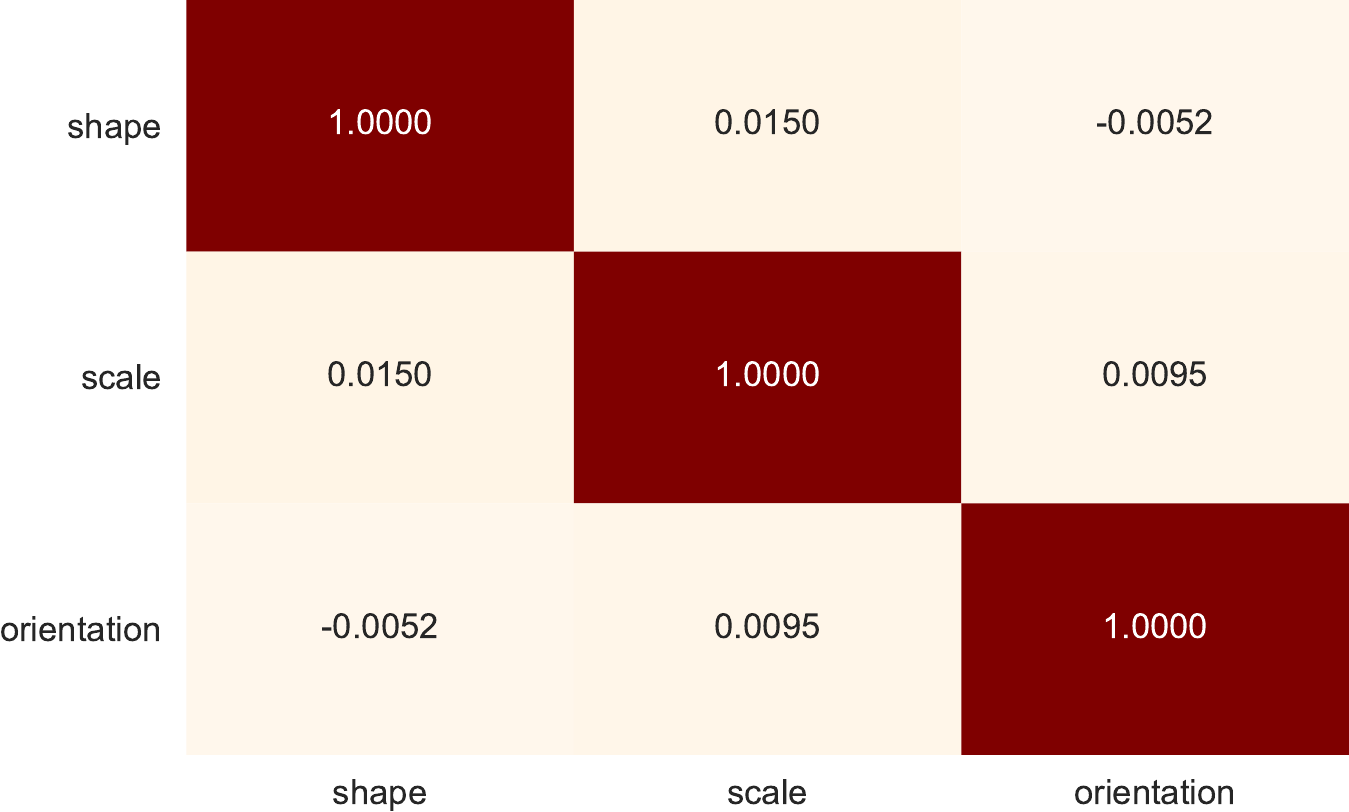} 
        \caption{Disentangled and \\uncorrelated(\gls{IOSS}=0.13)\label{fig:supp-uncorr_ys}}
  \label{fig:supp-uncorr_ys}
\end{subfigure}
\begin{subfigure}{0.3\textwidth}
  \centering
    \includegraphics[width=\linewidth]{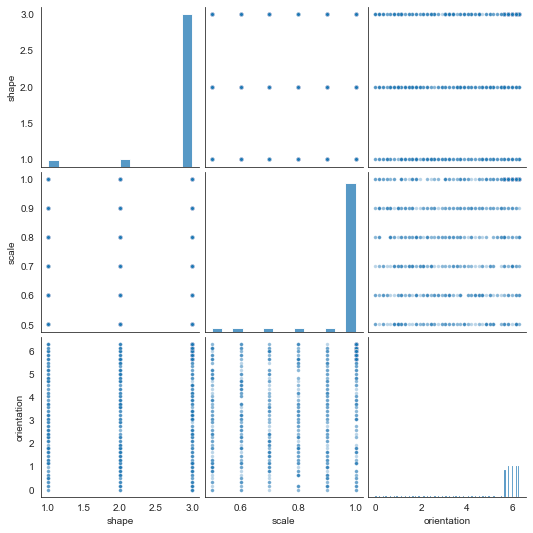}
  \includegraphics[width=\linewidth]{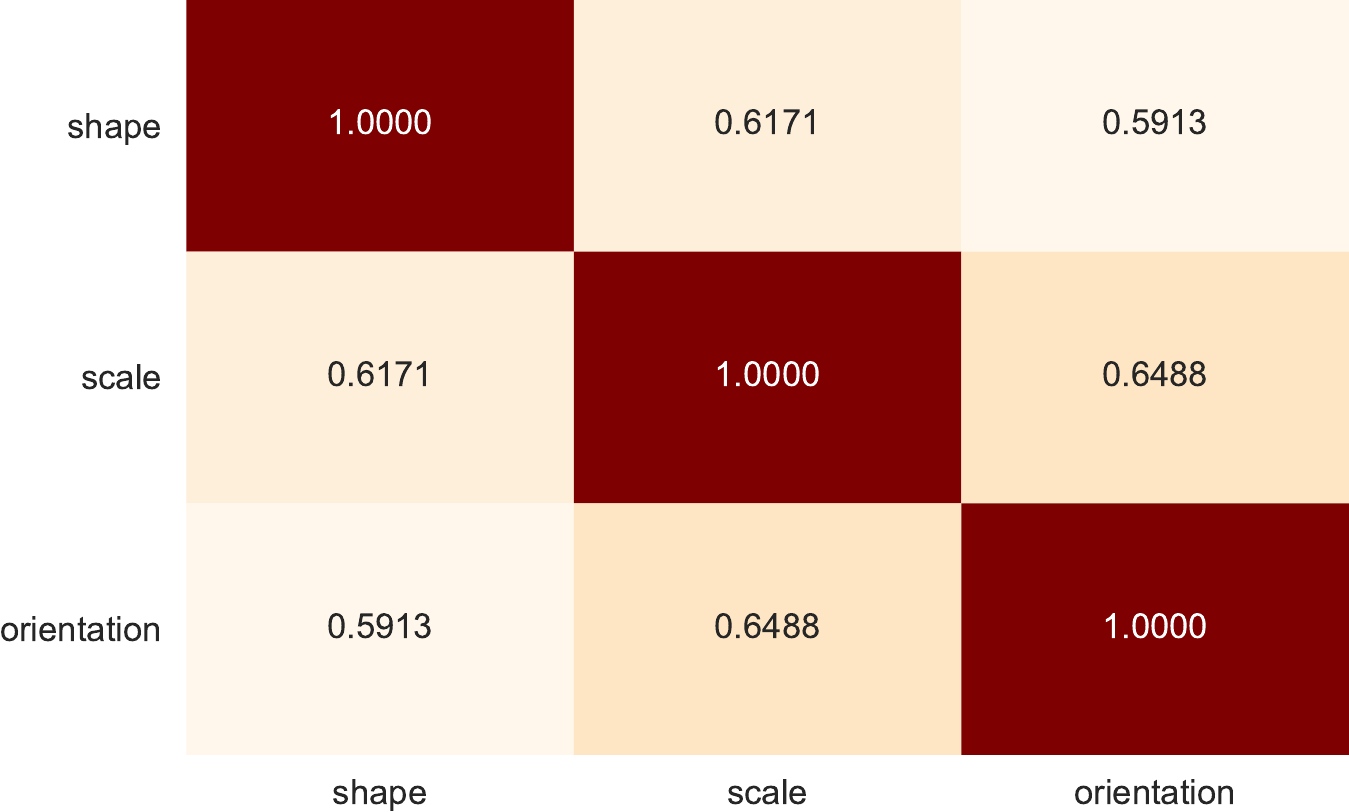}
        \caption{Disentangled but highly \\correlated(\gls{IOSS}=0.14)\label{fig:supp-corr_ys}}
  \label{fig:supp-corr_ys}
\end{subfigure}
\begin{subfigure}{0.3\textwidth}
  \centering
    \includegraphics[width=\linewidth]{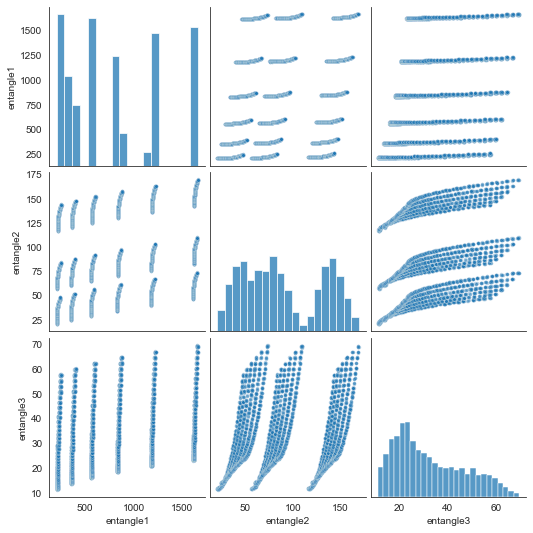} 
  \includegraphics[width=\linewidth]{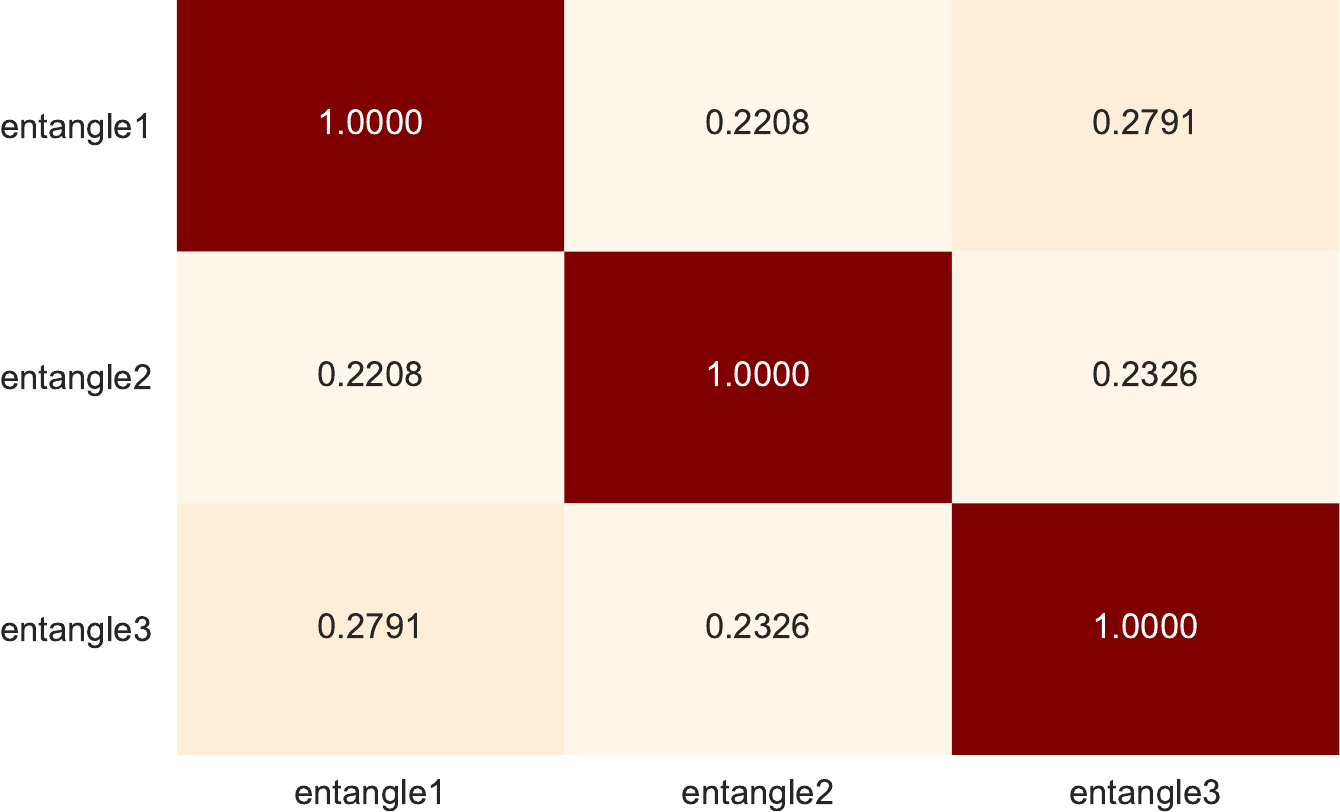}  
        \caption{Entangled but with \\low correlations (\gls{IOSS}=0.27)\label{fig:supp-entangle_ys}}
  \label{fig:supp-entangle_ys}
\end{subfigure}
\caption{Disentangled features have independent support even though
they may be correlated. Moreover, \gls{IOSS} can distinguish
disentangled and entangled features. This figure illustrates how
entangled and disentangled features differ using pairwise scatter
plots.  \Cref{fig:supp-uncorr_ys} considers the ground truth features
(shape, scale, orientation) of the dsprites dataset. These features
are disentangled. They also have independent support, e.g.,
conditional on `scale', the set of values that `orientation' can take
does not change with `scale.' Visually, these causally disentangled
features have scatter plots that occupy rectangular (or
hyperrectangular) region. \Cref{fig:supp-corr_ys} considers the same
features but in a subset of the dsprites dataset where the features
are correlated. These features, though correlated, are still
disentangled; they also have independent support.
\Cref{fig:supp-entangle_ys} considers three entangled features, each
of which is a nonlinear transformation of the three ground-truth
features. These features are not disentangled. Their support are also
not independent. Conditional on `entangle1', the possible values
`entangle2' can take depends on the value of `entangle1.'
\label{fig:supp-disentanglement-truefeatures}}
\end{figure}

To generate the factors for \Cref{fig:supp-uncorr_ys}, we randomly
sample $7,000$ data points from the dsprites dataset.

To generate the factors for \Cref{fig:supp-corr_ys}, we first sort the
data points of the dsprites dataset in descending order by columns
['shape', 'scale', 'orientation', 'positionX', 'positionY']. Next we
create the correlated factors dataset by taking the top 5000 data
points and then randomly draw 700 data points from the rest.

To generate the factors for \Cref{fig:supp-entangle_ys}, we randomly subsample 7,000 data points and then generate the entangled factors as follows:
\begin{align*}
\mathrm{entangle1} &= 6 \cdot\mathrm{shape}^1 + 8 \cdot(\mathrm{scale}/\mathrm{sd}(\mathrm{scale}))^3 + 1 \cdot(\mathrm{orientation}/\mathrm{sd}(\mathrm{orientation}))^3 + 0.2\cdot\cN(0,1),\\
\mathrm{entangle2} &= 12 \cdot\mathrm{shape}^2 + 1 \cdot(\mathrm{scale}/\mathrm{sd}(\mathrm{scale}))^2 + 8 \cdot(\mathrm{orientation}/\mathrm{sd}(\mathrm{orientation}))^1 + 0.2\cdot\cN(0,1),\\
\mathrm{entangle3} &= 0 \cdot\mathrm{shape}^3 + 4 \cdot(\mathrm{scale}/\mathrm{sd}(\mathrm{scale}))^1 + 4 \cdot(\mathrm{orientation}/\mathrm{sd}(\mathrm{orientation}))^2 + 0.2\cdot\cN(0,1).
\end{align*}

\section{Proof of \Cref{thm:disentanglement-support}}
\label{sec:disentanglement-support-proof}
\begin{proof}
We first prove $\supp(Z_1, \ldots, Z_d) = \supp(Z_1)\times \cdots
\supp(Z_d)$. Notice that the causal disentanglement of $Z_1, \ldots,
Z_d$ (\Cref{fig:disentanglement-scm}) implies that
\begin{align}
P(Z_1, \ldots, Z_d) 
=\int P(Z_1, \ldots, Z_d\g C) P(C)\dif C 
=\int P(Z_1\g C) \cdots P(Z_d\g C) P(C)\dif C.
\end{align}
Therefore, 
we have
\begin{align}
P(Z_1, \ldots, Z_d) > 0 \Leftrightarrow P(Z_j\g C\in
\mathcal{C}) > 0 \qquad \forall j\in \{1, \ldots, d\}\text{ for some set
$P(C\in\mathcal{C}) > 0$}.
\end{align}

Together with the positivity condition, which requires
\begin{align}
P(Z_j\g C\in \mathcal{C}) > 0 \text{ for any set $P(C\in\mathcal{C}) >
0$}\Leftrightarrow P(Z_j) > 0,
\end{align}
we have 
\begin{align}
P(Z_1, \ldots, Z_d) > 0 \Leftrightarrow P(Z_j) >
0 \qquad\forall j\in \{1, \ldots, d\}.
\end{align}
As $\supp(G) = \mathbb{I}\{P(G)>0\}$, we can rewrite it as
\begin{align}
\supp(Z_1, \ldots, Z_d) = \supp(Z_1)\times \cdots \supp(Z_d),
\end{align}
i.e. $Z_1, \ldots, Z_d$ satisfy the full support condition.

Next we show that $\supp(Z_i\g Z_\cS) = \supp(Z_i)$.

First notice that
\begin{align}
\mathbb{I}\{P(Z_i, Z_\cS)>0\}
&= \mathbb{I}\{P(Z_i\g Z_\cS) P(Z_\cS)>0\}\\
&=  \mathbb{I}\{P(Z_i\g Z_\cS) > 0\}\times \mathbb{I}\{P(Z_\cS) > 0\}.
\end{align}
Therefore the full support condition $\mathbb{I}\{P(Z_i, Z_\cS)>0\} =
\mathbb{I}\{P(Z_i) > 0\}\times \mathbb{I}\{P(Z_\cS) > 0\}$ implies that
\begin{align}
\mathbb{I}\{P(Z_i) > 0\} = \mathbb{I}\{P(Z_i\g Z_\cS) > 0\}\text{ for
$Z_\cS$ s.t. $\mathbb{I}\{P(Z_\cS) > 0\}$}.
\end{align}

This implies $\supp(Z_i\g Z_\cS) = \supp(Z_i)$ for $Z_\cS\in\supp(Z_\cS)$.

Finally, we consider the support of $\mathrm{do}$ interventions.
\begin{align}
P(Z_\cS \g \mathrm{do}(Z_i=Z_i)) &= \int P(Z_\cS\g Z_i, C) P(C)\dif C,\\
P(Z_\cS \g Z_i=Z_i) &= \int P(Z_\cS\g Z_i, C) P(C\g Z_i)\dif C.
\end{align}

\sloppy
Positivity guarantees that the support of $P(C)$ must be the same as
$P(C\g Z_i)$. (Because $\supp(C\g Z_i)\subseteq \supp(C)$ and
positivity guarantees the other direction.) Therefore, the support of
$P(Z_\cS \g \mathrm{do}(Z_i=Z_i))$ and $P(Z_\cS \g Z_i=Z_i)$ are the
same.

Therefore, if $\supp(Z_\cS\g Z_i)\ne \supp(Z_\cS)$, then $\supp(P(Z_\cS \g
\mathrm{do}(Z_i=Z_i))) \ne \supp(P(Z_\cS))$ and hence $P(Z_\cS \g
\mathrm{do}(Z_i=Z_i))\ne P(Z_\cS)$.

\end{proof}

\section{Proof of \Cref{thm:indep-support-id}}

\label{sec:indep-support-id-proof}

\begin{proof}

We follow a proof-by-contradiction argument. Suppose the opposite:
$\mbZ=\mathbf{h}(\mathbf{Z'})$ where $\mathbf{h}$ nontrivially depends
on two or more of its arguments.

The first two assumptions imply that
$\mathbf{h}:\mathbb{R}^d\rightarrow
\mathbb{R}^d$ is differentiable and invertible.

Moreover, \Cref{eq:h-assumption} ensures that the (nontrivial)
dependency graph (whether $h_i$ nontrivially depends on $Z_j$) of
$h_i(Z_1, \ldots, Z_d), i\in \{1, \ldots, d\}$ on $Z_1, \ldots, Z_d$
remains constant over all values of $Z_1, \ldots, Z_d$ within the
support. This assumption ensures that it is sufficient to study the
boundary of the support of $\mbZ, \mathbf{Z'}$ to conclude about the
dependency structure of $\mathbf{h}(\cdot)$ and thus disentanglement.
In other words, ``$\mathbf{h}$ nontrivially depends on two or more of
its arguments'' implies that ``$\mathbf{h}$ nontrivially depends on
two or more of its arguments at the boundary of the support.''

Finally, we will try to argue that if (1) $\mathbf{Z'}$ satisfies the
independent support condition (2) the function $h_i$ (for some $i$)
nontrivially depends on two or more of $Z'_j, Z'_k$, $j,k\in \{1,
\ldots, d\}$, then $\mbZ$ will violate the independent support
condition. (The fourth assumption is to exclude the corner cases where
some support constraints are vacuous.)


Begin with two examples that illustrate this argument.

\parhead{Warm-up example 1.} Consider two two-dimensional
representations $\mathbf{Z'} = (Z'_1, Z'_2)$ and $\mbZ=(Z_1, Z_2)$.
They satisfy $\mbZ = (Z_1, Z_2) = \mathbf{h}(\mbZ')$ where
\begin{align}
Z_1 &= h_1(Z'_1, Z'_2) = Z'_1 + Z'_2,\\
Z_2 &= h_2(Z'_1, Z'_2) = Z'_1 - Z'_2. 
\end{align}

Moreover, assume the support of $\mbZ$ is $\supp(Z_1, Z_2) =
[a_1,b_1]\times [a_2,b_2],$ which satisfies the independent support
condition.

Next we show that $\mathbf{h}(Z'_1, Z'_2)$ cannot satisfy the
independent support condition.

Recall that that $\supp(Z_1, Z_2) = [a_1,b_1]\times [a_2,b_2]$ implies
that $\supp(Z'_1 + Z'_2) = [a_1,b_1]$ and $\supp(Z'_1-Z'_2) =
[a_2,b_2]$. It implies that $\supp(Z'_1\g Z'_2) = [a_1-Z'_2,
b_1-Z'_2]\cap[a_2+Z'_2, b_2+Z'_2]$. All end points of this
intersection involves $Z'_2$, so the support of $\supp(Z'_1|Z'_2)$
depends on $Z'_2$ (for values of $Z'_2$ s.t. $\supp(Z'_1\g Z'_2)$ is
non-empty); it violates the independent support condition.

More broadly, $\mbZ'$ does not satisfy the independent support
condition because both $h_1(Z'_1, Z'_2)$ and $h_2(Z'_1, Z'_2)$
non-trivially depend on both $Z'_1, Z'_2$, across all values of $Z'_1,
Z'_2$ within the support. 

Visually, independent support condition ensures that the support is
rectangular. If we apply a rotation-like transformation, then it can
no longer be rectangular in the original coordinate system. More
generally, if we apply any transformations that will ``entangle'' the
coordinates of the original representation at its support boundary,
then the independent support condition cannot hold for both the
original representation and the post-transformation representation.
That said, we still need \Cref{eq:h-assumption} to extend argument
about the boundary to one about the whole function being entangling.

\parhead{Warm-up wxample 2.} We next consider a different mapping
between $\mbZ$ and $\mathbf{Z'}$,
\begin{align}
Z_1 &= h_1(Z'_1, Z'_2) = Z'_1(Z'_1- Z'_2),\\
Z_2 &= h_2(Z'_1, Z'_2) = Z'_1 - Z'_2. 
\end{align}

Again assume independent support for $\mbZ$: $\supp(Z_1,
Z_2)=[a_1,b_1]\times [a_2,b_2].$


In this case, $\supp(Z'_1\g Z'_2) = \{Z'_1: Z'_1(Z'_1-Z'_2) \in [a_1,
b_1] \} \cap [a_2+Z'_2, b_2+Z'_2].$ Again the boundary of the support
depends on $Z_2'$.

The boundary of the support $\{Z'_1: h_1(Z'_1, Z'_2) \in [a_1, b_1]
\}$ does not depend on $Z'_2$ only when $[a_1, b_2]$ is a vacuous
bound for $h_1$, e.g. $a_1=0, b_1=1, h_1(Z'_1, Z'_2) =
\frac{\exp(-(Z'_1+Z'_2))}{1+\exp(-(Z'_1+Z'_2))}$.

\parhead{General argument.} Suppose some $h_l$ nontrivially depend on
two or more of its arguments, say $Z'_j, Z'_k$, $j,k\in \{1, \ldots,
d\}$. Then we argue that the support of $Z'_j, Z'_k$ will not be
independent.

Recall that $\mbZ$ satisfies the independent support condition,
$\supp(Z_1,\ldots, Z_d) = [a_1, b_1]\times \cdots \times[a_d, b_d]$.
We consider the support $\supp(Z'_j\g Z'_k\s \mbZ'_{-j,-k}) =
\cap_{l=1}^d \{Z'_j: h_l(Z'_1,\ldots, Z'_d)\in [a_l, b_l] \},$ fixing
$\mbZ'_{-j,-k}$ at some value within the support. We argue that (1)
$\{Z'_j: h_l(Z'_1,\ldots, Z'_d)\in [a_l, b_l] \}$ implies the support
of $Z'_j$ is dependent on $Z'_k$, and (2) $\supp(Z'_j\g Z'_k\s
\mbZ'_{-j,-k})$ also depends on $Z'_k$.

The first argument is via the invertibility of $h_l$ for all values of
$\mbZ'_{-j}$. Fix $\mbZ'_{-j,-k}$ at some value within the support.
Then $\supp(Z'_j\g Z'_k\s \mbZ'_{-j,-k}) = [\min\{B_l^1, B_l^2\},
\max\{B_l^1, B_l^2\}]$, where $B_l^1 =h_l^{-1}(a_l\s Z'_k,
\mbZ'_{-j,-k})$, $B_l^2 =h_l^{-1}(b_l\s Z'_k, \mbZ'_{-j,-k})$, where
$h_l^{-1}(b_l\s Z'_k, \mbZ'_{-j,-k})$ nontrivially depends on $Z'_k$
because of \Cref{eq:h-assumption} and that $h_l$ nontrivially depends
on $Z'_k, Z'_j$.

The second argument is via the assumption \Cref{eq:trivial-h}, which
ensures that the support constraint imposed by $h_l$ is not subsumed
by other $h_j, j\ne l$, and its non-trivial dependence on $Z'_k$ will
make the support $\supp(Z'_j\g Z'_k\s \mbZ'_{-j,-k})$ also
non-trivially depend on $Z'_k$ for some values of $\mbZ'_{-j,-k}$
within the support.

\end{proof}

\section{Details of empirical studies of IOSS and additional empirical
results}

\subsection{Details of \Cref{subsec:disentangle-measure}}

\Cref{fig:disentangle-measure-1,fig:disentangle-measure-2,fig:disentangle-measure-3,fig:disentangle-measure-4}
also show that \gls{IOSS} can better separate disentangled and
entangled representations than existing unsupervised disentanglement
metrics.

\begin{figure}[t!]
  \centering
  \captionsetup[subfigure]{justification=centering}
\begin{subfigure}{0.31\textwidth}
  \centering
    \includegraphics[width=\linewidth]{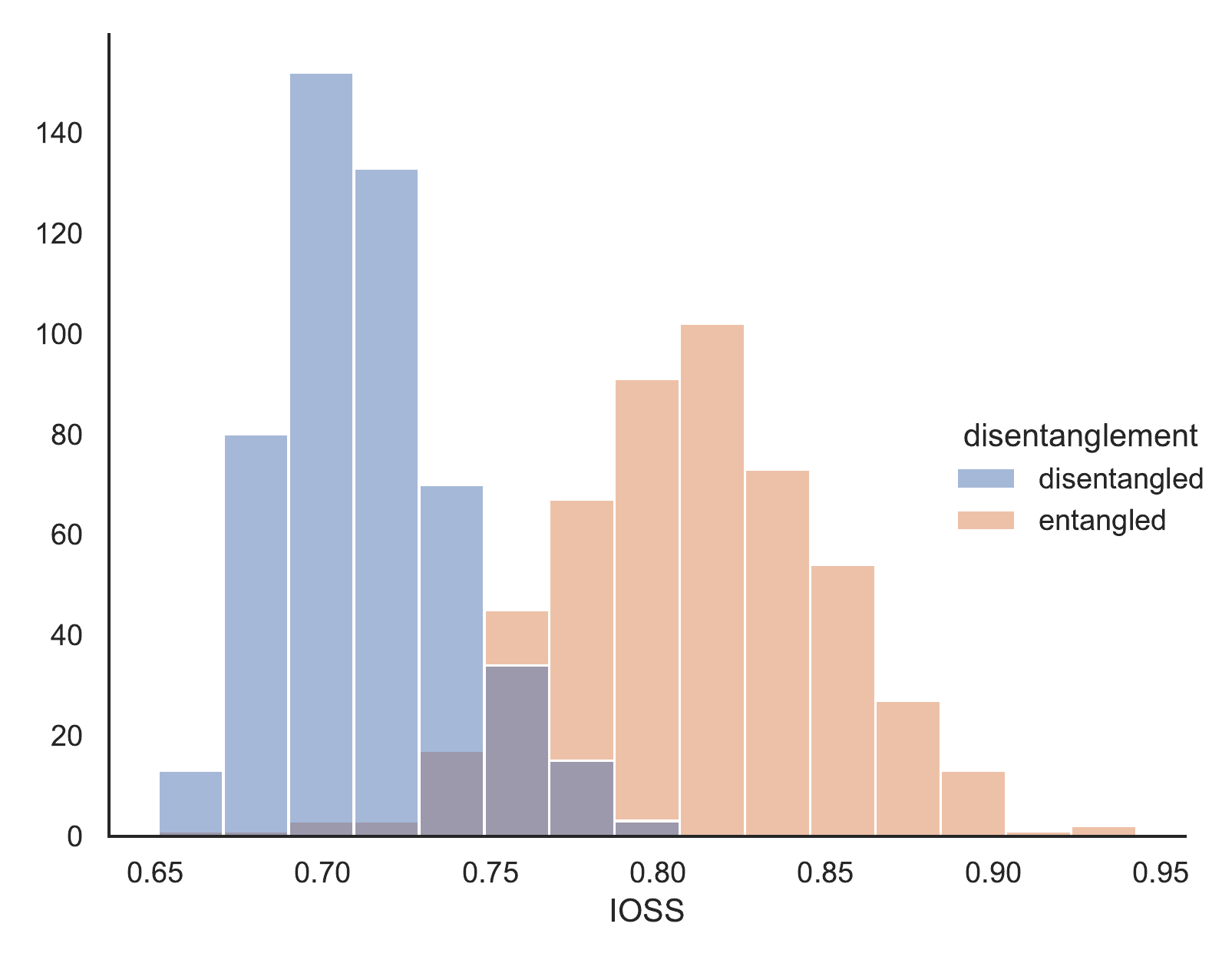}
  \caption{\gls{IOSS}}
\end{subfigure}
\begin{subfigure}{0.31\textwidth}
  \centering
    \includegraphics[width=\linewidth]{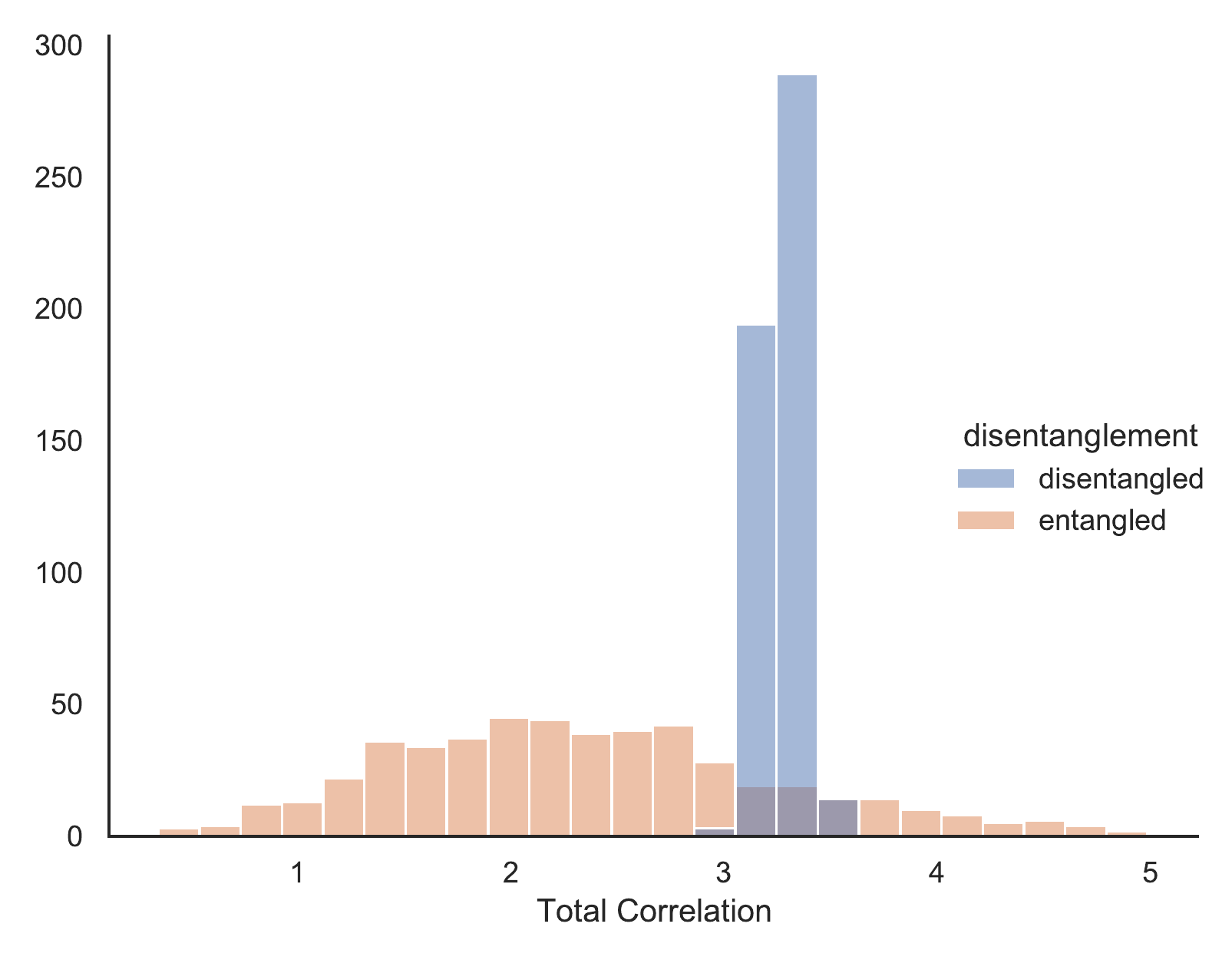}
  \caption{Total Correlation}
\end{subfigure}
\begin{subfigure}{0.31\textwidth}
  \centering
    \includegraphics[width=\linewidth]{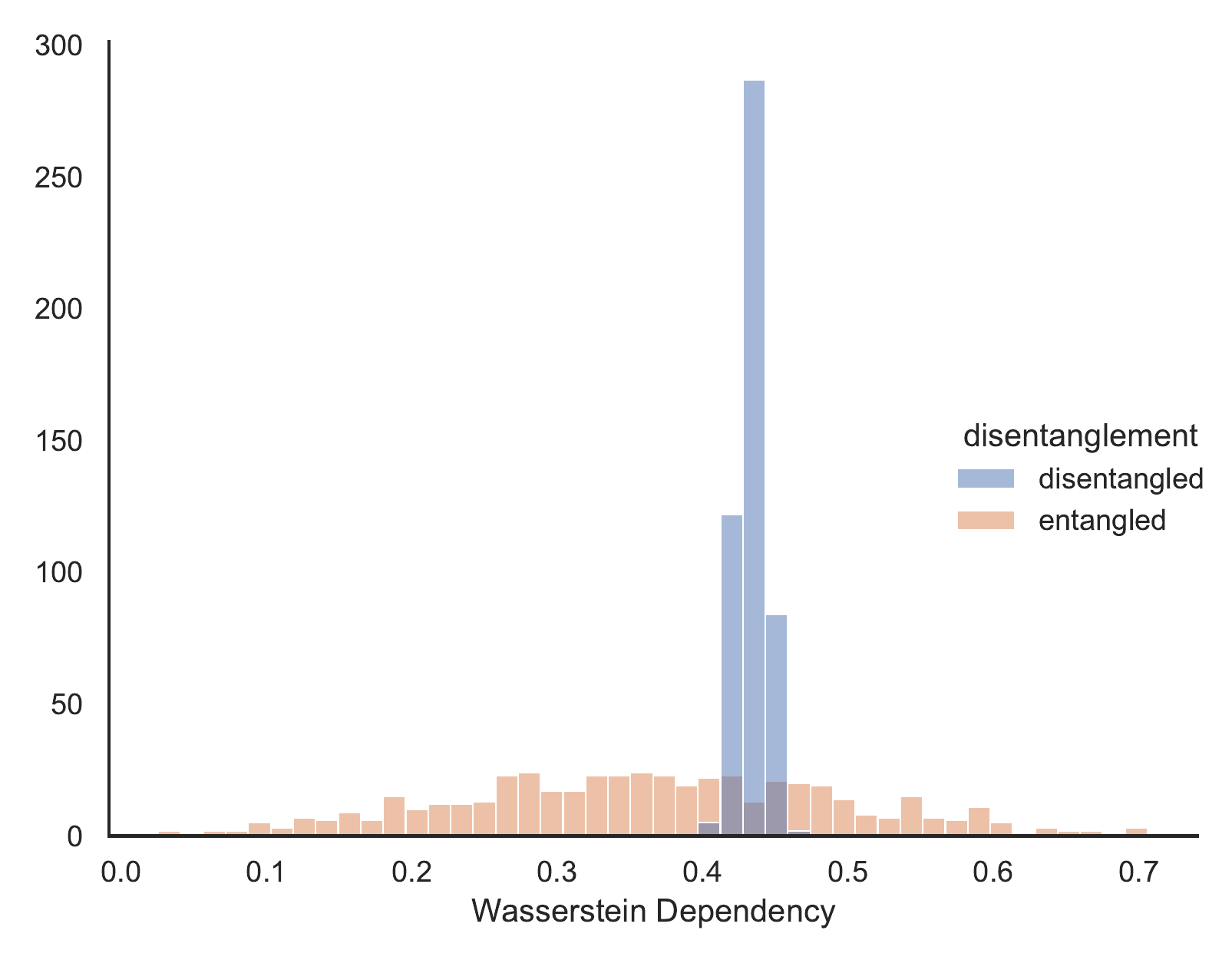}
  \caption{Wasserstein Dependency}
\end{subfigure}
\caption{IOSS can better distinguish entangled and disentangled
representations than existing unsupervised disentanglement metrics on
the dsprites dataset. \label{fig:disentangle-measure-2}}
\end{figure}
\begin{figure}[t!]
  \centering
  \captionsetup[subfigure]{justification=centering}
\begin{subfigure}{0.31\textwidth}
  \centering
    \includegraphics[width=\linewidth]{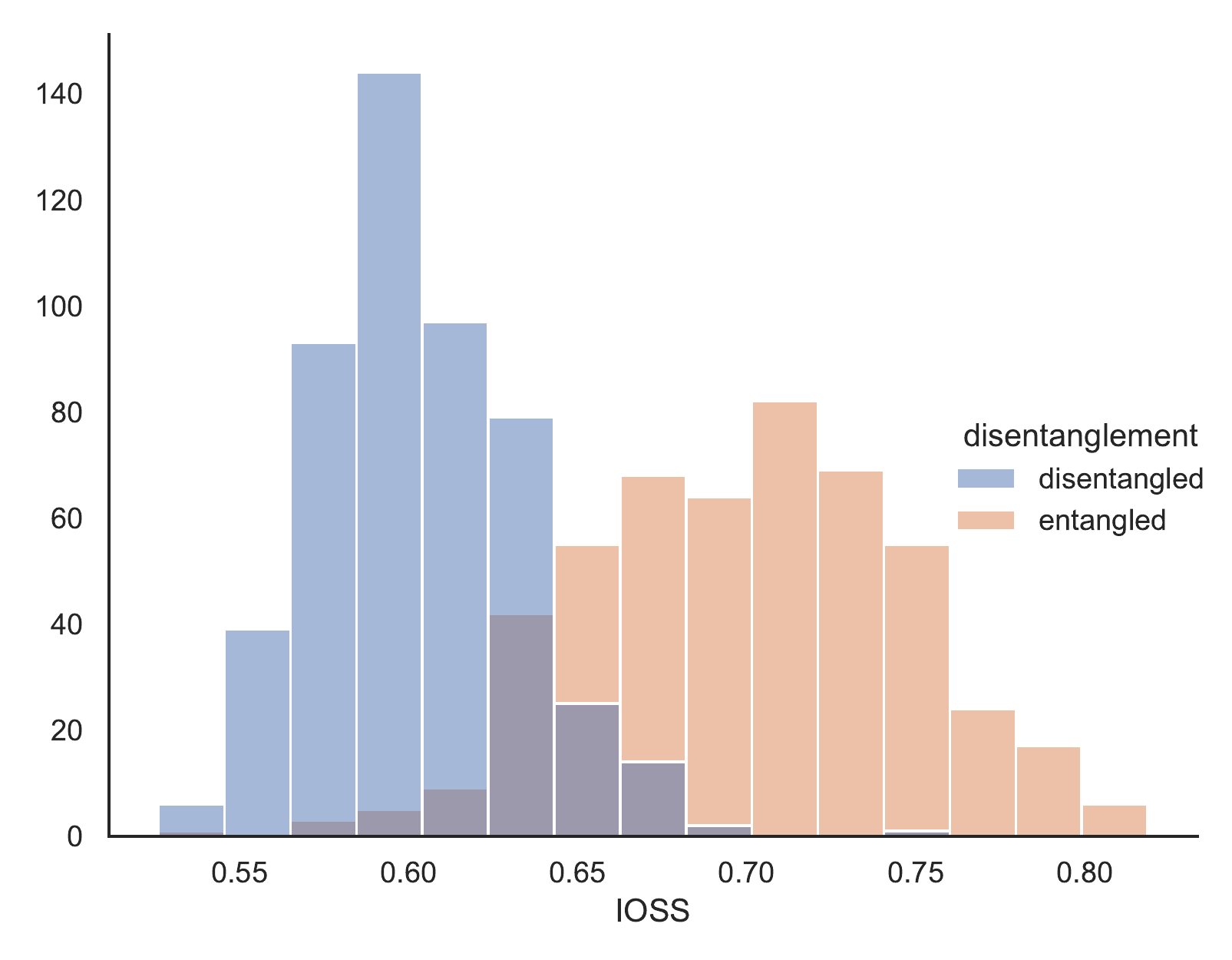}
  \caption{\gls{IOSS}}
\end{subfigure}
\begin{subfigure}{0.31\textwidth}
  \centering
    \includegraphics[width=\linewidth]{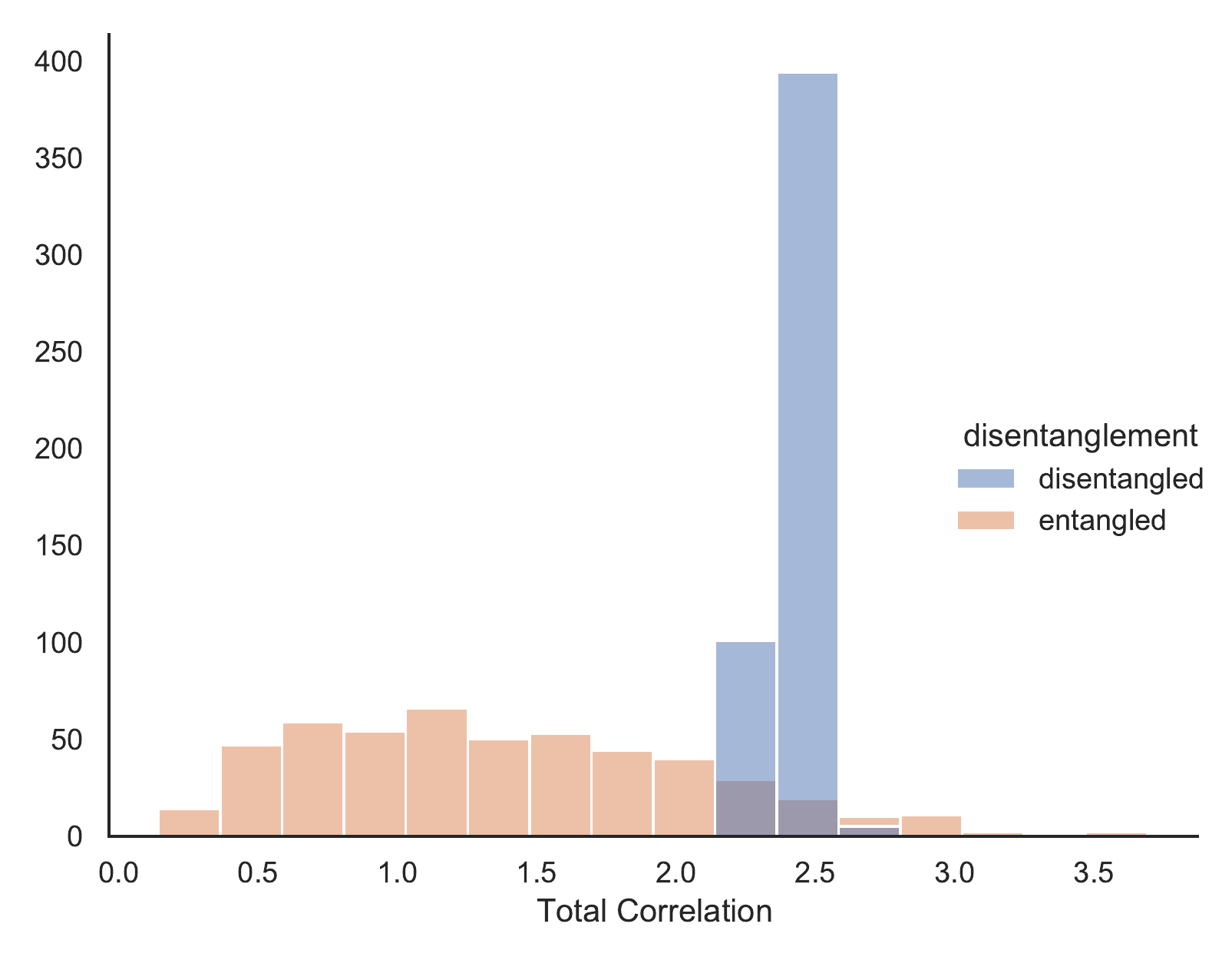}
  \caption{Total Correlation}
\end{subfigure}
\begin{subfigure}{0.31\textwidth}
  \centering
    \includegraphics[width=\linewidth]{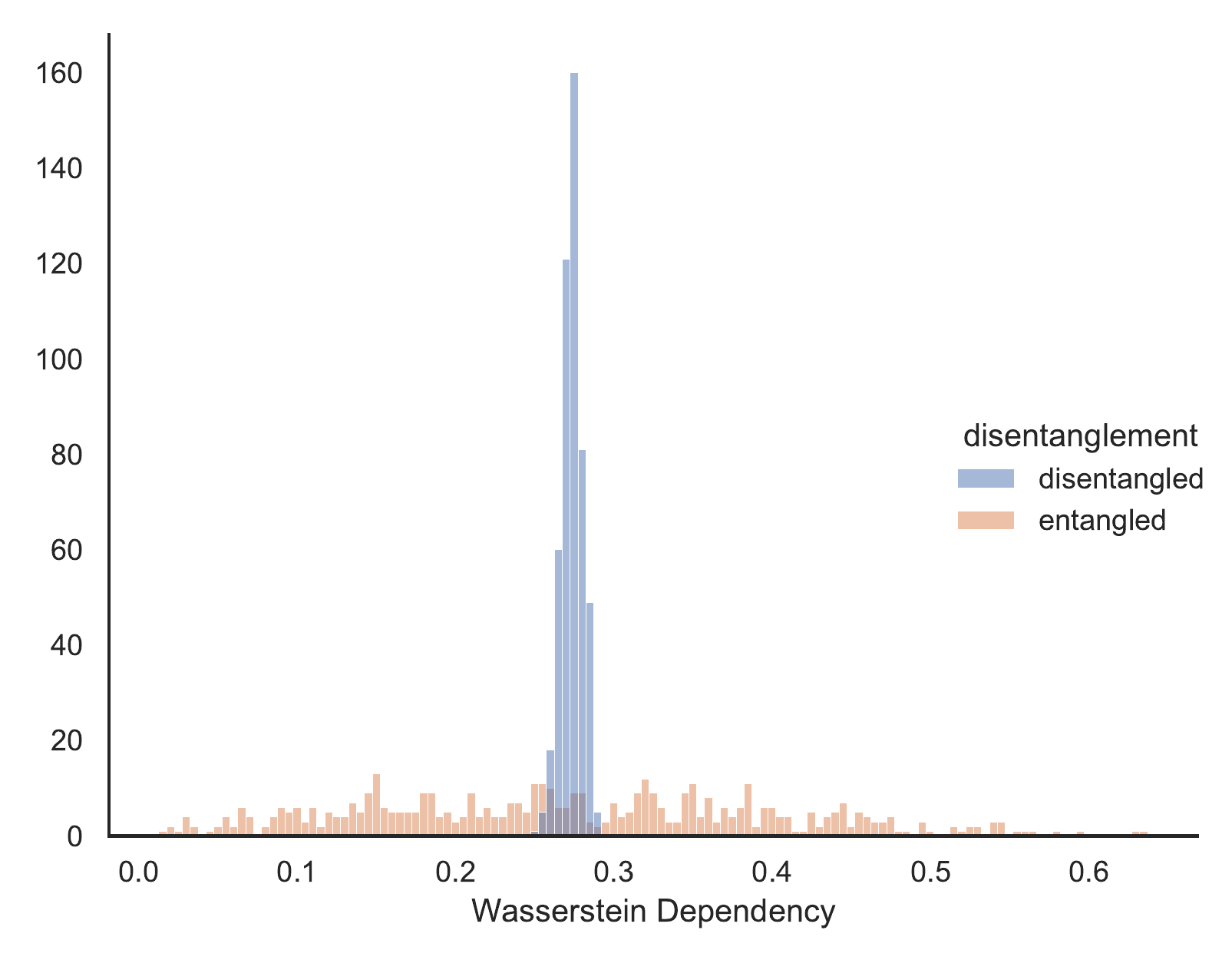}
  \caption{Wasserstein Dependency}
\end{subfigure}
\caption{IOSS can better distinguish entangled and disentangled
representations than existing unsupervised disentanglement metrics on
the smallnorb dataset. \label{fig:disentangle-measure-3}}
\end{figure}
\begin{figure}[t!]
  \centering
  \captionsetup[subfigure]{justification=centering}
\begin{subfigure}{0.31\textwidth}
  \centering
    \includegraphics[width=\linewidth]{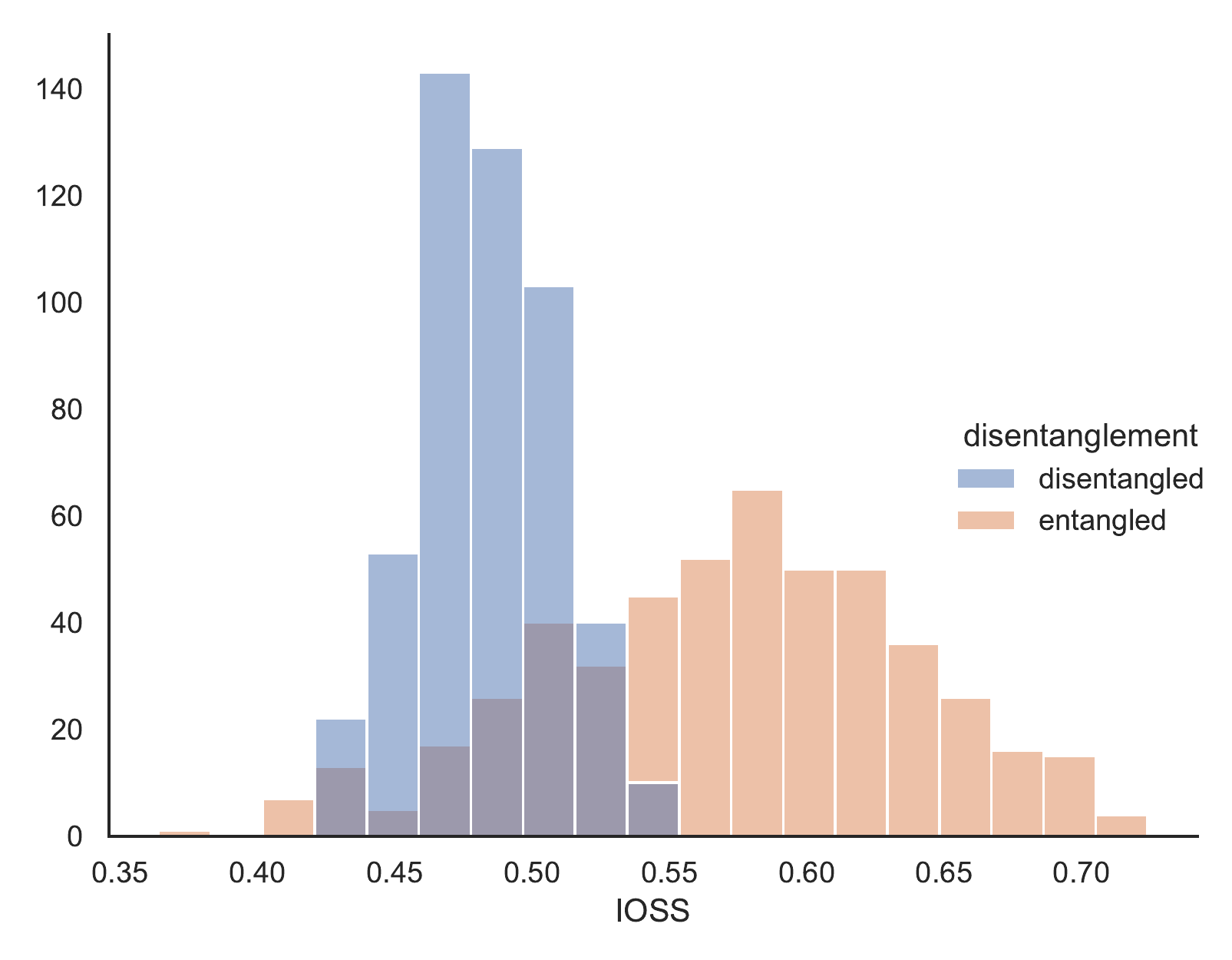}
  \caption{\gls{IOSS}}
\end{subfigure}
\begin{subfigure}{0.31\textwidth}
  \centering
    \includegraphics[width=\linewidth]{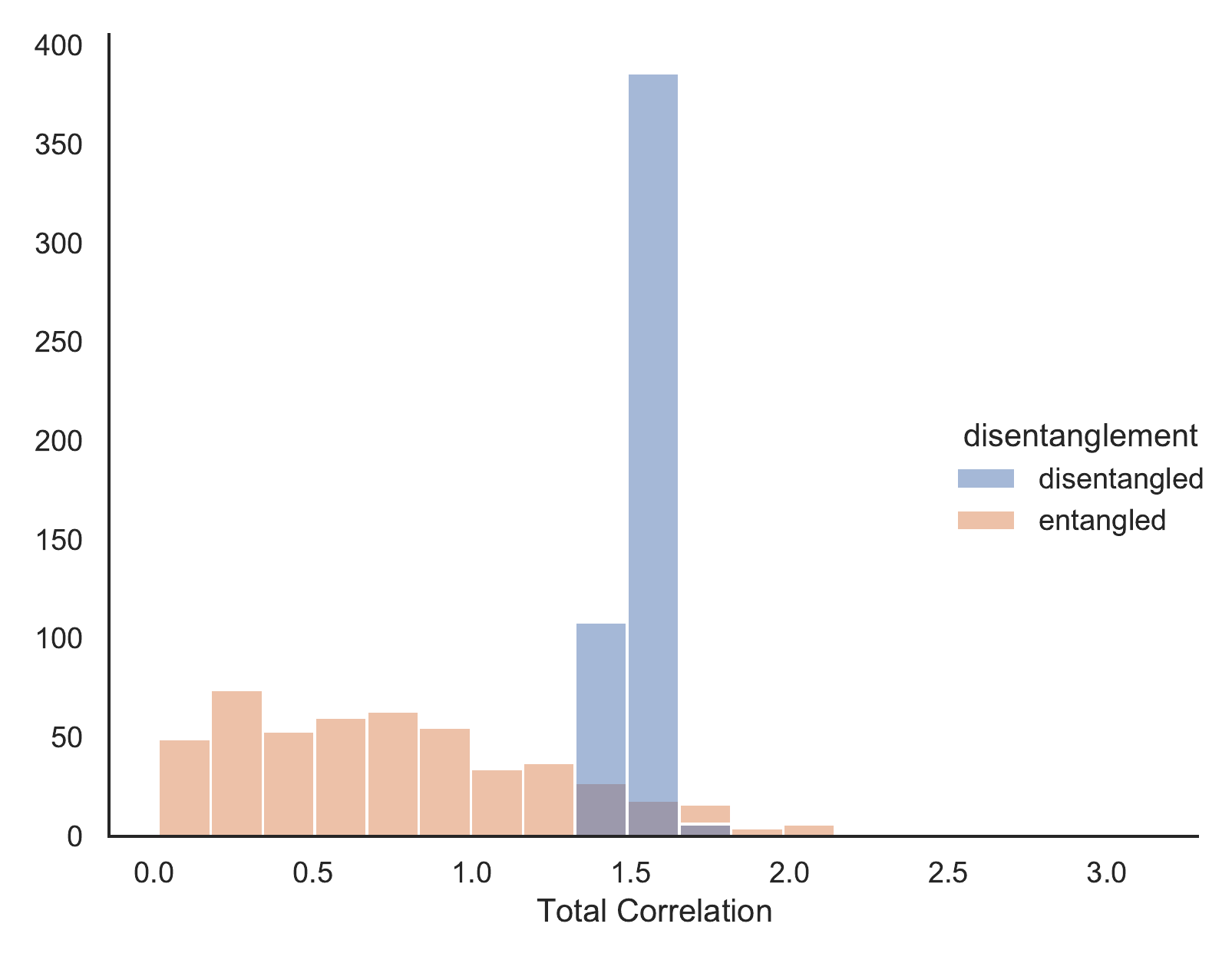}
  \caption{Total Correlation}
\end{subfigure}
\begin{subfigure}{0.31\textwidth}
  \centering
    \includegraphics[width=\linewidth]{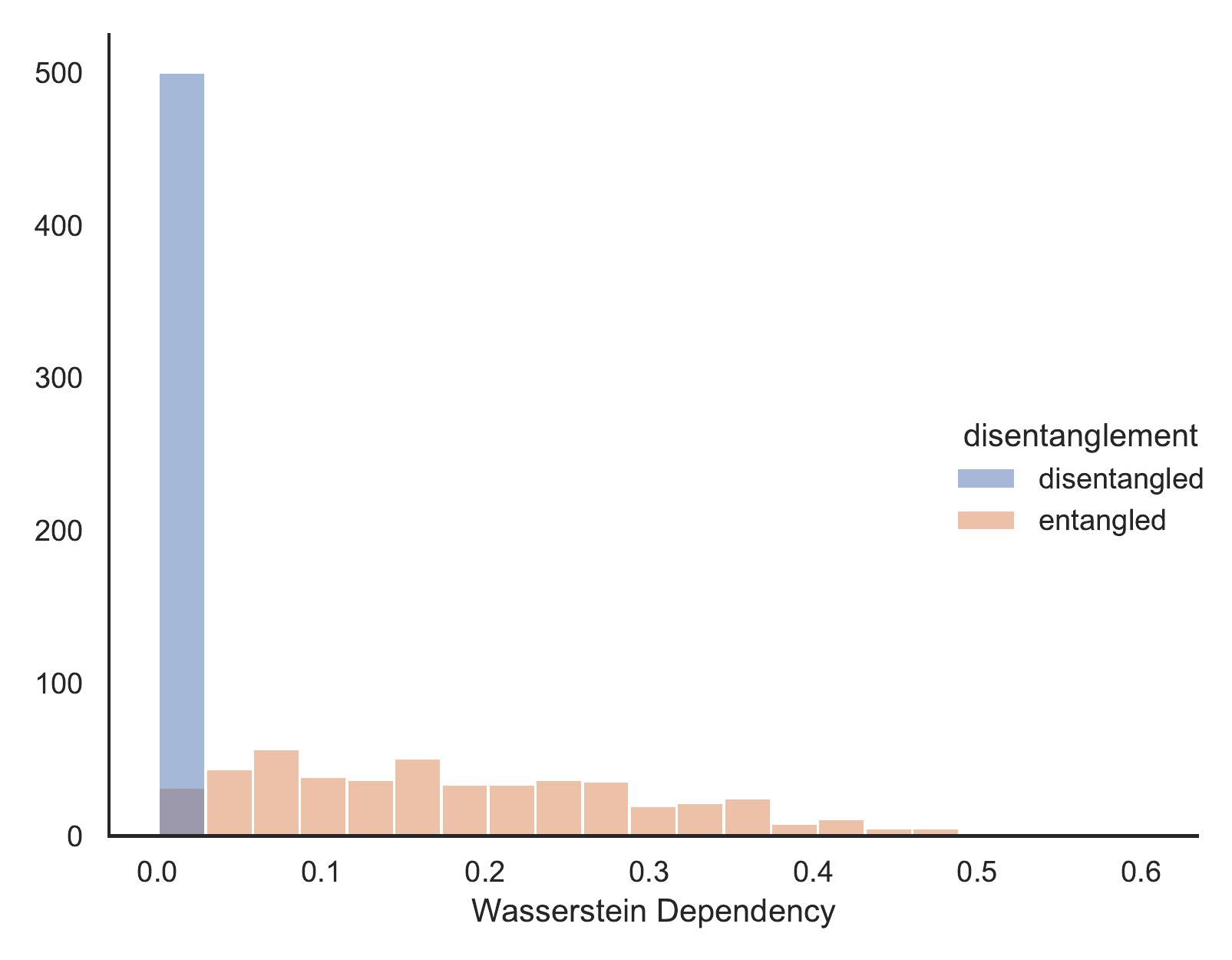}
  \caption{Wasserstein Dependency}
\end{subfigure}
\caption{IOSS is competitive in distinguishing entangled and
disentangled representations compared with existing unsupervised
disentanglement metrics on the cars3d dataset.
\label{fig:disentangle-measure-4}}
\end{figure}

\subsection{Details of \Cref{subsubsec:disentangle-learn}}

\Cref{fig:ioss-support} illustrates that regularizing for \gls{IOSS}
indeed encourages independent support across different dimensions of
the learned representations.

\begin{figure}[t]
  \centering
  \captionsetup[subfigure]{justification=centering}
\begin{subfigure}{0.45\textwidth}
  \centering
    \includegraphics[width=\linewidth]{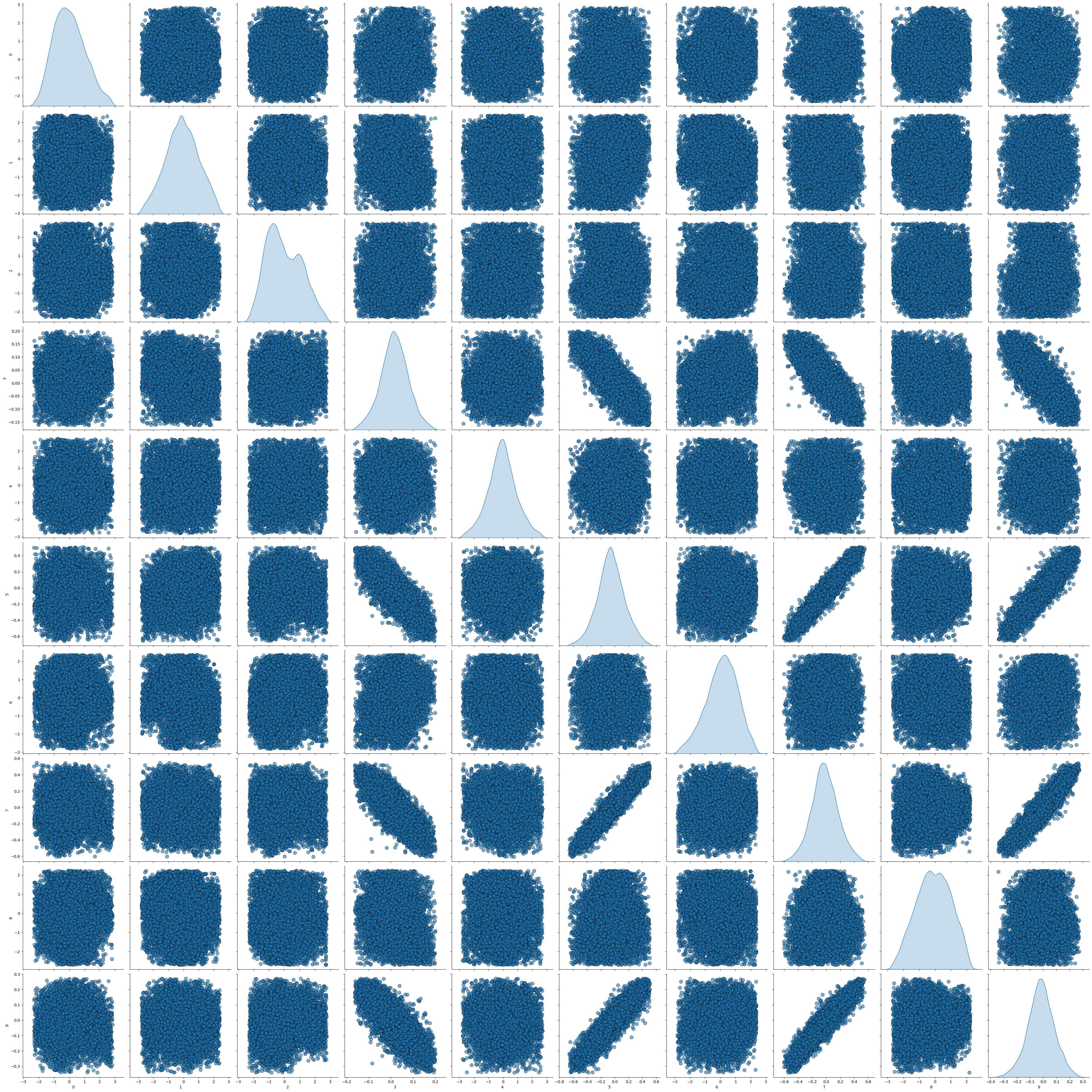}
  \caption{\gls{IOSS} regularization strength=0}
\end{subfigure}
\begin{subfigure}{0.45\textwidth}
  \centering
    \includegraphics[width=\linewidth]{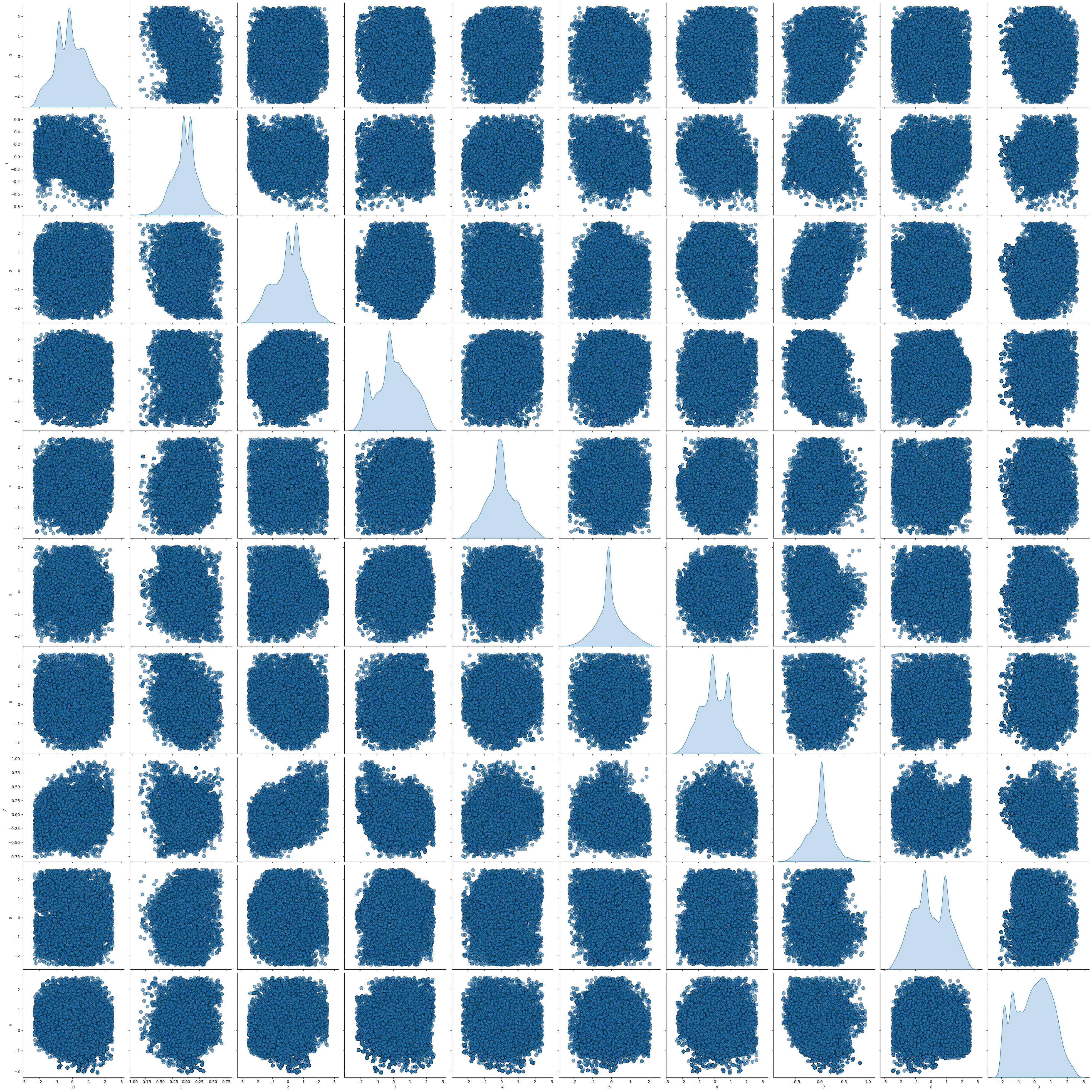}
  \caption{\gls{IOSS} regularization strength=1000}
\end{subfigure}
\caption{Increasing regularization strengths of IOSS encourages the
learned representation to have independent (i.e. hyperrectangular)
support. (a) The pairwise scatter plot of the learned representations
without \gls{IOSS} regularization; some dimensions of the
representation do not have independent support. (b) The pairwise
scatter plot of the learned representations with \gls{IOSS}
regularization; most dimensions of the representation have independent
support.
\label{fig:ioss-support}}
\end{figure}

\clearpage


\begin{thebibliography}{}

\bibitem[Achille \& Soatto, 2018]{achille2018emergence}
Achille, A. \& Soatto, S. (2018).
\newblock Emergence of invariance and disentanglement in deep representations.
\newblock {\em Journal of Machine Learning Research}, 19(1), 1947--1980.

\bibitem[Airoldi et~al., 2008]{airoldi2008mixed}
Airoldi, E.~M., Blei, D.~M., Fienberg, S.~E., \& Xing, E.~P. (2008).
\newblock Mixed membership stochastic blockmodels.
\newblock {\em Journal of Machine Learning Research}, 9, 1981--2014.

\bibitem[Arjovsky et~al., 2019]{arjovsky2019invariant}
Arjovsky, M., Bottou, L., Gulrajani, I., \& Lopez-Paz, D. (2019).
\newblock Invariant risk minimization.
\newblock {\em arXiv preprint arXiv:1907.02893}.

\bibitem[Bai \& Li, 2016]{bai2016maximum}
Bai, J. \& Li, K. (2016).
\newblock Maximum likelihood estimation and inference for approximate factor
  models of high dimension.
\newblock {\em Review of Economics and Statistics}, 98(2), 298--309.

\bibitem[Bengio et~al., 2013]{bengio2013representation}
Bengio, Y., Courville, A., \& Vincent, P. (2013).
\newblock Representation learning: A review and new perspectives.
\newblock {\em IEEE Transactions on Pattern Analysis and Machine Intelligence},
  35(8), 1798--1828.

\bibitem[Blei et~al., 2017]{blei2017variational}
Blei, D.~M., Kucukelbir, A., \& McAuliffe, J.~D. (2017).
\newblock Variational inference: A review for statisticians.
\newblock {\em Journal of the American Statistical Association}, 112(518),
  859--877.

\bibitem[Blei et~al., 2003]{blei2003latent}
Blei, D.~M., Ng, A.~Y., \& Jordan, M.~I. (2003).
\newblock Latent {D}irichlet allocation.
\newblock {\em Journal of Machine Learning Research}, 3, 993--1022.

\bibitem[Bouchacourt et~al., 2018]{bouchacourt2018multi}
Bouchacourt, D., Tomioka, R., \& Nowozin, S. (2018).
\newblock Multi-level variational autoencoder: Learning disentangled
  representations from grouped observations.
\newblock In {\em AAAI Conference on Artificial Intelligence}, volume~32.

\bibitem[Burgess et~al., 2018]{burgess2018understanding}
Burgess, C.~P., Higgins, I., et~al. (2018).
\newblock Understanding disentangling in beta-{VAE}.
\newblock {\em arXiv preprint arXiv:1804.03599}.

\bibitem[Chalupka et~al., 2017]{Chalupka2017}
Chalupka, K., Eberhardt, F., \& Perona, P. (2017).
\newblock {Causal feature learning: an overview}.
\newblock {\em Behaviormetrika}, 44(1), 137--164.

\bibitem[Chalupka et~al., 2015]{Chalupka2015}
Chalupka, K., Perona, P., \& Eberhardt, F. (2015).
\newblock {Visual causal feature learning}.
\newblock {\em Uncertainty in Artificial Intelligence - Proceedings of the 31st
  Conference, UAI 2015}, (pp.\ 181--190).

\bibitem[Chen et~al., 2018]{chen2018isolating}
Chen, R.~T., Li, X., Grosse, R., \& Duvenaud, D. (2018).
\newblock Isolating sources of disentanglement in {VAE}s.
\newblock In {\em Neural Information Processing Systems}  (pp.\ 2615--2625).

\bibitem[Chen et~al., 2020a]{chen2020simple}
Chen, T., Kornblith, S., Norouzi, M., \& Hinton, G. (2020a).
\newblock A simple framework for contrastive learning of visual
  representations.
\newblock In {\em International Conference on Machine Learning}  (pp.\
  1597--1607).

\bibitem[Chen et~al., 2020b]{chen2020structured}
Chen, Y., Li, X., \& Zhang, S. (2020b).
\newblock Structured latent factor analysis for large-scale data:
  Identifiability, estimability, and their implications.
\newblock {\em Journal of the American Statistical Association}, 115(532),
  1756--1770.

\bibitem[Cheng \& Lu, 2017]{cheng2017causal}
Cheng, P.~W. \& Lu, H. (2017).
\newblock Causal invariance as an essential constraint for creating a causal
  representation of the world: Generalizing.
\newblock {\em The Oxford Handbook of Causal Reasoning}, (pp.\~65).

\bibitem[Correa \& Bareinboim, 2020]{correa2020calculus}
Correa, J. \& Bareinboim, E. (2020).
\newblock {A} calculus for stochastic interventions: Causal effect
  identification and surrogate experiments.
\newblock In {\em Proceedings of the AAAI Conference on Artificial
  Intelligence}, volume~34  (pp.\ 10093--10100).

\bibitem[Creager et~al., 2021]{creager2021environment}
Creager, E., Jacobsen, J.-H., \& Zemel, R. (2021).
\newblock Environment inference for invariant learning.
\newblock In {\em International Conference on Machine Learning}  (pp.\
  2189--2200).: PMLR.

\bibitem[D'Amour, 2019a]{d2019comment}
D'Amour, A. (2019a).
\newblock Comment: Reflections on the deconfounder.
\newblock {\em Journal of the American Statistical Association}, 114(528),
  1597--1601.

\bibitem[D'Amour, 2019b]{d2019multi}
D'Amour, A. (2019b).
\newblock On multi-cause approaches to causal inference with unobserved
  counfounding: Two cautionary failure cases and a promising alternative.
\newblock In {\em The 22nd International Conference on Artificial Intelligence
  and Statistics}  (pp.\ 3478--3486).

\bibitem[D'Amour et~al., 2020a]{d2020overlap}
D'Amour, A., Ding, P., Feller, A., Lei, L., \& Sekhon, J. (2020a).
\newblock Overlap in observational studies with high-dimensional covariates.
\newblock {\em Journal of Econometrics}, 221, 644--654.

\bibitem[D'Amour et~al., 2020b]{damour2020underspecification}
D'Amour, A., Heller, K., et~al. (2020b).
\newblock Underspecification presents challenges for credibility in modern
  machine learning.
\newblock {\em arXiv preprint arXiv:2011.03395}.

\bibitem[Deng, 2012]{deng2012mnist}
Deng, L. (2012).
\newblock The mnist database of handwritten digit images for machine learning
  research [best of the web].
\newblock {\em IEEE Signal Processing Magazine}, 29(6), 141--142.

\bibitem[Eberhardt \& Scheines, 2007]{eberhardt2007interventions}
Eberhardt, F. \& Scheines, R. (2007).
\newblock Interventions and causal inference.
\newblock {\em Philosophy of Science}, 74(5), 981--995.

\bibitem[Erosheva \& Fienberg, 2005]{erosheva2005}
Erosheva, E.~A. \& Fienberg, S.~E. (2005).
\newblock Bayesian mixed membership models for soft clustering and
  classification.
\newblock In C. Weihs \& W. Gaul (Eds.), {\em Classification --- the Ubiquitous
  Challenge}  (pp.\ 11--26).  Berlin, Heidelberg: Springer Berlin Heidelberg.

\bibitem[Fong \& Grimmer, 2016]{fong2016discovery}
Fong, C. \& Grimmer, J. (2016).
\newblock Discovery of treatments from text corpora.
\newblock In {\em Proceedings of the 54th Annual Meeting of the Association for
  Computational Linguistics (Volume 1: Long Papers)}  (pp.\ 1600--1609).

\bibitem[Galhotra et~al., 2021]{galhotra2021explaining}
Galhotra, S., Pradhan, R., \& Salimi, B. (2021).
\newblock Explaining black-box algorithms using probabilistic contrastive
  counterfactuals.
\newblock In {\em Proceedings of the 2021 International Conference on
  Management of Data}  (pp.\ 577--590).

\bibitem[Gelman \& Imbens, 2013]{gelman2013ask}
Gelman, A. \& Imbens, G. (2013).
\newblock {\em Why ask why? Forward causal inference and reverse causal
  questions}.
\newblock Technical report, National Bureau of Economic Research.

\bibitem[Gelman et~al., 1996]{gelman1996posterior}
Gelman, A., Meng, X.-L., \& Stern, H. (1996).
\newblock Posterior predictive assessment of model fitness via realized
  discrepancies.
\newblock {\em Statistica Sinica}, (pp.\ 733--760).

\bibitem[Gilks et~al., 1995]{gilks1995markov}
Gilks, W.~R., Richardson, S., \& Spiegelhalter, D. (1995).
\newblock {\em Markov Chain Monte Carlo in Practice}.
\newblock CRC press.

\bibitem[Golub et~al., 1976]{golub1976rank}
Golub, G., Klema, V., \& Stewart, G.~W. (1976).
\newblock {\em Rank degeneracy and least squares problems}.
\newblock Technical report, Stanford University, Department of Computer
  Science.

\bibitem[Goodfellow et~al., 2014]{goodfellow2014generative}
Goodfellow, I.~J., Pouget-Abadie, J., et~al. (2014).
\newblock Generative adversarial networks.
\newblock {\em arXiv preprint arXiv:1406.2661}.

\bibitem[Grimmer \& Fong, 2021]{grimmer2020causal}
Grimmer, J. \& Fong, C. (2021).
\newblock Causal inference with latent treatments.
\newblock {\em American Journal of Political Science}.

\bibitem[Hadsell et~al., 2006]{hadsell2006dimensionality}
Hadsell, R., Chopra, S., \& LeCun, Y. (2006).
\newblock Dimensionality reduction by learning an invariant mapping.
\newblock In {\em 2006 IEEE Computer Society Conference on Computer Vision and
  Pattern Recognition (CVPR'06)}, volume~2  (pp.\ 1735--1742).: IEEE.

\bibitem[Heinze-Deml et~al., 2018]{heinze2018causal}
Heinze-Deml, C., Maathuis, M.~H., \& Meinshausen, N. (2018).
\newblock Causal structure learning.
\newblock {\em Annual Review of Statistics and Its Application}, 5, 371--391.

\bibitem[Higgins et~al., 2017]{higgins2017beta}
Higgins, I., Matthey, L., et~al. (2017).
\newblock Beta-{VAE}: Learning basic visual concepts with a constrained
  variational framework.
\newblock In {\em International Conference on Learning Representations}.

\bibitem[Hosoya, 2019]{hosoya2019group}
Hosoya, H. (2019).
\newblock Group-based learning of disentangled representations with
  generalizability for novel contents.
\newblock In {\em International Joint Conference on Artificial Intelligence}
  (pp.\ 2506--2513).

\bibitem[Imai \& Jiang, 2019]{imai2019comment}
Imai, K. \& Jiang, Z. (2019).
\newblock Comment: The challenges of multiple causes.
\newblock {\em Journal of the American Statistical Association}, 114(528),
  1605--1610.

\bibitem[Imbens \& Rubin, 2015]{imbens2015causal}
Imbens, G.~W. \& Rubin, D.~B. (2015).
\newblock {\em {Causal Inference in Statistics, Social, and Biomedical
  Sciences}}.
\newblock Cambridge University Press.

\bibitem[Janzing \& Sch{\"o}lkopf, 2015]{janzing2015semi}
Janzing, D. \& Sch{\"o}lkopf, B. (2015).
\newblock Semi-supervised interpolation in an anticausal learning scenario.
\newblock {\em Journal of Machine Learning Research}, 16(1), 1923--1948.

\bibitem[Johansson et~al., 2020]{johansson2020generalization}
Johansson, F.~D., Shalit, U., Kallus, N., \& Sontag, D. (2020).
\newblock Generalization bounds and representation learning for estimation of
  potential outcomes and causal effects.
\newblock {\em arXiv preprint arXiv:2001.07426}.

\bibitem[Johansson et~al., 2019]{johansson2019support}
Johansson, F.~D., Sontag, D., \& Ranganath, R. (2019).
\newblock Support and invertibility in domain-invariant representations.
\newblock In {\em The 22nd International Conference on Artificial Intelligence
  and Statistics}  (pp.\ 527--536).

\bibitem[Khasanova \& Frossard, 2017]{khasanova2017graph}
Khasanova, R. \& Frossard, P. (2017).
\newblock Graph-based isometry invariant representation learning.
\newblock In {\em International Conference on Machine Learning}  (pp.\
  1847--1856).

\bibitem[Khemakhem et~al., 2020]{khemakhem2020variational}
Khemakhem, I., Kingma, D., Monti, R., \& Hyvarinen, A. (2020).
\newblock Variational autoencoders and nonlinear {ICA}: A unifying framework.
\newblock In {\em Artificial Intelligence and Statistics}  (pp.\ 2207--2217).

\bibitem[Kilbertus et~al., 2018]{kilbertus2018generalization}
Kilbertus, N., Parascandolo, G., \& Sch{\"o}lkopf, B. (2018).
\newblock Generalization in anti-causal learning.
\newblock {\em arXiv preprint arXiv:1812.00524}.

\bibitem[Kim \& Mnih, 2018]{kim2018disentangling}
Kim, H. \& Mnih, A. (2018).
\newblock Disentangling by factorising.
\newblock In {\em International Conference on Machine Learning}  (pp.\
  2649--2658).

\bibitem[Kingma \& Welling, 2014]{kingma2014auto}
Kingma, D.~P. \& Welling, M. (2014).
\newblock Auto-encoding variational {B}ayes.
\newblock {\em International Conference on Learning Representations}.

\bibitem[Kommiya~Mothilal et~al., 2021]{kommiya2021towards}
Kommiya~Mothilal, R., Mahajan, D., Tan, C., \& Sharma, A. (2021).
\newblock Towards unifying feature attribution and counterfactual explanations:
  Different means to the same end.
\newblock In {\em Proceedings of the 2021 AAAI/ACM Conference on AI, Ethics,
  and Society}  (pp.\ 652--663).

\bibitem[Kumar et~al., 2018]{kumar2018variational}
Kumar, A., Sattigeri, P., \& Balakrishnan, A. (2018).
\newblock Variational inference of disentangled latent concepts from unlabeled
  observations.
\newblock In {\em International Conference on Learning Representations}.

\bibitem[Liu et~al., 2015]{liu2015faceattributes}
Liu, Z., Luo, P., Wang, X., \& Tang, X. (2015).
\newblock Deep learning face attributes in the wild.
\newblock In {\em Proceedings of International Conference on Computer Vision
  (ICCV)}.

\bibitem[Locatello et~al., 2019a]{locatello2019challenging}
Locatello, F., Bauer, S., et~al. (2019a).
\newblock Challenging common assumptions in the unsupervised learning of
  disentangled representations.
\newblock In {\em International Conference on Machine Learning}  (pp.\
  4114--4124).

\bibitem[Locatello et~al., 2020]{locatello2020weakly}
Locatello, F., Poole, B., et~al. (2020).
\newblock Weakly-supervised disentanglement without compromises.
\newblock In {\em International Conference on Machine Learning}  (pp.\
  6348--6359).

\bibitem[Locatello et~al., 2019b]{locatello2019disentangling}
Locatello, F., Tschannen, M., et~al. (2019b).
\newblock Disentangling factors of variations using few labels.
\newblock In {\em International Conference on Learning Representations}.

\bibitem[Lu et~al., 2021]{lu2021invariant}
Lu, C., Wu, Y., Hern{\'a}ndez-Lobato, J.~M., \& Sch{\"o}lkopf, B. (2021).
\newblock Invariant causal representation learning.

\bibitem[McLachlan \& Basford, 1988]{mclachlan1988mixture}
McLachlan, G.~J. \& Basford, K.~E. (1988).
\newblock {\em Mixture Models: Inference and Applications to Clustering},
  volume~38.
\newblock M. Dekker New York.

\bibitem[Mitrovic et~al., 2020]{mitrovic2020representation}
Mitrovic, J., McWilliams, B., Walker, J., Buesing, L., \& Blundell, C. (2020).
\newblock Representation learning via invariant causal mechanisms.
\newblock {\em arXiv preprint arXiv:2010.07922}.

\bibitem[Moraffah et~al., 2019]{moraffah2019deep}
Moraffah, R., Shu, K., Raglin, A., \& Liu, H. (2019).
\newblock Deep causal representation learning for unsupervised domain
  adaptation.
\newblock {\em arXiv preprint arXiv:1910.12417}.

\bibitem[Moyer et~al., 2018]{moyer2018invariant}
Moyer, D., Gao, S., Brekelmans, R., Steeg, G.~V., \& Galstyan, A. (2018).
\newblock Invariant representations without adversarial training.
\newblock {\em arXiv preprint arXiv:1805.09458}.

\bibitem[Mueller et~al., 2021]{mueller2021causes}
Mueller, S., Li, A., \& Pearl, J. (2021).
\newblock Causes of effects: Learning individual responses from population
  data.
\newblock {\em arXiv preprint arXiv:2104.13730}.

\bibitem[Nabi et~al., 2020]{nabi2020semiparametric}
Nabi, R., McNutt, T., \& Shpitser, I. (2020).
\newblock Semiparametric causal sufficient dimension reduction of high
  dimensional treatments.
\newblock {\em arXiv preprint arXiv:1710.06727}.

\bibitem[Parascandolo et~al., 2018]{parascandolo2018learning}
Parascandolo, G., Kilbertus, N., Rojas-Carulla, M., \& Sch{\"o}lkopf, B.
  (2018).
\newblock Learning independent causal mechanisms.
\newblock In {\em International Conference on Machine Learning}  (pp.\
  4036--4044).

\bibitem[Paul, 2017]{paul2017feature}
Paul, M. (2017).
\newblock Feature selection as causal inference: Experiments with text
  classification.
\newblock In {\em Proceedings of the 21st Conference on Computational Natural
  Language Learning (CoNLL 2017)}  (pp.\ 163--172).

\bibitem[Pearl, 1995]{pearl1995causal}
Pearl, J. (1995).
\newblock Causal diagrams for empirical research.
\newblock {\em Biometrika}, 82(4), 669--688.

\bibitem[Pearl, 2011]{Pearl2011}
Pearl, J. (2011).
\newblock {\em {Causality: Models, Reasoning, and Inference}}.
\newblock Cambridge {U}niversity Press.

\bibitem[Pearl, 2019a]{pearl2019seven}
Pearl, J. (2019a).
\newblock The seven tools of causal inference, with reflections on machine
  learning.
\newblock {\em Communications of the ACM}, 62(3), 54--60.

\bibitem[Pearl, 2019b]{Pearl2019}
Pearl, J. (2019b).
\newblock {Sufficient causes: On oxygen, matches, and fires}.
\newblock {\em Journal of Causal Inference}, 7(2), 1--8.

\bibitem[Pritchard et~al., 2000]{pritchard2000inference}
Pritchard, J.~K., Stephens, M., \& Donnelly, P. (2000).
\newblock Inference of population structure using multilocus genotype data.
\newblock {\em Genetics}, 155(2), 945--959.

\bibitem[Pryzant et~al., 2020]{pryzant2020causal}
Pryzant, R., Card, D., Jurafsky, D., Veitch, V., \& Sridhar, D. (2020).
\newblock Causal effects of linguistic properties.
\newblock {\em arXiv preprint arXiv:2010.12919}.

\bibitem[Puli et~al., 2020]{puli2020causal}
Puli, A., Perotte, A., \& Ranganath, R. (2020).
\newblock Causal estimation with functional confounders.
\newblock {\em Advances in Neural Information Processing Systems}, 33.

\bibitem[Puli et~al., 2021]{puli2021predictive}
Puli, A., Zhang, L.~H., Oermann, E.~K., \& Ranganath, R. (2021).
\newblock Predictive modeling in the presence of nuisance-induced spurious
  correlations.
\newblock {\em arXiv preprint arXiv:2107.00520}.

\bibitem[Ranganath \& Perotte, 2018]{ranganath2019multiple}
Ranganath, R. \& Perotte, A. (2018).
\newblock Multiple causal inference with latent confounding.
\newblock {\em arXiv preprint arXiv:1805.08273}.

\bibitem[Rezende et~al., 2014]{rezende2014stochastic}
Rezende, D.~J., Mohamed, S., \& Wierstra, D. (2014).
\newblock Stochastic backpropagation and variational inference in deep latent
  gaussian models.
\newblock In {\em International Conference on Machine Learning}, volume~2
  (pp.\~2).

\bibitem[Sch{\"o}lkopf et~al., 2012]{scholkopf2012causal}
Sch{\"o}lkopf, B., Janzing, D., et~al. (2012).
\newblock On causal and anticausal learning.
\newblock In {\em International Conference on Machine Learning}  (pp.\
  1255--1262).

\bibitem[Sch{\"o}lkopf et~al., 2013]{scholkopf2013semi}
Sch{\"o}lkopf, B., Janzing, D., et~al. (2013).
\newblock Semi-supervised learning in causal and anticausal settings.
\newblock In {\em Empirical Inference}  (pp.\ 129--141). Springer.

\bibitem[Shen et~al., 2020]{shen2020disentangled}
Shen, X., Liu, F., et~al. (2020).
\newblock Disentangled generative causal representation learning.
\newblock {\em arXiv preprint arXiv:2010.02637}.

\bibitem[Shi et~al., 2020]{shi2020invariant}
Shi, C., Veitch, V., \& Blei, D. (2020).
\newblock Invariant representation learning for treatment effect estimation.
\newblock {\em arXiv preprint arXiv:2011.12379}.

\bibitem[Shu et~al., 2019]{shu2019weakly}
Shu, R., Chen, Y., Kumar, A., Ermon, S., \& Poole, B. (2019).
\newblock Weakly supervised disentanglement with guarantees.
\newblock In {\em International Conference on Learning Representations}.

\bibitem[Stewart, 1984]{stewart1984rank}
Stewart, G. (1984).
\newblock Rank degeneracy.
\newblock {\em SIAM Journal on Scientific and Statistical Computing}, 5(2),
  403--413.

\bibitem[Suter et~al., 2019]{Suter2019}
Suter, R., Miladinovic, D., Sch{\"o}lkopf, B., \& Bauer, S. (2019).
\newblock {Robustly disentangled causal mechanisms: Validating deep
  representations for interventional robustness}.
\newblock In {\em International Conference on Machine Learning}  (pp.\
  6056--6065).

\bibitem[Thomas et~al., 2018]{thomas2018disentangling}
Thomas, V., Bengio, E., et~al. (2018).
\newblock Disentangling the independently controllable factors of variation by
  interacting with the world.
\newblock {\em arXiv preprint arXiv:1802.09484}.

\bibitem[Thomas et~al., 2017]{thomas2017independently}
Thomas, V., Pondard, J., et~al. (2017).
\newblock Independently controllable factors.
\newblock {\em arXiv preprint arXiv:1708.01289}.

\bibitem[Tian \& Pearl, 2000]{tian2000probabilities}
Tian, J. \& Pearl, J. (2000).
\newblock Probabilities of causation: Bounds and identification.
\newblock {\em Annals of Mathematics and Artificial Intelligence}, 28(1),
  287--313.

\bibitem[Tipping \& Bishop, 1999]{tipping1999probabilistic}
Tipping, M.~E. \& Bishop, C.~M. (1999).
\newblock Probabilistic principal component analysis.
\newblock {\em Journal of the Royal Statistical Society: Series B (Statistical
  Methodology)}, 61(3), 611--622.

\bibitem[Tr{\"a}uble et~al., 2020]{Trauble2020}
Tr{\"a}uble, F., Creager, E., et~al. (2020).
\newblock {Is independence all you need? On the generalization of
  representations learned from correlated data}.
\newblock {\em arXiv preprint arXiv:2006.07886}.

\bibitem[Udell \& Townsend, 2019]{udell2019}
Udell, M. \& Townsend, A. (2019).
\newblock Why are big data matrices approximately low rank?
\newblock {\em SIAM Journal on Mathematics of Data Science}, 1(1), 144--160.

\bibitem[Veitch et~al., 2021]{veitch2021counterfactual}
Veitch, V., D'Amour, A., Yadlowsky, S., \& Eisenstein, J. (2021).
\newblock Counterfactual invariance to spurious correlations: Why and how to
  pass stress tests.
\newblock {\em arXiv preprint arXiv:2106.00545}.

\bibitem[Veitch et~al., 2020]{veitch2020adapting}
Veitch, V., Sridhar, D., \& Blei, D. (2020).
\newblock Adapting text embeddings for causal inference.
\newblock In {\em Conference on Uncertainty in Artificial Intelligence}  (pp.\
  919--928).: PMLR.

\bibitem[Veitch et~al., 2019]{veitch2019using}
Veitch, V., Wang, Y., \& Blei, D.~M. (2019).
\newblock Using embeddings to correct for unobserved confounding in networks.
\newblock {\em arXiv preprint arXiv:1902.04114}.

\bibitem[Wang et~al., 2010]{wang2010latent}
Wang, H., Lu, Y., \& Zhai, C. (2010).
\newblock Latent aspect rating analysis on review text data: {A} rating
  regression approach.
\newblock In {\em Proceedings of the 16th ACM SIGKDD International Conference
  on Knowledge Discovery and Data Mining}  (pp.\ 783--792).

\bibitem[Wang et~al., 2011]{wang2011latent}
Wang, H., Lu, Y., \& Zhai, C. (2011).
\newblock Latent aspect rating analysis without aspect keyword supervision.
\newblock In {\em Proceedings of the 17th ACM SIGKDD International Conference
  on Knowledge Discovery and Data Mining}  (pp.\ 618--626).

\bibitem[Wang \& Blei, 2021]{wang2021proxy}
Wang, Y. \& Blei, D. (2021).
\newblock A proxy variable view of shared confounding.
\newblock In {\em International Conference on Machine Learning}  (pp.\
  10697--10707).: PMLR.

\bibitem[Wang \& Blei, 2019a]{wang2019blessings}
Wang, Y. \& Blei, D.~M. (2019a).
\newblock The blessings of multiple causes.
\newblock {\em Journal of the American Statistical Association}, 114(528),
  1574--1596.

\bibitem[Wang \& Blei, 2019b]{wang2019frequentist}
Wang, Y. \& Blei, D.~M. (2019b).
\newblock Frequentist consistency of variational {B}ayes.
\newblock {\em Journal of the American Statistical Association}, 114(527),
  1147--1161.

\bibitem[Wang \& Blei, 2020]{wang2020towards}
Wang, Y. \& Blei, D.~M. (2020).
\newblock Towards clarifying the theory of the deconfounder.
\newblock {\em arXiv preprint arXiv:2003.04948}.

\bibitem[Wang \& Culotta, 2020]{wang2020identifying}
Wang, Z. \& Culotta, A. (2020).
\newblock Identifying spurious correlations for robust text classification.
\newblock In {\em Proceedings of the 2020 Conference on Empirical Methods in
  Natural Language Processing: Findings}  (pp.\ 3431--3440).

\bibitem[Wang \& Culotta, 2021]{wang2020robustness}
Wang, Z. \& Culotta, A. (2021).
\newblock Robustness to spurious correlations in text classification via
  automatically generated counterfactuals.
\newblock In {\em Proceedings of the AAAI Conference on Artificial
  Intelligence}, volume~35  (pp.\ 14024--14031).

\bibitem[Watson et~al., 2021]{watson2021local}
Watson, D., Gultchin, L., Taly, A., \& Floridi, L. (2021).
\newblock Local explanations via necessity and sufficiency: unifying theory and
  practice.
\newblock {\em arXiv preprint arXiv:2103.14651}.

\bibitem[Weichwald et~al., 2014]{weichwald2014causal}
Weichwald, S., Sch{\"o}lkopf, B., Ball, T., \& Grosse-Wentrup, M. (2014).
\newblock Causal and anti-causal learning in pattern recognition for
  neuroimaging.
\newblock In {\em 2014 International Workshop on Pattern Recognition in
  Neuroimaging}  (pp.\ 1--4).

\bibitem[Wu \& Fukumizu, 2021]{wu2021identifying}
Wu, P. \& Fukumizu, K. (2021).
\newblock Identifying treatment effects under unobserved confounding by causal
  representation learning.
\newblock {\em arXiv preprint arXiv:2101.06662}.

\bibitem[Xiao \& Wang, 2019]{xiao2019disentangled}
Xiao, Y. \& Wang, W.~Y. (2019).
\newblock Disentangled representation learning with {W}asserstein total
  correlation.
\newblock {\em arXiv preprint arXiv:1912.12818}.

\bibitem[Yang et~al., 2020]{yang2020causalvae}
Yang, M., Liu, F., et~al. (2020).
\newblock {CausalVAE}: Disentangled representation learning via neural
  structural causal models.
\newblock {\em arXiv preprint arXiv:2004.08697}.

\bibitem[Zhao et~al., 2019]{zhao2019learning}
Zhao, H., Des~Combes, R.~T., Zhang, K., \& Gordon, G. (2019).
\newblock On learning invariant representations for domain adaptation.
\newblock In {\em International Conference on Machine Learning}  (pp.\
  7523--7532).

\end{thebibliography}
\end{document}